\def\figref#1{figure~\ref{#1}}
\def\Figref#1{Figure~\ref{#1}}
\def\Secref#1{Section~\ref{#1}}
\def\eqref#1{equation~\ref{#1}}
\def\1{\bm{1}}
\def\vs{{\bm{s}}}
\DeclareMathAlphabet{\mathsfit}{\encodingdefault}{\sfdefault}{m}{sl}
\SetMathAlphabet{\mathsfit}{bold}{\encodingdefault}{\sfdefault}{bx}{n}
\def\gL{{\mathcal{L}}}
\def\gT{{\mathcal{T}}}
\newcommand{\E}{\mathbb{E}}
\newcommand{\gradlossktj}{\frac{\partial L_{k+1}(\bW_{N:1}(k, t))}{\partial \bW_j(k, t)}}
\newcommand{\x}{\mathbf{x}}
\newcommand{\eg}{\emph{e.g.},\ }
\newcommand{\ie}{\emph{i.e.},\ }
\newcommand{\states}{\mathcal{S}}
\newcommand{\actions}{\mathcal{A}}
\newcommand{\bu}{\mathbf{u}}
\newcommand{\bv}{\mathbf{v}}
\newcommand{\be}{\mathbf{e}}
\newcommand{\by}{\mathbf{y}}
\newcommand{\bx}{\mathbf{x}}
\newcommand{\bw}{\mathbf{w}}
\newcommand{\bs}{\mathbf{s}}
\newcommand{\ba}{\mathbf{a}}
\newcommand{\bM}{\mathbf{M}}
\newcommand{\bG}{\mathbf{G}}
\newcommand{\bI}{\mathbf{I}}
\newcommand{\bg}{\mathbf{g}}
\newcommand{\bV}{\mathbf{V}}
\newcommand{\bR}{\mathbf{R}}
\newcommand{\bQ}{\mathbf{Q}}
\newcommand{\bA}{\mathbf{A}}
\newcommand{\bN}{\mathbf{N}}
\newcommand{\bS}{\mathbf{S}}
\newcommand{\bW}{\mathbf{W}}
\newcommand{\bU}{\mathbf{U}}
\newcommand{\bO}{\mathbf{O}}
\newcommand{\srank}{\text{srank}}
\newcommand{\rank}{\text{rank}}
\newcommand{\deepnet}{\bW_N(k, t) \bW_\phi(k, t)}
\newcommand{\stateactioni}{[\bs_i; \ba_i]}
\def\vs{\emph{vs}.\ }
\def\etc{\emph{etc.}\ }
\newtheorem{theorem}{Theorem}[section]
\newtheorem{assumption}{Assumption}[section]
\newtheorem{proposition}{Proposition}[section]
\newtheorem{definition}{Definition}
\newtheorem{corollary}{Corollary}[theorem]
\newtheorem{lemma}{Lemma}[theorem]
\title{Implicit Under-Parameterization Inhibits \\ Data-Efficient Deep Reinforcement Learning}
\author{Aviral Kumar\thanks{Equal Contribution. Correspondence to Aviral Kumar $<\texttt{aviralk@berkeley.edu}>$ and Rishabh Agarwal $<\texttt{rishabhagarwal@google.com}>$.}$^{\ \ 1, 2}$, Rishabh Agarwal$^{*\ 2, 3}$, Dibya Ghosh$^{1}$, Sergey Levine$^{1, 2}$\\
$^1$UC Berkeley, $^2$Google Research, $^3$MILA, Universit\'e de Montr\'eal  \\
}
\begin{document}

\maketitle

\begin{abstract}
We identify an implicit under-parameterization phenomenon in value-based deep RL methods that use bootstrapping: when value functions, approximated using deep neural networks, are trained with gradient descent using iterated regression onto target values generated by previous instances of the value network, more gradient updates decrease the expressivity of the current value network.
We characterize this loss of expressivity via a drop in the rank of the learned value network features, and show that this typically corresponds to a performance drop. 
We demonstrate this phenomenon on Atari and Gym benchmarks, in both offline and online RL settings.
We formally analyze this phenomenon and show that it results from a pathological interaction between bootstrapping and gradient-based optimization. We further show that mitigating implicit under-parameterization by
controlling rank collapse can improve performance.
\end{abstract}

\section{Introduction}
\vspace{-0.2cm}
\label{sec:intro}
Many commonly used deep reinforcement learning~(RL) algorithms estimate value functions using bootstrapping, which corresponds to sequentially fitting value functions to target value estimates generated from the value function learned in the previous iteration. Despite high-profile achievements~\citep{silver2017mastering}, these algorithms are highly unreliable due to poorly understood optimization issues. Although a number of hypotheses have been proposed to explain these issues~\citep{Achiam2019TowardsCD,bengio2020interference, fu19diagnosing, igl2020impact, martha2018sparse, kumar2020discor}, a complete understanding remains elusive.

We identify an ``implicit under-parameterization'' phenomenon that emerges when value networks are trained using gradient descent combined with bootstrapping.  
This phenomenon manifests as an excessive aliasing of features learned by the value network across states, which is exacerbated with more gradient updates. While the supervised deep learning literature suggests that some feature aliasing is desirable for generalization~\citep[\eg][]{gunasekar2017implicit, arora2019implicit}, implicit under-parameterization results in more pronounced aliasing than in supervised learning. 
This over-aliasing causes an otherwise expressive value network to \emph{implicitly} behave as an \emph{under-parameterized} network, often resulting in poor performance. 

Implicit under-parameterization becomes aggravated when the rate of data re-use is increased, restricting the sample efficiency of deep RL methods. In online RL,
increasing the number of gradient steps in between data collection steps for \textit{data-efficient} RL~\citep{fu19diagnosing, fedus2020revisiting} causes the problem to emerge more frequently. In the extreme case when no additional data is collected, referred to as \textit{offline} RL~\citep{lange2012batch, agarwal2020optimistic, levine2020offline}, implicit under-parameterization manifests consistently, limiting the viability of offline methods. 

We demonstrate the existence of implicit under-parameterization in common value-based deep RL methods, including Q-learning~\citep{Mnih2015, hessel2018rainbow} and actor-critic~\citep{Haarnoja18}, as well as neural fitted-Q iteration~\citep{Riedmiller2005, Ernst05}. To isolate the issue, we study 
the effective rank of the features in the penultimate layer of the value network~(\Secref{sec:problem}).
We observe that after an initial learning period, the rank of the learned features drops steeply.
As the rank decreases, the ability of the features to fit subsequent target values and the optimal value function generally deteriorates and results in a sharp decrease in performance~(\Secref{sec:understanding_iup}).

To better understand the emergence of implicit under-parameterization, we formally study the dynamics of Q-learning under two distinct models of neural net behavior~(\Secref{sec:theory}): kernel regression~\citep{ntk,mobahi2020self} and deep linear networks~\citep{arora2018optimization}. We corroborate the existence of this phenomenon in both models, and show that implicit under-parameterization stems from a pathological interaction between bootstrapping and the implicit regularization of gradient descent. Since value networks are trained to regress towards targets generated by a previous version of the same model, this leads to a sequence of value networks of potentially decreasing expressivity, which can result in degenerate behavior and a drop in performance.

\begin{figure}
    \centering
    \vspace{-5pt}
    \includegraphics[width=0.93\textwidth]{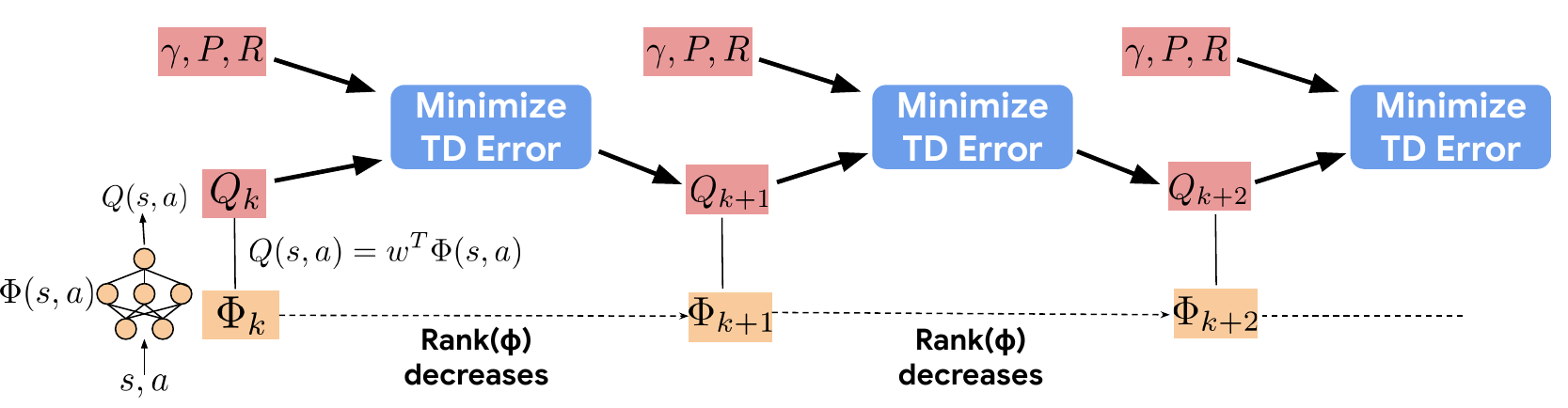}
    \vspace{-5pt}
    \caption{\textbf{Implicit under-parameterization.} Schematic diagram depicting the emergence of an \textit{effective rank} collapse in deep Q-learning. Minimizing TD errors using gradient descent with deep neural network Q-function leads to a collapse in the effective rank of the learned features $\Phi$, which is exacerbated with further training.}
    \label{fig:teaser}
    \vspace{-15pt}
\end{figure}

The main contribution of this work is the identification of implicit under-parameterization in deep RL methods that use bootstrapping. Empirically, we demonstrate a collapse in the rank of the learned features during training, and show it typically corresponds to a drop in performance in the Atari~\citep{bellemare2013ale} and continuous control Gym~\citep{gym} benchmarks in both the offline and data-efficient online RL settings. We verify the emergence of this phenomenon theoretically and characterize settings where implicit under-parameterization can emerge. We then show that mitigating this phenomenon via a simple penalty on the singular values of the learned features improves performance of value-based RL methods in the offline setting on Atari.

\vspace{-8pt}
\section{Preliminaries}
\label{sec:background}
\vspace{-8pt}
The goal in RL is to maximize long-term discounted reward in a Markov decision process (MDP), defined as a tuple $(\states, \actions, R, P, \gamma)$ \citep{puterman1994markov}, with state space $\states$, action space $\actions$, a reward function $R(\bs, \ba)$, transition dynamics $P(\bs' | \bs, \ba)$ and a discount factor $\gamma \in [0, 1)$. 
The Q-function $Q^\pi(\bs, \ba)$ for a policy $\pi(\ba|\bs)$, is the expected long-term discounted reward obtained by executing action $\ba$ at state $\bs$ and following $\pi(\ba|\bs)$ thereafter, 
$Q^\pi(\bs, \ba) := \E\left[ \sum\nolimits_{t=0}^\infty \gamma^t R(\bs_t, \ba_t) \right]$.
$Q^\pi(\bs, \ba)$  is the fixed point of the Bellman operator $\gT^{\pi}$, \mbox{$\forall \bs, \ba$: $\gT^{\pi} Q(\bs, \ba) := 
R(\bs, \ba) + \gamma \E_{\bs' \sim P(\cdot|\bs, \ba), \ba' \sim \pi(\cdot|\bs')} \left[ Q(\bs', \ba')\right]$}, which can be written in vector form as: $\bQ^\pi = \bR + \gamma P^\pi \bQ^\pi$.
The optimal Q-function, $Q^*(\bs, \ba)$,  is the fixed point of the Bellman optimality operator $\gT$ : $\gT Q(\bs, \ba) := R(\bs, \ba) + \gamma \E_{\bs' \sim P(\cdot|\bs, \ba)} \left[\max_{\ba'} Q(\bs', \ba')\right]$. 

Practical Q-learning methods~\citep[\eg][]{Mnih2015,hessel2018rainbow,Haarnoja18} convert the Bellman equation into a bootstrapping-based objective for training a Q-network, $Q_\theta$, via gradient descent. This objective, known as mean-squared temporal difference~(TD) error, is given by: $L(\theta) = \sum_{\bs, \ba} \left(R(\bs, \ba) + \gamma \bar{Q}_\theta(\bs', \ba') - Q(\bs, \ba) \right)^2$, where $\bar{Q}_\theta$ is a delayed copy of the Q-function, typically referred to as the \emph{target network}. These methods train Q-networks via gradient descent and slowly update the target network via Polyak averaging on its parameters. We refer the output of the penultimate layer of the deep Q-network as the learned \emph{feature matrix} $\Phi$, such that $Q(\bs, \ba) = \bw^T \Phi(\bs, \ba)$, 
where $\bw \in \mathbb{R}^{d}$ and $\Phi \in \mathbb{R}^{|\states| |\actions| \times d}$. 

\begin{wrapfigure}{r}{0.4\textwidth}
\begin{small}
\vspace{-38pt}
\begin{minipage}[t]{0.99\linewidth}
\begin{algorithm}[H]
\small
\caption{\textbf{Fitted Q-Iteration (FQI)}}
\label{alg:fqi}
\begin{algorithmic}[1]
\small{
    \STATE Initialize Q-network $\bQ_\theta$, buffer $\mu$.
    \FOR{fitting iteration $k$ in \{1, \dots, N\}}
        \STATE Compute $\bQ_\theta(\bs,\ba)$ and target values
        \mbox{${y}_k(\bs, \ba) = r + \gamma \max_{\ba'} \bQ_{k-1}(\bs', \ba')$}\\ 
        on $\{(\bs, \ba)\} \sim \mu$ for training
        \STATE Minimize TD error for $\bQ_\theta$ via $t = 1, \cdots,$ T gradient descent updates,\\
        \mbox{$\min_{\theta}~(Q_\theta(\bs,\ba) - \by_k)^2$}
    \ENDFOR}
\end{algorithmic}
\end{algorithm}
\end{minipage}
\vspace{-15pt}
\end{small}
\end{wrapfigure}
For simplicity of analysis, we abstract  deep Q-learning methods into a generic fitted Q-iteration (\textbf{FQI}) framework~\citep{Ernst05}. We refer to FQI with neural nets as neural FQI~\citep{Riedmiller2005}. In the $k$-th \emph{fitting} iteration, FQI trains the Q-function, $\bQ_k$, to match the target values, $\by_k = \bR + \gamma P^\pi \bQ_{k-1}$ generated using previous Q-function, $\bQ_{k-1}$ (Algorithm~\ref{alg:fqi}).  Practical methods can be instantiated as variants of FQI, with different target update styles, different optimizers, etc.

\vspace{-5pt}
\section{Implicit Under-Parameterization in Deep Q-Learning}
\label{sec:problem}
\vspace{-7pt}

In this section, we empirically demonstrate the existence of implicit under-parameterization in deep RL methods that use bootstrapping. We characterize implicit under-parameterization in terms of the \emph{effective rank}~\citep{yang2019harnessing} of the features learned by a Q-network.
The effective rank of the feature matrix $\Phi$, for a threshold $\delta$ (we choose $\delta = 0.01$), denoted as $\srank_\delta(\Phi)$, is given by \mbox{$\srank_{\delta}(\Phi) = \min \left\{ k: \frac{\sum_{i=1}^k \sigma_{i} (\Phi)}{\sum_{i=1}^{d} \sigma_i(\Phi)} \geq 1 - \delta \right\}$},
where $\{\sigma_i(\Phi)\}$ are the singular values of $\Phi$ in decreasing order, i.e., $\sigma_1 \geq \cdots \geq \sigma_{d} \geq 0$. Intuitively, $\srank_{\delta}(\Phi)$ represents the number of ``effective'' unique components of the feature matrix $\Phi$ that form the basis for linearly approximating the Q-values. 
When the network maps different states to orthogonal feature vectors, then $\srank_{\delta}(\Phi)$ has high values close to $d$. When the network ``aliases'' state-action pairs by mapping them to a smaller subspace,
$\Phi$ has only a few active singular directions, and $\srank_{\delta}(\Phi)$ takes on a small value. 

\vspace{-1pt}
\begin{definition}[]
\textnormal{Implicit under-parameterization} refers to a reduction in the effective rank of the features, $\srank_\delta(\Phi)$, that occurs implicitly as a by-product of learning deep neural network Q-functions.  
\end{definition}
\vspace{-6pt}

While rank decrease also occurs in supervised learning, it is usually beneficial for obtaining generalizable solutions~\citep{gunasekar2017implicit,arora2019implicit}. However, we will show that in deep Q-learning, an interaction between bootstrapping and gradient descent can lead to more aggressive rank reduction~(or rank collapse), which can hurt performance.

\begin{figure}[t]
\centering
\vspace{-10pt}
    \includegraphics[width=0.95\linewidth]{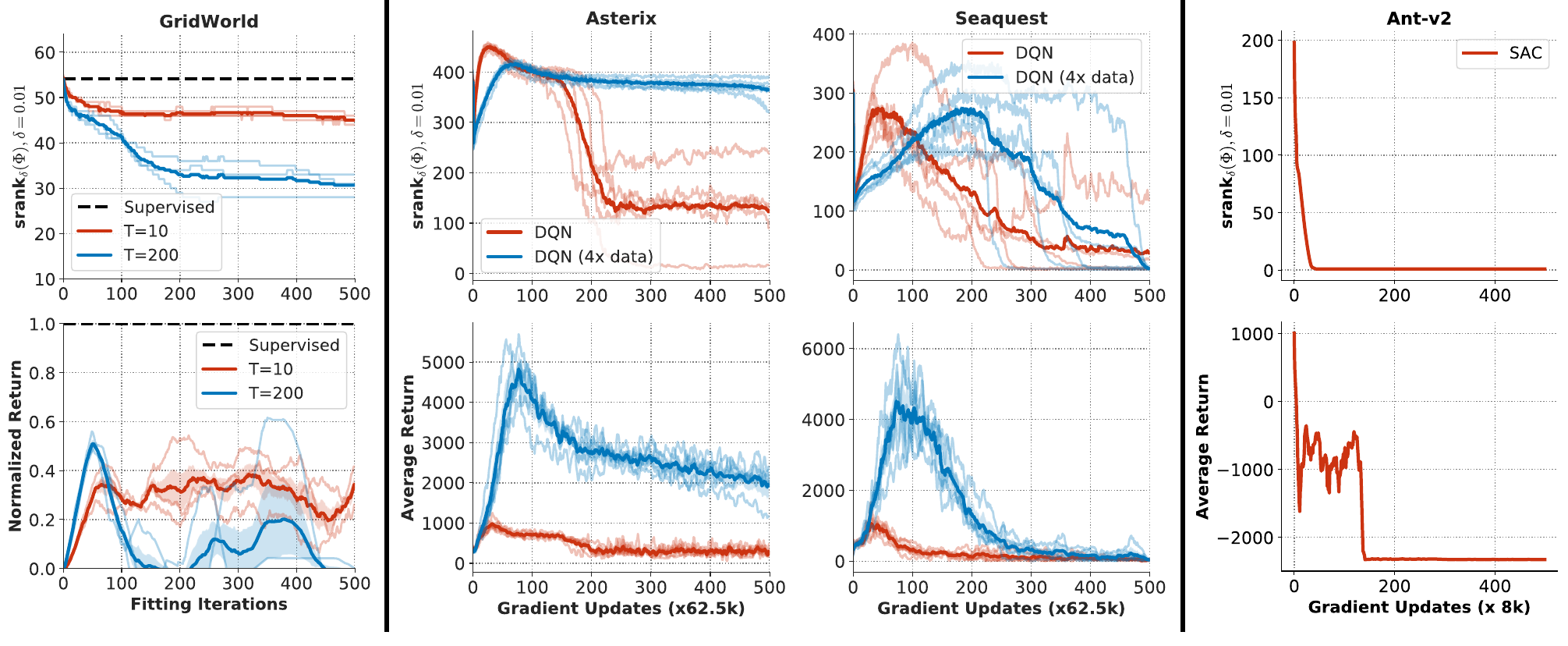}
    \vspace{-6pt}
    \caption{\textbf{Offline RL}. {$\srank_\delta(\Phi)$ and performance of neural FQI on gridworld, DQN on Atari and SAC on Gym environments in the offline RL setting. Note that low rank (top row) generally corresponds to worse policy performance (bottom row). Rank collapse is worse with more gradient steps per fitting iteration (T$=10$ \vs $200$ on gridworld). Even when a larger, high coverage dataset is used, marked as DQN (4x data), rank collapse occurs (for Asterix also see Figure~\ref{fig:offline_problem_app_20k} for a complete figure with a larger number of gradient updates).}}
    \label{fig:offline_problem}
    \vspace{-15pt}
\end{figure}

\vspace{-1pt}
\textbf{Experimental setup.} To study implicit under-parameterization empirically, we compute $\srank_\delta(\Phi)$ on a minibatch of state-action pairs sampled i.i.d. from the training data (\ie the dataset in the offline setting, and the replay buffer in the online setting). We investigate offline and online RL settings on benchmarks including Atari games~\citep{bellemare2013ale} and Gym environments~\citep{gym}. We also utilize gridworlds described by \citet{fu19diagnosing} to compare the learned Q-function against the oracle solution computed using tabular value iteration. We evaluate DQN~\citep{Mnih2015} on gridworld and Atari and SAC~\citep{Haarnoja18} on Gym domains.


\vspace{-1pt}
\textbf{Data-efficient offline RL.}~ In offline RL, our goal is to learn effective policies by performing Q-learning on a fixed dataset of transitions. 
We investigate the presence of rank collapse when deep Q-learning is used with broad state coverage offline datasets from \citet{agarwal2020optimistic}.
In the top row of Figure~\ref{fig:offline_problem}, we show that after an initial learning period, $\srank_\delta(\Phi)$ decreases in all domains~(Atari, Gym and the gridworld). The final value of $\srank_\delta(\Phi)$ is often quite small -- \eg in Atari, only 20-100 singular components are active for $512$-dimensional features, implying significant underutilization of network capacity. 
Since under-parameterization is \emph{implicitly} induced by the learning process, even high-capacity value networks behave as low-capacity networks as more training is performed with a bootstrapped objective~(\eg mean squared TD error).

On the gridworld environment, regressing to $Q^*$ using supervised regression results in a much higher $\srank_\delta(\Phi)$~(black dashed line in Figure~\ref{fig:offline_problem}(left)) than when using neural FQI. On Atari, even when a {4x} larger offline dataset with much broader coverage is used (blue line in Figure~\ref{fig:offline_problem}), rank collapse still persists, indicating that implicit under-parameterization is not due to limited offline dataset size. Figure~\ref{fig:offline_problem}~($2^\mathrm{nd}$ row) illustrates that policy performance generally deteriorates as $\srank(\Phi)$ drops, and eventually collapses simultaneously with the rank collapse. {While we do not claim that implicit under-parameterization is the only issue in deep Q-learning, the results in Figure~\ref{fig:offline_problem} show that the emergence of this under-parameterization is \textit{strongly} associated with poor performance.}  

To prevent confounding effects from the distribution mismatch between the learned policy and the offline dataset, which often affects the performance of Q-learning methods, we also study CQL~\citep{kumar2020conservative}, an offline RL algorithm designed to handle distribution mismatch. 
We find a similar degradation in effective rank and performance for CQL~(\Figref{fig:offline_problem_cql_app}), implying that under-parameterization does not stem from distribution mismatch and arises even when the resulting policy is within the behavior distribution (though the policy may not be exactly pick actions observed in the dataset). We provide more evidence in  Atari and Gym domains in Appendix~\ref{more_evidence_offline}.


\begin{figure*}[h]
\centering
\vspace{-9pt}
    \includegraphics[width=0.93\linewidth]{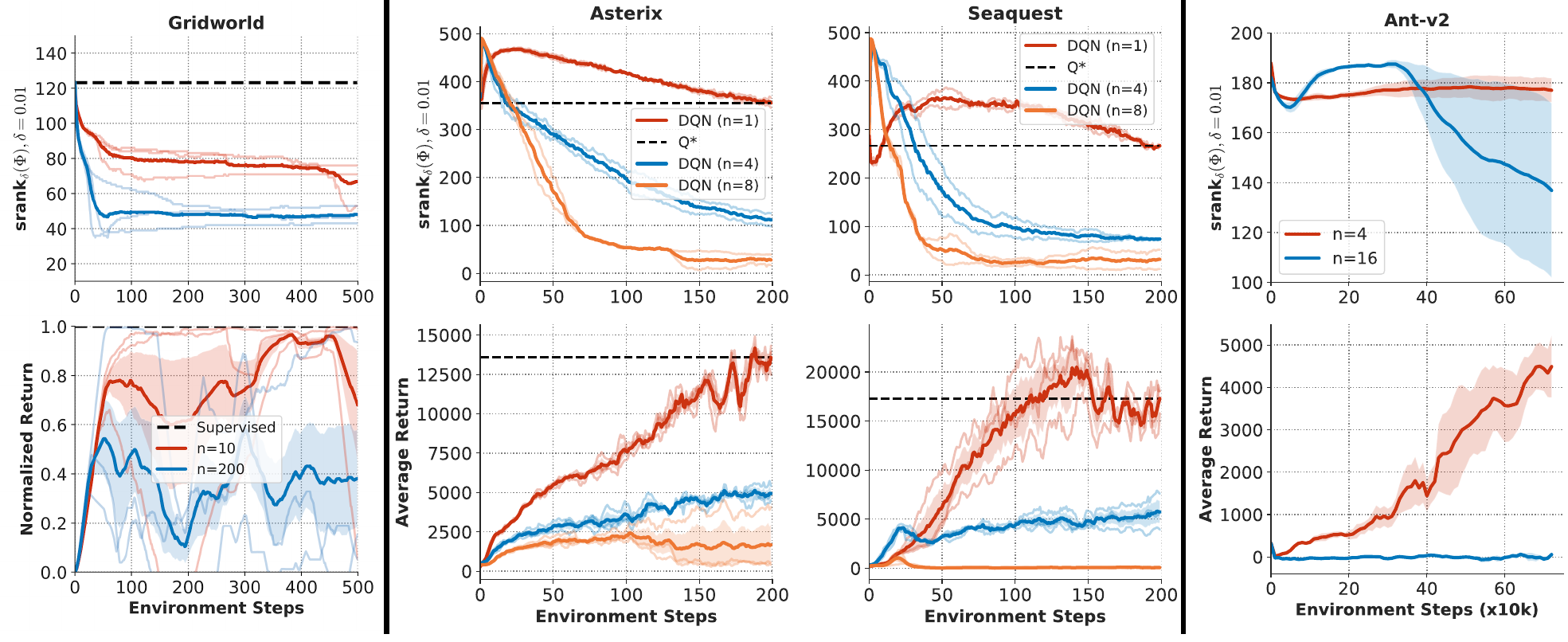}
    \vspace{-7pt}
    \caption{\textbf{Data Efficient Online RL}.
    $\srank_\delta(\Phi)$ and performance of neural FQI on gridworld, DQN on Atari and SAC on Gym domains in the online RL setting, with varying numbers of gradient steps per environment step ($n$). Rank collapse happens earlier with more gradient steps, and the corresponding performance is poor.}
    \label{fig:online_problem}
    \vspace{-6pt}
\end{figure*}

\vspace{-1pt}
\textbf{Data-efficient online RL.}\label{sec:data_efficient_rl}
Deep Q-learning methods typically use very few gradient updates~($n$) per environment step~(\eg DQN takes 1 update every 4 steps on Atari, $n=0.25$). Improving the sample efficiency of these methods requires increasing $n$ to utilize the replay data more effectively. However, we find that using larger values of $n$ results in higher levels of rank collapse as well as performance degradation.
In the top row of Figure~\ref{fig:online_problem}, we show that larger values of $n$ lead to a more aggressive drop in $\srank_\delta(\Phi)$ (red \vs blue/orange lines), and that rank continues to decrease with more training. Furthermore, the bottom row 
illustrates that larger values of $n$ result in worse performance, corroborating \citet{fu19diagnosing,fedus2020revisiting}. We find similar results with the Rainbow algorithm~\citep{hessel2018rainbow}~(Appendix~\ref{app:data_efficient_online_rl}).
As in the offline setting, directly regressing to $Q^*$ via supervised learning does not cause rank collapse (black line in \Figref{fig:online_problem}).

\vspace{-5pt}
\subsection{Understanding Implicit Under-parameterization and its Implications}\label{sec:understanding_iup}
\vspace{-5pt}


\begin{figure}[t]
\centering
    \begin{subfigure}{0.235\textwidth}
        \centering
        \vspace{-5pt}
        \includegraphics[width=\textwidth]{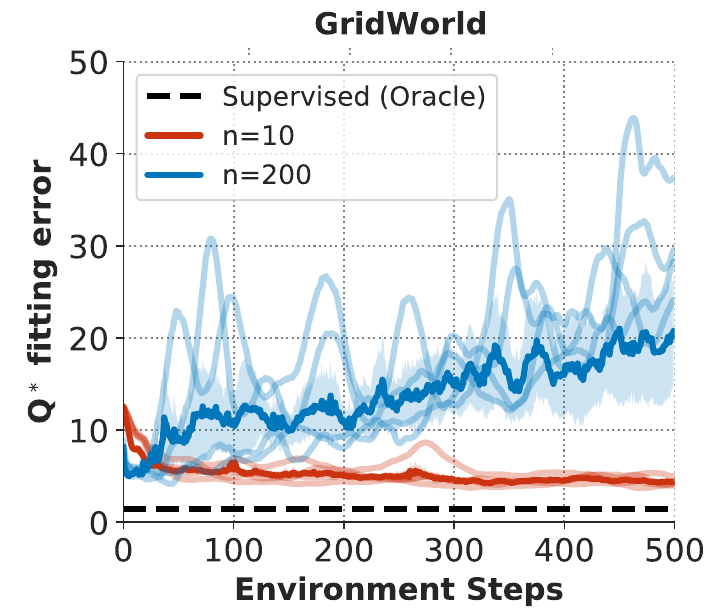}
        \vspace{-11pt}
        \caption{\textsc{Q$^*$ Fitting Error}}\label{fig:expressivity}
    \end{subfigure}
    \begin{subfigure}{0.24\textwidth}
        \centering
        \includegraphics[width=\textwidth]{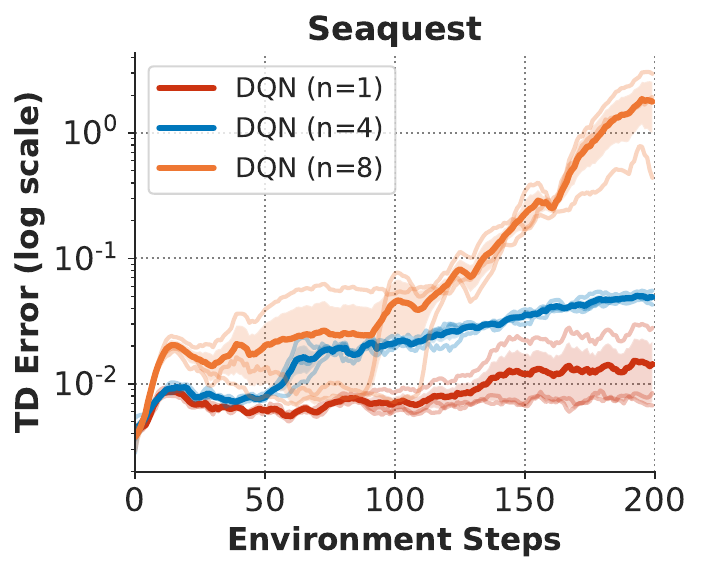}
        \vspace{-11pt}
        \caption{ \textsc{TD Error}}\label{fig:expressivity_Td}
    \end{subfigure}
    \begin{subfigure}{0.24\textwidth}
        \centering
        \includegraphics[width=\textwidth]{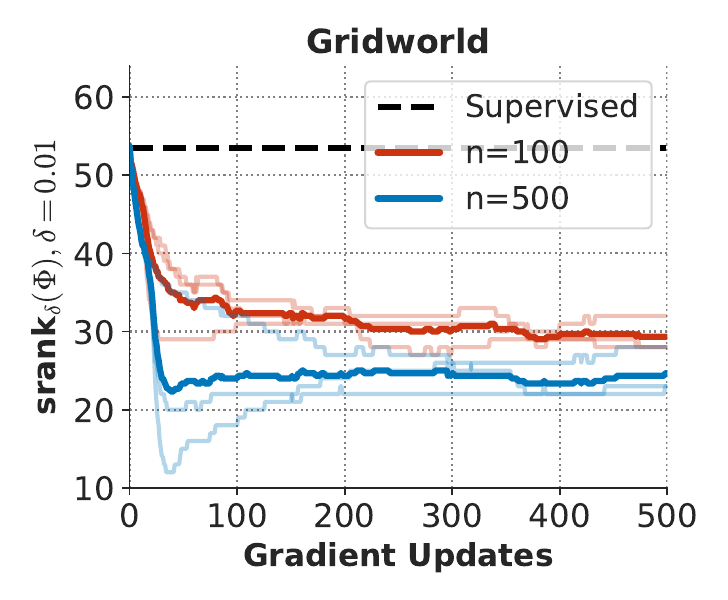}
        \caption{\textsc{Q-Reinitialization}}\label{fig:reinit}
    \end{subfigure}
    \begin{subfigure}{0.24\textwidth}
        \centering
        \includegraphics[width=\textwidth]{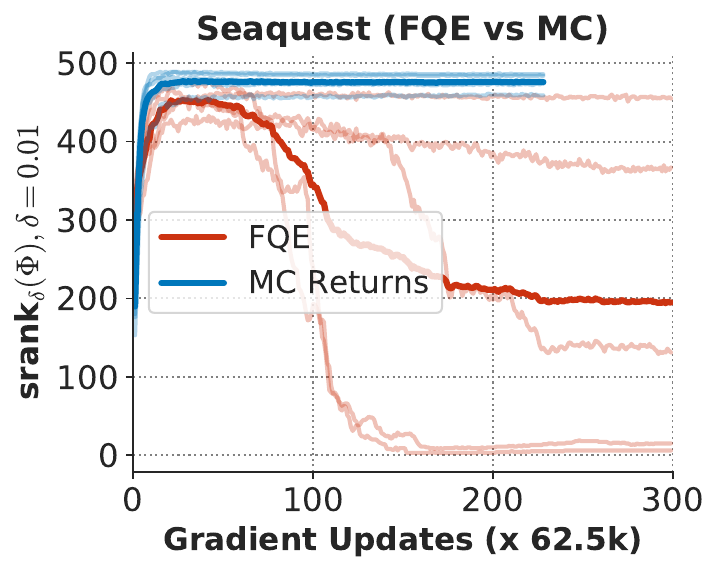}
        \caption{\textsc{FQE} \emph{vs}. \textsc{MC}}\label{fig:ablations}
    \end{subfigure}
    \vspace{-5pt}
    \caption{\textbf{(a)} Fitting error for $Q^*$ prediction for $n\!=\!10$ vs $n\!=\!200$ 
    steps in \Figref{fig:online_problem} (left). Observe that rank collapse inhibits fitting $Q^*$ as the fitting error rises over training while rank collapses. \textbf{(b)} TD error for varying values of $n$ for \textsc{Seaquest} in \Figref{fig:online_problem} (middle). TD error increases with rank degradation. \textbf{(c)} Q-network re-initialization in each fitting iteration on gridworld. \textbf{(d)} Trend of $\srank_\delta(\Phi)$ for policy evaluation based on bootstrapped updates~(FQE) vs Monte-Carlo returns (no bootstrapping). Note that rank-collapse still persists with reinitialization and FQE, but goes away in the absence of bootstrapping.}
    \vspace{-14pt}
    \label{fig:expressivity_all}
\end{figure}

\textbf{How does implicit under-parameterization degrade performance?} 
Having established the presence of rank collapse in data-efficient RL, we now discuss how it can adversely affect performance. As the effective rank of the network features $\Phi$ decreases, so does the network's ability to fit the subsequent target values, and eventually results in inability to fit $Q^*$. 
In the gridworld domain, we measure this loss of expressivity by measuring the error in fitting oracle-computed $Q^*$ values via a linear transformation of $\Phi$. When rank collapse occurs, the error in fitting $Q^*$ steadily increases during training, and the consequent network is not able to predict $Q^*$ at all by the end of training~(\Figref{fig:expressivity}) -- this entails a drop in performance. In Atari domains, we do not have access to $Q^*$, and so we instead measure TD error, that is, the error in fitting the target value estimates, $\bR + \gamma P^\pi \bQ_{k}$. In \textsc{Seaquest}, as rank decreases, the TD error increases~(\Figref{fig:expressivity_Td}) and the value function is unable to fit the target values, culminating in a performance plateau~(\Figref{fig:online_problem}). This observation is consistent across other environments; we present further supporting evidence in Appendix~\ref{app:regression}.


\textbf{Does bootstrapping cause implicit under-parameterization?} 
We perform a number of controlled experiments in the gridworld and Atari environments to isolate the connection between rank collapse and bootstrapping. We first remove confounding issues of poor network initialization \citep{fedus2020catastrophic} and non-stationarity \citep{igl2020impact} by showing that rank collapse occurs even when the Q-network is re-initialized from scratch at the start of each fitting iteration (Figure \ref{fig:reinit}). To show that the problem is not isolated to the control setting, we show evidence of rank collapse in the policy evaluation setting as well. We trained a value network using fitted Q-evaluation for a fixed policy $\pi$ (\ie using the Bellman operator $\gT^{\pi}$ instead of $\gT$), and found that rank drop still occurs~(FQE in \Figref{fig:ablations}). Finally, we show that by removing bootstrapped updates and instead regressing directly to Monte-Carlo (MC) estimates of the value, the effective rank \textit{does not} collapse (MC Returns in \Figref{fig:ablations}). These results, along with similar findings on other Atari environments~(Appendix~\ref{app:evidence_bootstrap}), our analysis indicates that bootstrapping is at the core of implicit under-parameterization.

\vspace{-7pt}
\section{Theoretical Analysis of Implicit Under-Parameterization}
\vspace{-9pt}
\label{sec:theory}
In this section, we formally analyze implicit under-parameterization and prove that training neural networks with bootstrapping reduces the effective rank of the Q-network, corroborating the empirical observations in the previous section. We focus on policy evaluation (Figure~\ref{fig:ablations} and Figure~\ref{fig:offline_policy_eval_5_games}), where we aim to learn a Q-function that satisfies $\bQ = \bR + \gamma P^\pi \bQ$ for a fixed $\pi$, for ease of analysis.
We also presume a fixed dataset of transitions, $\mathcal{D}$, to learn the Q-function.

\vspace{-7pt}
\subsection{Analysis via Kernel Regression}
\label{sec:self_distill}
\vspace{-6pt}
We first study bootstrapping with neural networks through a mathematical abstraction that treats the Q-network as a kernel machine, following the neural tangent kernel (NTK) formalism~\citep{ntk}. Building on prior analysis of self-distillation~\citep{mobahi2020self}, we assume that each iteration of bootstrapping, the Q-function optimizes the squared TD error to target labels $\by_k$ with a kernel regularizer. This regularizer captures the inductive bias from gradient-based optimization of TD error and resembles the regularization imposed by gradient descent under NTK~\citep{mobahi2020self}. The error is computed on $(\bs_i, \ba_i) \in \mathcal{D}$ whereas the regularization imposed by a universal kernel $u$ with a coefficient of $c \geq 0$ is applied to the Q-values at \emph{all} state-action pairs as shown in Equation~\ref{eqn:self_dist_problem}. We consider a setting $c > 0$ for all rounds of bootstrapping, which corresponds to the solution obtained by performing gradient descent on TD error for a small number of iterations with early stopping in each round~\citep{suggala2018connecting} and thus, resembles how the updates in Algorithm~\ref{alg:fqi} are typically implemented in practice.
\begin{equation}
\label{eqn:self_dist_problem}
    \small{\bQ_{k+1} \leftarrow \arg \min_{\bQ \in \mathcal{Q}} \sum_{\bs_i, \ba_i \in \mathcal{D}} \left(Q(\bs_i, \ba_i) - y_{k}(\bs_i, \ba_i) \right)^2 + c \sum_{(\bs, \ba)} \sum_{(\bs', \ba')} u\mathbf{(}(\bs, \ba), (\bs', \ba')\mathbf{)} Q(\bs, \ba) Q(\bs', \ba').}
\end{equation}
The solution to Equation~\ref{eqn:self_dist_problem} can be expressed as $Q_{k+1}(\bs, \ba) = \bg^{T}_{(\bs, \ba)} (c \bI + \bG)^{-1} \by_k$,
where $\bG$ is the Gram matrix for a special positive-definite kernel~\citep{duffy2015green} 
and $\bg_{(\bs, \ba)}$ denotes the row of $\bG$ corresponding to the input $(\bs, \ba)$~\citep[Proposition~1]{mobahi2020self}. A detailed proof is in Appendix~\ref{app:self_distill_proofs}. When combined with the fitted Q-iteration recursion, setting labels $\by_k = \bR + \gamma P^\pi \bQ_{k-1}$, we recover a recurrence that relates subsequent value function iterates
\vspace{-0.21in}

\begin{align}
\resizebox{.945\textwidth}{!}{$
    \bQ_{k+1}  = \bG(c \bI + \bG)^{-1} \by_k = \underbrace{ \bG(c \bI + \bG)^{-1}}_{\bA} \left[\bR + \gamma P^\pi \bQ_k\right] = \bA \left({\sum_{i=1}^{k} \gamma^{k-i} \left(P^\pi \bA \right)^{k-i}} \right) \bR := \bA \bM_k \bR.  
    \label{eqn:unfolding}
$}
\end{align}
\vspace{-0.2in}

Equation~\ref{eqn:unfolding} follows from unrolling the recurrence and setting the algorithm-agnostic initial Q-value, $\bQ_0$, to be $\mathbf{0}$. 
We now show that the sparsity of singular values of the matrix $\bM_k$ generally increases over fitting iterations, implying that the effective rank of $\bM_k$ diminishes with more iterations. 
{For this result, we assume that the matrix $\bS$ is normal, \ie the norm of the (complex) eigenvalues of $\bS$ is equal to its singular values. We will discuss how this assumption can be relaxed in Appendix~\ref{app:eigval_srank}.}   
\begin{tcolorbox}[colback=blue!6!white,colframe=black,boxsep=0pt,top=3pt,bottom=5pt]
\begin{theorem}
\label{thm:self_distillation}
Let $\bS$ be a shorthand for $\bS = \gamma P^\pi \bA$ and assume $\bS$ is a normal matrix. Then there exists an infinite, strictly increasing sequence of fitting iterations, $(k_l)_{l=1}^\infty$ starting from $k_1 = 0$,
such that, for any two singular-values $\sigma_i(\bS)$ and $\sigma_j(\bS)$ of $\bS$ with $\sigma_i(\bS) < \sigma_j(\bS)$, 
\vspace{-0.03in}
\begin{equation}
    {\forall~ l \in \mathbb{N}~~\text{and}~~l' \geq l,~~ \frac{\sigma_i(\bM_{k_{l^\prime}})}{\sigma_j(\bM_{k_{l^\prime}})} < \frac{\sigma_i(\bM_{k_l})}{\sigma_j(\bM_{k_l})} \leq \frac{\sigma_i(\bS)}{\sigma_j(\bS)}}.
\vspace{-0.03in}
\end{equation}
Hence, $\srank_{\delta}(\bM_{k_{l^\prime}}) \leq \srank_\delta(\bM_{k_l})$.
Moreover, if $\bS$ is positive semi-definite, then $(k_l)_{l=1}^\infty = \mathbb{N}$, \emph{\ie} $\srank$ continuously decreases in each fitting iteration.
\end{theorem}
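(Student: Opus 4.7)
The plan is to exploit the fact that, by the unrolled recurrence in \Eqref{eqn:unfolding}, $\bM_k$ is a polynomial in $\bS$, so normality of $\bS$ lets me simultaneously diagonalize $\bS$ and $\bM_k$ in a single unitary basis and reduce the spectral analysis of $\bM_k$ to a scalar problem on the eigenvalues of $\bS$. Concretely, write $\bS = U\Lambda U^*$ with $\Lambda = \diag(\lambda_1,\ldots,\lambda_n)$ and $U$ unitary; then $\bM_k = U f_k(\Lambda) U^*$ is also normal, for an explicit polynomial $f_k$ arising from the geometric-series expansion. Normality gives $\sigma_i(\bM_k) = |f_k(\lambda_{\pi_k(i)})|$ for the ordering $\pi_k$ that sorts $|f_k(\lambda)|$ in decreasing magnitude, so the entire theorem reduces to tracking scalar ratios $|f_k(\lambda_i)|/|f_k(\lambda_j)|$ for eigenvalue pairs with $|\lambda_i|<|\lambda_j|$. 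Throughout I assume the spectral radius of $\bS$ is below one, which is the regime in which the fitted-Q recursion is contractive and is the natural setting of the theorem.

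For the PSD refinement, all $\lambda_i$ are non-negative reals, and $f_k(\lambda)=\lambda(1-\lambda^k)/(1-\lambda)$ takes positive values, so the ratio becomes
\[
\frac{\sigma_i(\bM_k)}{\sigma_j(\bM_k)} \;=\; \frac{\lambda_i}{\lambda_j}\cdot\frac{(1-\lambda_i^{k})(1-\lambda_j)}{(1-\lambda_j^{k})(1-\lambda_i)}.
\]
Rewriting the $k$-dependent factor as the quotient $\bigl(\sum_{s=1}^{k}\lambda_j^{-s}\bigr)/\bigl(\sum_{s=1}^{k}\lambda_i^{-s}\bigr)$ makes monotonic decrease in $k$ immediate from $\lambda_i^{-1}>\lambda_j^{-1}$. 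The initial value at the natural base case equals $\lambda_i/\lambda_j = \sigma_i(\bS)/\sigma_j(\bS)$, matching the claimed upper bound, and strict decrease occurs thereafter, giving the stronger PSD conclusion $(k_l)_{l=1}^\infty=\mathbb{N}$.

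For the general normal case, eigenvalues may be complex, so $|f_k(\lambda)|$ can oscillate and term-by-term monotonicity fails. Here I would build $(k_l)$ by extraction: the finitely many scalar sequences $\{|f_k(\lambda_i)|/|f_k(\lambda_j)|\}_{k\in\mathbb{N}}$ are bounded, so Bolzano--Weierstrass supplies a common convergent subsequence, and a further diagonal thinning across the $O(n^2)$ eigenvalue pairs yields an infinite sub-subsequence along which each ratio is strictly monotone decreasing. The upper bound $\leq \sigma_i(\bS)/\sigma_j(\bS)$ at the base index $k_1=0$ comes from the convention under which $\bM_0$ reproduces the spectrum of $\bS$ (up to the permutation inherent in matching singular values to eigenvalues of a normal matrix). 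The effective-rank conclusion $\srank_\delta(\bM_{k_{l^\prime}})\leq \srank_\delta(\bM_{k_l})$ then follows mechanically: shrinking every small-to-large singular-value ratio concentrates more total mass in the top directions, so fewer singular values are needed to exceed the $(1-\delta)$ threshold.

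The main obstacle I expect is precisely the extraction in the general normal case: constructing one universal sequence $(k_l)$ that simultaneously makes \emph{every} pair of singular-value ratios strictly decreasing while also respecting the global upper bound $\sigma_i(\bS)/\sigma_j(\bS)$. Complex oscillation of $|f_k(\lambda)|$ rules out a purely monotone argument, and the diagonal thinning must be done with care so that pruning the sequence for one pair does not destroy monotonicity for another. By contrast, the PSD refinement and the translation from decreasing ratios to decreasing $\srank_\delta$ should be essentially immediate once the scalar picture is in place.
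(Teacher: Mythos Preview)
Your reduction via normality to scalar ratios $|f_k(\lambda_i)|/|f_k(\lambda_j)|$ and the PSD argument are essentially the paper's approach (the paper shows $\sigma\mapsto(1-\sigma^{t'})/(1-\sigma^t)$ is increasing for $t'>t$; your geometric-sum rewriting is an equivalent route to the same monotonicity). The passage from decreasing ratios to decreasing $\srank_\delta$ is also handled the same way.

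The genuine gap is your treatment of the general normal case. Bolzano--Weierstrass (or the monotone subsequence theorem) gives you a monotone subsequence of a bounded real sequence, but it does \emph{not} let you choose the direction: for some eigenvalue pairs the ratio $|f_k(\lambda_i)|/|f_k(\lambda_j)|$ could oscillate in a way that every monotone subsequence you extract is \emph{increasing}, which is the wrong direction for the theorem. Diagonal thinning across pairs preserves monotonicity once you have it, but it cannot manufacture strict decrease if a particular pair only admits increasing monotone subsequences. So compactness alone does not close the argument, and you flagged exactly this as the obstacle without resolving it.

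The paper takes a different, constructive route: it uses Dirichlet's simultaneous rational approximation to pick integers $k_l$ for which every complex phase $e^{i k_l \theta_r}$ of $\lambda_r(\bS)^{k_l}$ lands arbitrarily close to the positive real axis, so that $\bS^{k_l}$ is (to arbitrary precision) positive semi-definite. Concretely, approximate each $\theta_r/(2\pi)$ by $p_r/q_r$ and set $k_l = l\cdot\mathrm{LCM}(q_1,\ldots,q_n)$. Along this single explicit subsequence one is back in the PSD regime for all eigenvalues simultaneously, and your PSD monotonicity argument then applies verbatim to give strict decrease of every ratio and hence of $\srank_\delta$. This number-theoretic construction is what replaces the compactness step you proposed.
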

\end{tcolorbox}
    

We provide a proof of the theorem above as well as present a stronger variant that shows a gradual decrease in the effective rank for fitting iterations outside this infinite sequence in Appendix \ref{app:self_distill_proofs}. 
As $k$ increases along the sequence of iterations given by $k=(k_l)_{l=1}^\infty$, the effective rank of the matrix $\bM_{k}$ drops, leading to low expressivity of this matrix. Since $\bM_{k}$ linearly maps rewards to the Q-function~(Equation~\ref{eqn:unfolding}), drop in expressivity results of $\bM_{k}$ in the inability to model the actual $\bQ^\pi$.

{\textbf{Summary of our analysis.} Our analysis of 
bootstrapping and gradient descent from the view of regularized kernel regression suggests that rank drop happens with more training (i.e., with more rounds of bootstrapping). In contrast to self-distillation~\citep{mobahi2020self}, rank drop may not happen in every iteration (and rank may increase between two consecutive iterations occasionally), 
but $\srank_{\delta}$ exhibits a generally decreasing trend.}

\vspace{-3pt}
\subsection{Analysis with Deep Linear Networks under Gradient Descent}
\label{sec:grad_descent}
\vspace{-6pt}
While Section~\ref{sec:self_distill} demonstrates rank collapse will occur in a kernel-regression model of Q-learning, it does not illustrate \textit{when} the rank collapse occurs. {To better specify a point in training when rank collapse emerges, we present a complementary derivation for the case when the Q-function is represented as a deep linear neural network~\citep{arora2019implicit}, which is a widely-studied setting for analyzing implicit regularization of gradient descent in supervised learning~\citep{gunasekar2017implicit,gunasekar2018implicit,arora2018optimization,arora2019implicit}}. Our analysis will show that rank collapse can emerge as the generated target values begin to approach the previous value estimate, in particular, when in the vicinity of the optimal Q-function.

{\textbf{Proof strategy.} Our proof consists of two steps: \textbf{(1)} We show that the effective rank of the feature matrix decreases within one fitting iteration (for a given target value) due to the low-rank affinity, \textbf{(2)} We show that this effective rank drop is ``compounded'' as we train using a bootstrapped objective. Proposition~\ref{thm:sing_val_evolution} explains \textbf{(1)} and Proposition~\ref{thm:compounding_main_paper}, Theorem~\ref{thm:near_optimality} and Appendix~\ref{app:rank_drop_compounds} discuss \textbf{(2)}.}

\textbf{Additional notation and assumptions.} We represent the Q-function as a deep linear network with at $\geq 3$  layers, such that $Q(\bs, \ba) = \bW_{N} \bW_\phi [\bs; \ba]$, where $N \geq 3$,  $\bW_{N} \in \mathbb{R}^{1 \times d_{N-1}}$ and
$\bW_\phi = \bW_{N-1}\bW_{N-2} \cdots \bW_1$ with \mbox{$\bW_i \in \mathbb{R}^{d_i \times d_{i-1}}$ for $i=1, \dots, N-1$}. 
$\bW_\phi$ maps an input $[\bs;\ba]$ to corresponding penultimate layer' features $\Phi(\bs, \ba)$. Let $\bW_j(k, t)$ denotes the weight matrix $\bW_j$ at the $t$-th step of gradient descent during the $k$-th fitting iteration (Algorithm~\ref{alg:fqi}). We define $\bW_{k, t} = \bW_N(k, t) \bW_\phi(k, t)$ and $L_{N, k+1}(\bW_{k, t})$ as the TD error objective in the $k$-th fitting iteration. We study $\srank_\delta(\bW_\phi(k, t))$ since the rank of features $\Phi = \bW_\phi(k, t) [\mathcal{S}, \mathcal{A}]$ is equal to rank of $\bW_\phi(k, t)$ provided the state-action inputs have high rank.

We assume that the evolution of the weights is governed by a continuous-time differential equation~\citep{arora2018optimization} \emph{within} each fitting iteration $k$. To simplify analysis, we also assume that all \textit{except the last-layer} weights follow a ``balancedness'' property (Equation~\ref{eqn:actual_balanced}), which suggests that the weight matrices in the consecutive layers in the deep linear network share the same singular values (but with different permutations). However, note that we do not assume balancedness for the last layer which trivially leads to rank-1 features, making our analysis strictly more general than conventionally studied deep linear networks. In this model, we can characterize the evolution of the singular values of the feature matrix $\bW_{\phi}(k, t)$, using techniques analogous to \citet{arora2019implicit}:
\begin{proposition}
\label{thm:sing_val_evolution}
 The singular values of the feature matrix $\bW_\phi(k, t)$ evolve according to:
 \begin{equation}
    \label{eqn:sing_val_diff_eqn}
     \dot{\sigma}_{r}(k, t) = -N \cdot \left( \sigma_r^2(k, t) \right)^{1 - \frac{1}{N-1}} ~\cdot~ \left\langle \bW_{N}(k, t)^T \frac{d L_{N, k+1}(\bW_{K, t})}{d \bW}, \bu_r(k, t) \bv_r(k, t)^T \right\rangle,
 \end{equation}
for $r = 1, \cdots, \min_{i=1}^{N-1} d_i$, where $\bu_r(k, t)$ and $\bv_r(k, t)$ denote the left and right singular vectors of the feature matrix, $\bW_\phi(k, t)$, respectively.
\end{proposition}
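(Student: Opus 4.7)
}
The plan is to adapt the end-to-end dynamics argument of \citet{arora2019implicit} for deep matrix factorization to the present setting, where the feature-producing sub-network $\bW_\phi = \bW_{N-1}\cdots \bW_1$ is a product of $N-1$ layers obeying the balancedness assumption, while the final layer $\bW_N$ plays the role of a (time-varying) readout. Within a single fitting iteration $k$ the targets $\by_{k+1}$ are frozen, so $L_{N,k+1}$ is a fixed smooth loss and continuous-time gradient descent gives the layerwise flow $\dot{\bW}_j(k,t) = -\partial L_{N,k+1}/\partial \bW_j$ for $j=1,\dots,N$. By the chain rule, $\partial L/\partial \bW_j = \bW_{j+1:\,N}^T \,G(k,t)\, \bW_{1:j-1}^T$, where $G(k,t) := d L_{N,k+1}(\bW_{k,t})/d\bW$ is the gradient with respect to the full product $\bW_{k,t} = \bW_N \bW_\phi$ and the subscript notation denotes the contiguous factor subproducts.

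First I would verify that the ``balancedness'' invariant $\bW_{j+1}^T(k,t)\bW_{j+1}(k,t) = \bW_j(k,t)\bW_j(k,t)^T$ is preserved along the flow for the indices $j=1,\dots,N-2$; this follows, as in \citet{arora2018optimization}, from differentiating $\bW_{j+1}^T\bW_{j+1} - \bW_j \bW_j^T$ in time, substituting the gradient-flow expressions, and seeing a telescoping cancellation. Given this conservation law for the internal layers, an induction (identical in form to Theorem~1 of \citet{arora2018optimization}) collapses the individual dynamics of $\bW_1,\dots,\bW_{N-1}$ into a single end-to-end equation for $\bW_\phi$: denoting $\widetilde G := \bW_N(k,t)^T G(k,t)$, one obtains
\begin{equation*}
\dot{\bW}_\phi(k,t) \;=\; -\sum_{j=1}^{N-1} \bigl[\bW_\phi(k,t)\bW_\phi(k,t)^T\bigr]^{\frac{j-1}{N-1}} \,\widetilde G\, \bigl[\bW_\phi(k,t)^T \bW_\phi(k,t)\bigr]^{\frac{N-1-j}{N-1}},
\end{equation*}
where the fractional matrix powers are defined via the SVD.

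With the end-to-end equation in hand, I would introduce the (time-dependent) SVD $\bW_\phi(k,t) = \bU(k,t)\Sigma(k,t)\bV(k,t)^T$, differentiate $\bW_\phi$ in time, and use the standard identity $\dot{\sigma}_r = \bu_r^T \dot{\bW}_\phi \bv_r$ obtained by left/right multiplication by singular vectors (exploiting orthogonality of $\dot{\bU}\Sigma\bV^T$ and $\bU\Sigma\dot{\bV}^T$ against $\bu_r\bv_r^T$). Substituting the end-to-end equation and using $\bigl[\bW_\phi\bW_\phi^T\bigr]^{\alpha} \bu_r = \sigma_r^{2\alpha} \bu_r$ collapses each of the $N-1$ summands into the common scalar factor $\sigma_r^{2(j-1)/(N-1)} \cdot \sigma_r^{2(N-1-j)/(N-1)} = \sigma_r^{2(N-2)/(N-1)} = (\sigma_r^2)^{1 - 1/(N-1)}$, independent of $j$. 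Summing over $j$ yields the claimed multiplicative constant and the inner product $\langle \widetilde G,\, \bu_r \bv_r^T\rangle$, which is precisely $\langle \bW_N(k,t)^T \,dL_{N,k+1}(\bW_{k,t})/d\bW,\; \bu_r(k,t)\bv_r(k,t)^T\rangle$.

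The main obstacle I anticipate is bookkeeping around the last, non-balanced layer $\bW_N$: the Arora--Cohen--Hazan argument was originally stated for a fully balanced product, whereas here $\bW_N$ evolves independently and must be absorbed into an effective gradient $\widetilde G = \bW_N^T G$ rather than treated as another balanced factor. The non-trivial check is that at each fixed time $t$ the derivation above treats $\bW_N(k,t)$ as a parameter (not a factor), so no extra $\sigma_r$-power arises from it; this is why the exponent is $1-1/(N-1)$ rather than $1-1/N$, and also why the leading constant matches the number of factors being collapsed. A minor secondary issue is well-definedness of the fractional powers when some $\sigma_r$ vanish, which is handled by continuity and the convention $0^{\alpha}=0$ for $\alpha>0$, as in \citet{arora2019implicit}.
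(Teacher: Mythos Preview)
Your proposal is correct and follows essentially the same route as the paper: first establish that balancedness among the internal layers $\bW_1,\dots,\bW_{N-1}$ is preserved along the gradient flow (the paper's Lemma~D.1), then collapse their individual dynamics into a single end-to-end equation for $\bW_\phi$ with the last layer absorbed into an effective gradient $\widetilde G = \bW_N^T\, dL/d\bW$ (the paper's Lemma~D.2), and finally invoke the singular-value dynamics argument of \citet{arora2019implicit}. Your remark that the non-balanced $\bW_N$ must be treated as a parameter rather than a balanced factor---yielding the exponent $1-1/(N-1)$---is exactly the point the paper singles out as the distinction from the original Arora--Cohen--Hazan setting; note, though, that summing the $N-1$ identical terms gives a leading constant of $N-1$, so the ``$N$'' in the proposition is a typographical slip that your own bookkeeping (``the leading constant matches the number of factors being collapsed'') actually flags.
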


\begin{wrapfigure}{r}{0.25\textwidth}
\centering
\vspace{-5pt}
    \includegraphics[width=0.95\linewidth]{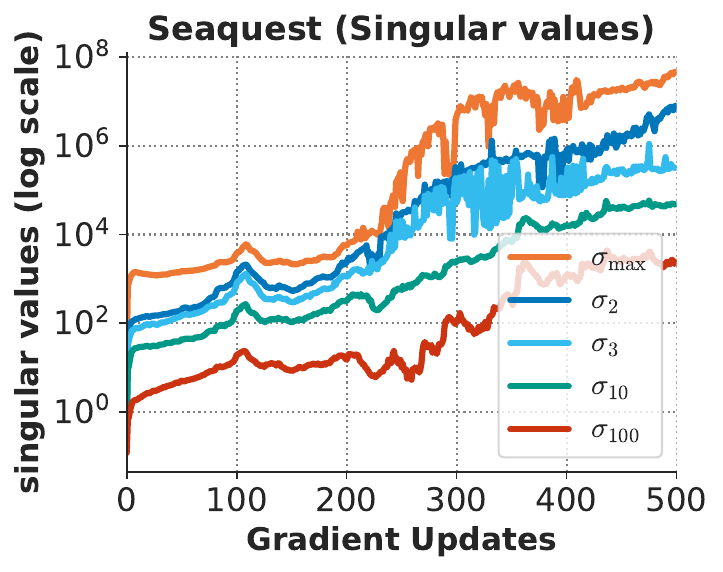}
    \vspace{-10pt}
    \caption*{\small{Evolution of singular values of $W_\phi$ on \textsc{Seaquest}}}
    \vspace{-20pt}
\end{wrapfigure}
Solving the differential equation~(\ref{eqn:sing_val_diff_eqn}) indicates that {larger singular values will evolve at a exponentially faster rate than smaller singular values (as we also formally show in Appendix~\ref{app:rank_decrease_sl}) and the difference in their magnitudes disproportionately increase with increasing $t$}. This behavior also occurs empirically, illustrated in the figure on the right~(also see \Figref{fig:sing_val_all}), where larger singular values are orders of magnitude larger than smaller singular values. Hence, the effective rank, $\srank_\delta(\bW_\phi(k, t))$, will decrease with more gradient steps \emph{within} a fitting iteration $k$. 

\textbf{Abstract optimization problem for the low-rank solution.} Building on Proposition~\ref{thm:sing_val_evolution}, we note that the final solution obtained in a bootstrapping round~(\ie fitting iteration) can be equivalently expressed as the solution that minimizes a weighted sum of the TD error and a data-dependent implicit regularizer $h_{\mathcal{D}}(\bW_\phi, \bW_N)$ that encourages disproportionate singular values of $\bW_\phi$, and hence, a low effective rank of $\bW_\phi$. While the actual form for $h$ is unknown, to facilitate our analysis of bootstrapping, we make a simplification and express this solution as the minimum of Equation~\ref{eqn:closed_form}.
\begin{align}
\label{eqn:closed_form}
\small{\min_{\bW_\phi, \bW_N \in \mathcal{M}} \vert\vert \bW_N \bW_\phi[\bs;\ba] - \by_k(\bs, \ba) \vert\vert^2 + {\lambda_k}  \srank_\delta(\bW_\phi) .}
\end{align}
Note that the entire optimization path
may not correspond to the objective in Equation~\ref{eqn:closed_form}, but the Equation~\ref{eqn:closed_form} represents the final solution of a given fitting iteration. $\mathcal{M}$ denotes the set of constraints that $\bW_N$ obtained via gradient optimization of TD error must satisfy, however we do not need to explicitly quantify $\mathcal{M}$ in our analysis. $\lambda_k$ is a constant that denotes the strength of rank regularization. Since $\srank_\delta$ is always regularized, our analysis assumes that $\lambda_k > 0$~(see Appendix~\ref{app:rank_decrease_sl}).

\textbf{Rank drop within a fitting iteration ``compounds'' due to bootstrapping.} In the RL setting, the target values are given by $y_k(\bs, \ba) = r(\bs, \ba) + \gamma P^\pi Q_{k-1}(\bs, \ba)$. First note that when $r(\bs, \ba) = 0$ and $P^\pi = \bI$, i.e., when the bootstrapping update resembles self-regression, we first note that just ``copying over weights'' from iteration $k-1$ to iteration $k$ is a feasible point for solving Equation~\ref{eqn:closed_form}, which attains \emph{zero} TD error with no increase in $\srank_\delta$. A better solution to Equation~\ref{eqn:closed_form} can thus be obtained by incurring non-zero TD error at the benefit of a decreased $\srank$, indicating that in this setting, $\srank_\delta(\bW_\phi)$ drops in each fitting iteration, leading to a compounding rank drop effect. 

We next extend this analysis to the full bootstrapping setting. Unlike the self-training setting, $y_k(\bs, \ba)$ is not directly expressible as a function of the previous $\bW_\phi(k, T)$ due to additional reward and dynamics transformations. 
Assuming closure of the function class~(Assumption~\ref{assumption:closed}) under the Bellman update~\citep{munos2008finite, chen2019information}, we reason about the compounding effect of rank drop across iterations in Proposition~\ref{thm:compounding_main_paper} (proof in Appendix~\ref{app:rank_drop_compounds}). Specifically, $\srank_\delta$ can increase in each fitting iteration due to $\bR$ and $P^\pi$ transformations, but will decrease due to low rank preference of gradient descent. This change in rank then compounds as shown below.
\begin{tcolorbox}[colback=blue!6!white,colframe=black,boxsep=0pt,top=3pt,bottom=5pt]
\begin{proposition}
\label{thm:compounding_main_paper}
Assume that the Q-function is initialized to $\bW_\phi(0)$ and $\bW_N(0)$. Let the Q-function class be closed under the backup, i.e., $\exists\ \bW_N^P, \bW_\phi^P, \text{~s.t.~} \left(\bR + \gamma P^\pi \bQ_{k-1} \right)^T =  \bW_N^P(k) \bW_\phi^P(k) [\mathcal{S}; \mathcal{A}]^T$, and assume that the change in $\srank$ due to dynamics and reward transformations is bounded: $\srank_\delta(\bW^P_\phi(k)) \leq \srank_\delta(\bW_\phi(k-1)) + c_k$. Then,   
\vspace{-0.3cm}
\begin{equation*}
    {\srank_\delta(\bW_\phi(k)) \leq {\srank_\delta (\bW_\phi(0)) + \sum_{j=1}^k c_j - \sum_{j=1}^k \frac{||\bQ_j - \by_j||}{\lambda_j}}}.
\vspace{-0.2cm}
\end{equation*}
\end{proposition}
\end{tcolorbox}
Proposition~\ref{thm:compounding_main_paper} provides a bound on the value of $\srank$ after $k$ rounds of bootstrapping. $\srank$ decreases in each iteration due to non-zero TD errors, but potentially increases due to reward and bootstrapping transformations. To instantiate a concrete case where rank clearly collapses, we investigate $c_k$ as the value function gets closer to the Bellman fixed point, which is a favourable initialization for the Q-function in Theorem~\ref{thm:near_optimality}. In this case, the learning dynamics begins to resemble the self-training regime, as the target values approach the previous value iterate $\by_k \approx \bQ_{k-1}$, and thus, as we show next, the potential increase in $\srank$ ($c_k$ in Proposition~\ref{thm:compounding_main_paper}) converges to $0$. 
\begin{tcolorbox}[colback=blue!6!white,colframe=black,boxsep=0pt,top=3pt,bottom=5pt]
\begin{theorem}
\label{thm:near_optimality}
Suppose target values $\by_{k} = \bR + \gamma P^\pi \bQ_{k-1}$ are close to the previous value estimate $\bQ_{k-1}$, i.e. $\forall~ \bs, \ba, ~y_{k}(\bs, \ba) = Q_{k-1}(\bs, \ba) + \varepsilon(\bs, \ba)$, with {$|\varepsilon(\bs, \ba)| \ll |Q_{k-1}(\bs, \ba)|$}. Then, there is a constant $\epsilon_0$ depending upon $\bW_N$ and $\bW_\phi$, such that for all $\|\varepsilon\| < \varepsilon_0$, $c_k = 0$. Thus, $\srank$ decreases in iteration $k$: 
${\srank_\delta(\bW_\phi(k)) \leq \srank_\delta(\bW_\phi(k-1)) - ||\bQ_k - y_k|| / \lambda_k}$.
\end{theorem}
\end{tcolorbox}


\begin{wrapfigure}{r}{0.47\textwidth}
\centering
\vspace{-5pt}
    \includegraphics[width=0.93\linewidth]{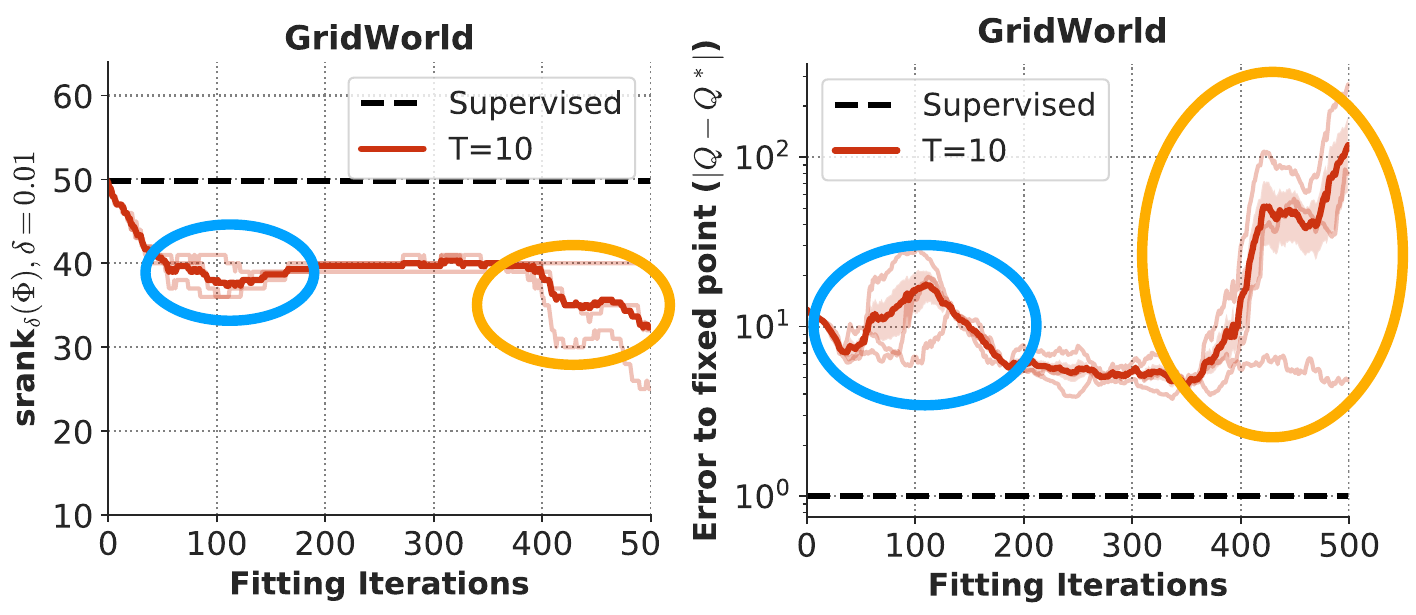}
    \vspace{-6pt}
    \caption{\small{{Trend of $\srank_\delta(\Phi)$ {\emph v.s.} error on log scale to the projected TD fixed point. A drop in $\srank_\delta(\Phi)$ (shown as blue and yellow circles) corresponds to a corresponding increase in distance to the fixed point.}}}
    \label{fig:linear_net_near_optimality}
    \vspace{-10pt}
\end{wrapfigure}
We provide a complete form, including the expression for $\epsilon_0$ and a proof in Appendix~\ref{app:proof_subopt}.
To empirically show the \textbf{consequence of Theorem~\ref{thm:near_optimality}} that a decrease in $\srank_\delta(\bW_\phi)$ values can lead to an increase in the distance to the fixed point in a neighborhood around the fixed point, we performed a controlled experiment on a deep linear net shown in Figure~\ref{fig:linear_net_near_optimality} that measures the relationship between of $\srank_\delta(\Phi)$ and the error to the projected TD fixed point $|\bQ - \bQ^*|$. Note that a drop in $\srank_{\delta}(\Phi)$  corresponds to a increased value of $|\bQ - \bQ^*|$ indicating that rank drop when $\bQ$ get close to a fixed point can affect convergence to it. 


\vspace{-5pt}
\section{Mitigating Under-Parametrization Improves Deep Q-Learning}
\label{sec:fixes}
\vspace{-5pt}

We now show that mitigating implicit under-parameterization by preventing rank collapse can improve performance. We place special emphasis on the offline RL setting in this section, since it is particularly vulnerable to the adverse effects of rank collapse.

\begin{wrapfigure}{r}{0.51\textwidth}
\begin{center}
\vspace{-27pt}
    \begin{subfigure}{0.245\textwidth}
        \includegraphics[width=\linewidth]{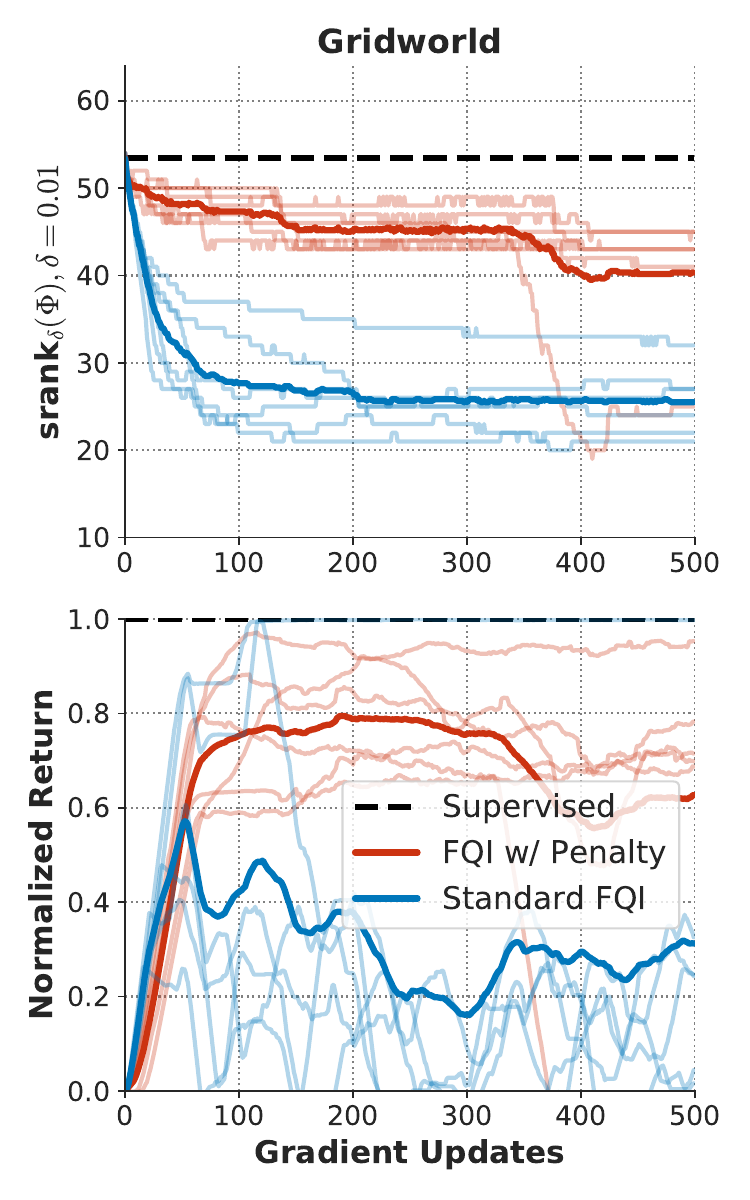}
        \vspace{-0.5cm}
        \caption{}\label{fig:gridworld_fix_results}
    \end{subfigure}
    \begin{subfigure}{0.245\textwidth}
        \includegraphics[width=\linewidth]{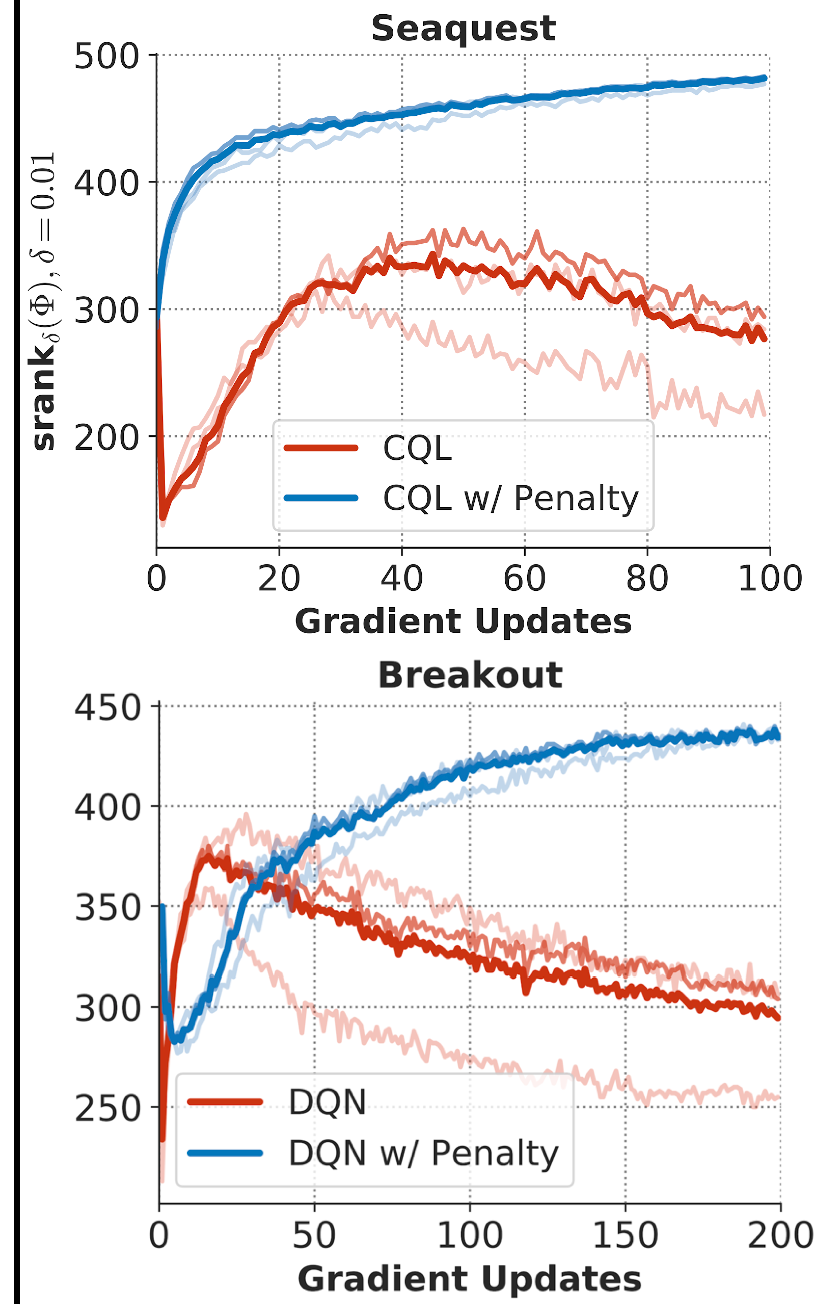}
        \vspace{-0.5cm}
        \caption{}\label{fig:atari_rank_increase_penalty}
    \end{subfigure}
    \vspace{-8pt}
    \caption{\textbf{(a)}: $\srank_\delta(\Phi)$ (top) and performance (bottom) of FQI on gridworld in the offline setting with 200 gradient updates per fitting iteration. Note reduced
    rank collapse and higher performance with the regularizer $\gL_p(\Phi)$. \textbf{(b)}: $\gL_p(\Phi)$ mitigates the rank collapse in DQN and CQL in the offline RL setting on Atari.}
    \vspace{-7pt}
\end{center}
\end{wrapfigure}


We devise a penalty (or a regularizer) $\gL_p(\Phi)$ that encourages higher effective rank of the learned features, $\srank_\delta(\Phi)$, to prevent rank collapse. The effective rank function $\srank_\delta(\Phi)$ is non-differentiable, so we choose a simple surrogate that can be optimized over deep networks. Since effective rank is maximized when the magnitude of the singular values is roughly balanced, one way to increase effective rank is to minimize the largest singular value of $\Phi$, $\sigma_{\max}(\Phi)$, while simultaneously maximizing the smallest singular value, $\sigma_{\min}(\Phi)$. We construct a simple penalty $\gL_p(\Phi)$ derived from this intuition, given by:
\begin{equation}
    \label{eqn:penalties}
    \gL_p(\Phi) = \sigma_{\max}^2(\Phi) - \sigma_{\min}^2(\Phi).
\end{equation}
$\gL_p(\Phi)$ can be computed by invoking the singular value decomposition subroutines in standard automatic differentiation frameworks~\citep{tensorflow, NIPS2019_9015}. We estimate the singular values over the feature matrix computed over a minibatch, and add the resulting value of $\gL_p$ as a penalty to the TD error objective, with a tradeoff factor $\alpha = 0.001$.

\textbf{Does $\gL_p(\Phi)$ address rank collapse?} We first verify whether controlling the minimum and maximum singular values using $\gL_p(\Phi)$ actually prevents rank collapse. When using this penalty on the gridworld problem (Figure \ref{fig:gridworld_fix_results}), the effective rank does not collapse, instead gradually decreasing at the onset and then plateauing, akin to the evolution of effective rank in supervised learning. In Figure~\ref{fig:atari_rank_increase_penalty}, we plot the evolution of effective rank on two Atari games in the {offline} setting (all games in Appendix~\ref{app:fix_results}), and observe that using $\gL_p$ also generally leads to increasing rank values.

\textbf{Does mitigating rank collapse improve performance?} 
We now evaluate the performance of the penalty using DQN~\citep{Mnih2015} and CQL \citep{kumar2020conservative} on Atari dataset from \citet{agarwal2020optimistic}~(5\% replay data), used in Section~\ref{sec:problem}. Figure \ref{fig:atari_offline_results} summarizes the relative improvement from using the penalty for 16 Atari games. Adding the penalty to DQN improves performance on all {16/16} games with a median improvement of \textbf{74.5\%}; adding it to CQL, a state-of-the-art offline algorithm, improves performance on {11/16} games with median improvement of \textbf{14.1\%}. 
Prior work has discussed that standard Q-learning methods designed for the online setting, such as DQN, are generally ineffective with small offline datasets~\citep{kumar2020conservative, agarwal2020optimistic}. Our results show that mitigating rank collapse makes even such simple methods substantially more effective in this setting, suggesting that rank collapse and the resulting implicit under-parameterization may be an crucial piece of the puzzle in explaining the challenges of offline RL.

\begin{wrapfigure}{r}{0.67\textwidth}
\centering
    \includegraphics[width=0.49\linewidth]{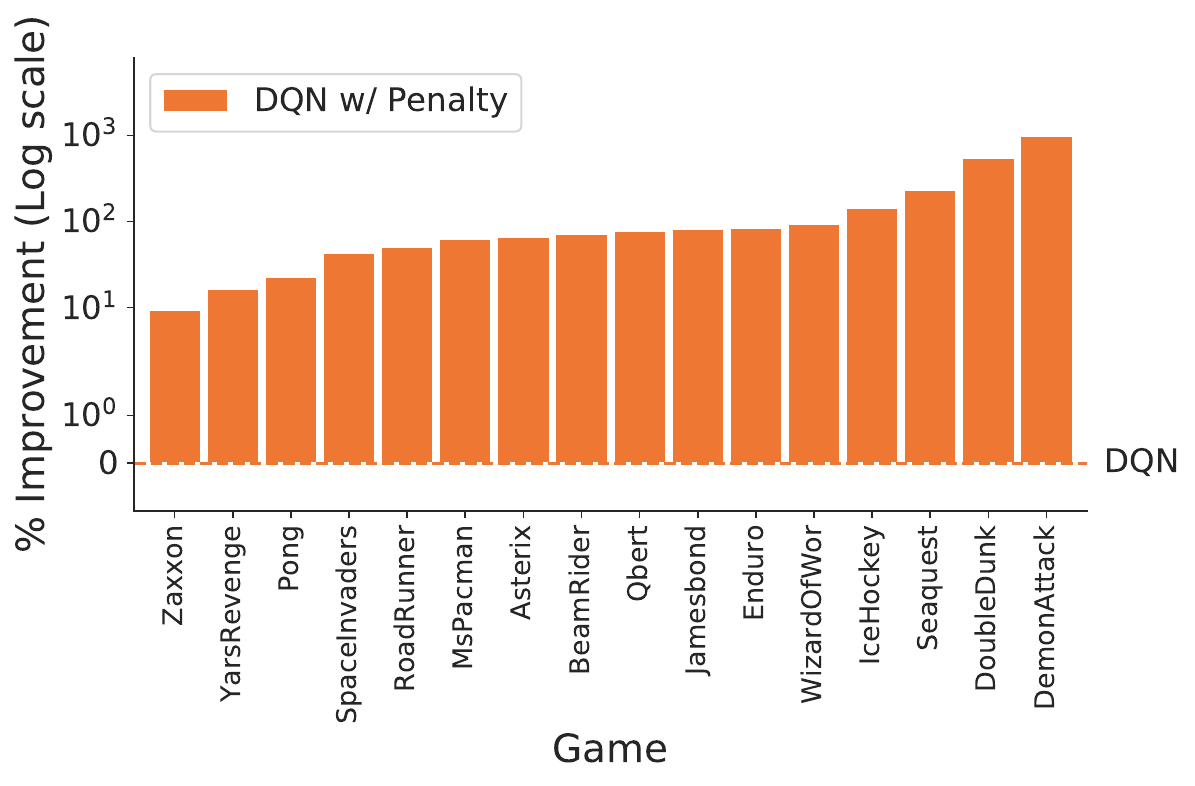}
    \includegraphics[width=0.49\linewidth]{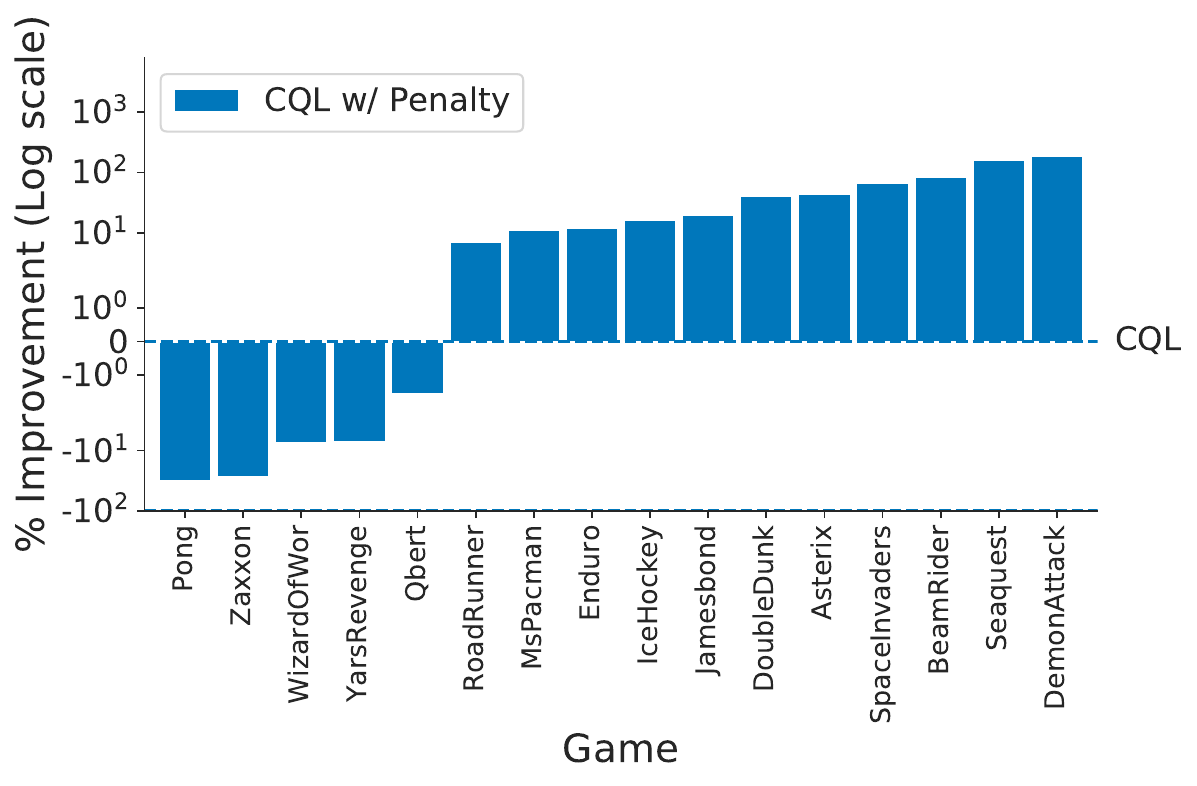}
    \vspace{-10pt}
    \caption{\small{DQN and CQL with $\gL_p(\Phi)$ penalty vs. their standard counterparts in the 5\% offline setting on Atari from Section~\ref{sec:problem}. $\gL_p$ improves DQN on 16/16 and CQL on 11/16 games.}}
    \label{fig:atari_offline_results}
    \vspace{-10pt}
\end{wrapfigure} 
We also evaluated the regularizer $\gL_p(\Phi)$ in the data-efficient online RL setting, with results in Appendix~\ref{sec:fix_rainbow_results}. This variant achieved median improvement of 20.6\%  performance with Rainbow~\citep{hessel2018rainbow}, however performed poorly with DQN, where it reduced median performance by 11.5\%.
Thus, while our proposed penalty is effective in many cases in offline and online settings, it does not solve the problem fully, i.e., it does not address the root cause of implicit under-parameterization and only addresses a symptom, and a more sophisticated solution may better prevent the issues with implicit under-parameterization. Nevertheless, our results suggest that mitigation of implicit under-parameterization can improve performance of data-efficient RL.

\vspace{-13pt}
\section{Related Work}
\label{sec:related}
\vspace{-10pt}

Prior work has extensively studied the learning dynamics of Q-learning with tabular and linear function approximation, to study error propagation~\citep{munos2003api,farahmand2010error} and to prevent divergence~\citep{de2002alp, maei09nonlineargtd,Sutton09b,Dai2018}, as opposed to deep Q-learning analyzed in this work. 
Q-learning has been shown to have favorable optimization properties with certain classes of features~\citep{ghosh2020representations}, but our work shows that the features learned by a neural net when minimizing TD error do not enjoy such guarantees, and instead suffer from rank collapse. Recent theoretical analyses of deep Q-learning have shown convergence under restrictive assumptions~\citep{yang2020theoretical,cai2019neural,zhang2020can,xu2019finite}, but Theorem~\ref{thm:near_optimality} shows that implicit under-parameterization appears when the estimates of the value function approach the optimum, potentially preventing convergence.
\citet{xu2005kernel,xu2007kernel} present variants of LSTD~\citep{boyan1999least}, which model the Q-function as a kernel-machine
but do not take into account the regularization from gradient descent, as done in Equation~\ref{eqn:self_dist_problem}, which is essential for implicit under-parameterization.
\citet{igl2020impact,fedus2020catastrophic} argue that non-stationarity arising from distribution shift hinders generalization and recommend periodic network re-initialization. Under-parameterization is not caused by this distribution shift, and we find that network re-initialization does little to prevent rank collapse (Figure~\ref{fig:reinit}).  \citet{luo2020i4r} proposes a regularization similar to ours, but in a different setting, finding that more expressive features increases performance of on-policy RL methods.
Finally, \citet{yang2019harnessing} study the effective rank of the $Q^*$-values when expressed as a $|\states| \times |\actions|$ matrix in online RL and find that low ranks for this $Q^*$-matrix are preferable. We analyze a fundamentally different object: the learned features (and illustrate that a rank-collapse of features can hurt), not the $Q^*$-matrix, whose rank is upper-bounded by the number of actions (\eg 24 for Atari). 

\vspace{-12pt}
\section{Discussion}
\vspace{-10pt}

We identified an implicit under-parameterization phenomenon in deep RL algorithms that use bootstrapping, where gradient-based optimization of a bootstrapped objective can lead to a reduction in the expressive power of the value network. This effect manifests as a collapse of the rank of the features learned by the value network, causing aliasing across states and often leading to poor performance. Our analysis reveals that this phenomenon is caused by the implicit regularization due to gradient descent on bootstrapped objectives. We observed that mitigating this problem by means of a simple regularization scheme improves performance of deep Q-learning methods.

While our proposed regularization provides some improvement, 
devising better mitigation strategies for implicit under-parameterization remains an exciting direction for future work. Our method explicitly attempts to prevent rank collapse, but relies on the emergence of useful features solely through the bootstrapped signal. An alternative path may be to develop new auxiliary losses  ~\citep[\eg][]{jaderberg2016reinforcement} that learn useful features while passively preventing underparameterization. 
More broadly, understanding the effects of neural nets and associated factors such as initialization, choice of optimizer, \etc on the learning dynamics of deep RL algorithms, using tools from deep learning theory, is likely to be key towards developing robust and data-efficient deep RL algorithms.

\section*{Acknowledgements}
We thank Lihong Li, Aaron Courville, Aurick Zhou, Abhishek Gupta, George Tucker, Ofir Nachum, Wesley Chung, Emmanuel Bengio, Zafarali Ahmed, and Jacob Buckman for feedback on an earlier version of this paper. We thank Hossein Mobahi for insightful discussions about self-distillation and Hanie Sedghi for insightful discussions about implicit regularization and generalization in deep networks. We additionally thank Michael Janner, Aaron Courville, Dale Schuurmans and Marc Bellemare for helpful discussions. AK was partly funded by the DARPA Assured Autonomy program, and DG was supported by a NSF graduate fellowship and compute support from Amazon.

\bibliography{iclr2021_conference}
\bibliographystyle{iclr2021_conference}

\appendix
\newpage

\part*{Appendices}

\numberwithin{equation}{section}
\numberwithin{proposition}{section}
\numberwithin{figure}{section}
\numberwithin{table}{section}

\vspace{-5pt}
\section{Additional Evidence for Implicit Under-Parameterization}
\label{app:more_evidence}
\vspace{-9pt}
In this section, we present additional evidence that demonstrates the existence of the implicit under-parameterization phenomenon from Section~\ref{sec:problem}. In all cases, we plot the values of $\srank_\delta(\Phi)$ computed on a batch size of 2048 i.i.d. sampled transitions from the dataset.
\vspace{-5pt}
\subsection{Offline RL}\label{more_evidence_offline}
\begin{figure}[H]
    \centering
    \vspace{-8pt}
    \includegraphics[width=\linewidth]{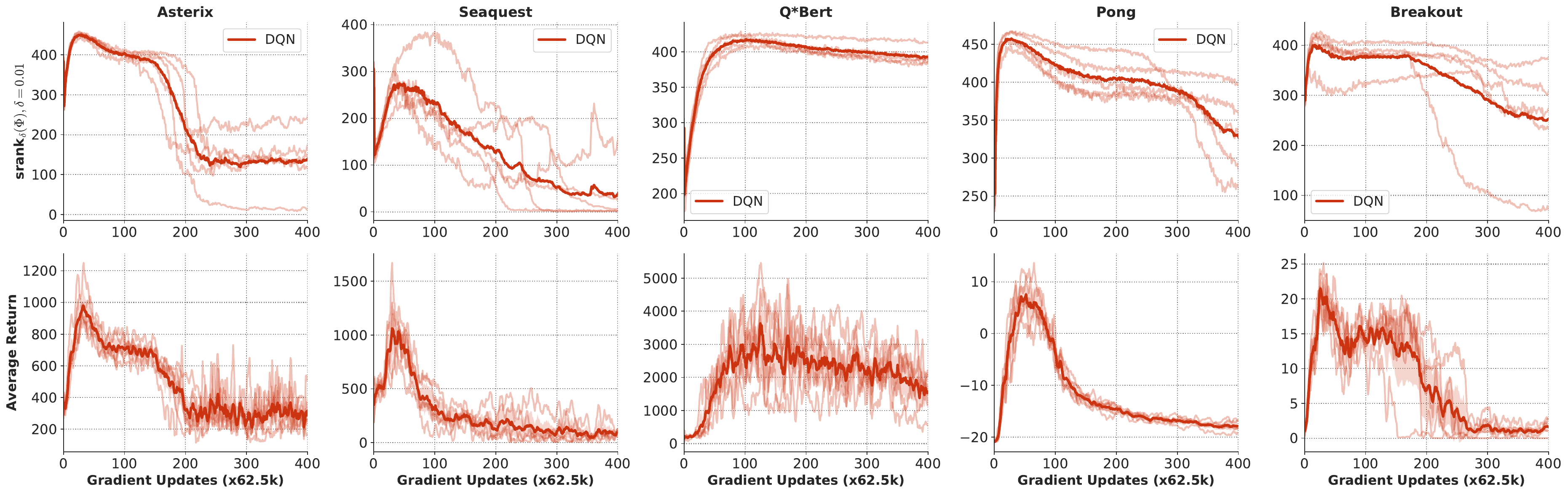}
    \caption{{\textbf{Offline DQN on Atari}. $\srank_\delta(\Phi)$ and performance of DQN on five Atari games in the offline RL setting using the 5\% DQN Replay dataset~\citep{agarwal2020optimistic} (marked as \textbf{DQN}). Note that low $\srank$ (top row) generally corresponds to worse policy performance (bottom row). Also note that rank collapse begins to generally occur close to the position of peak return. Average across 5 runs is showed for each game with individual runs.}}
    \label{fig:offline_problem_app}
\end{figure}

\begin{figure}[H]
    \centering
    \vspace{-20pt}
    \includegraphics[width=\linewidth]{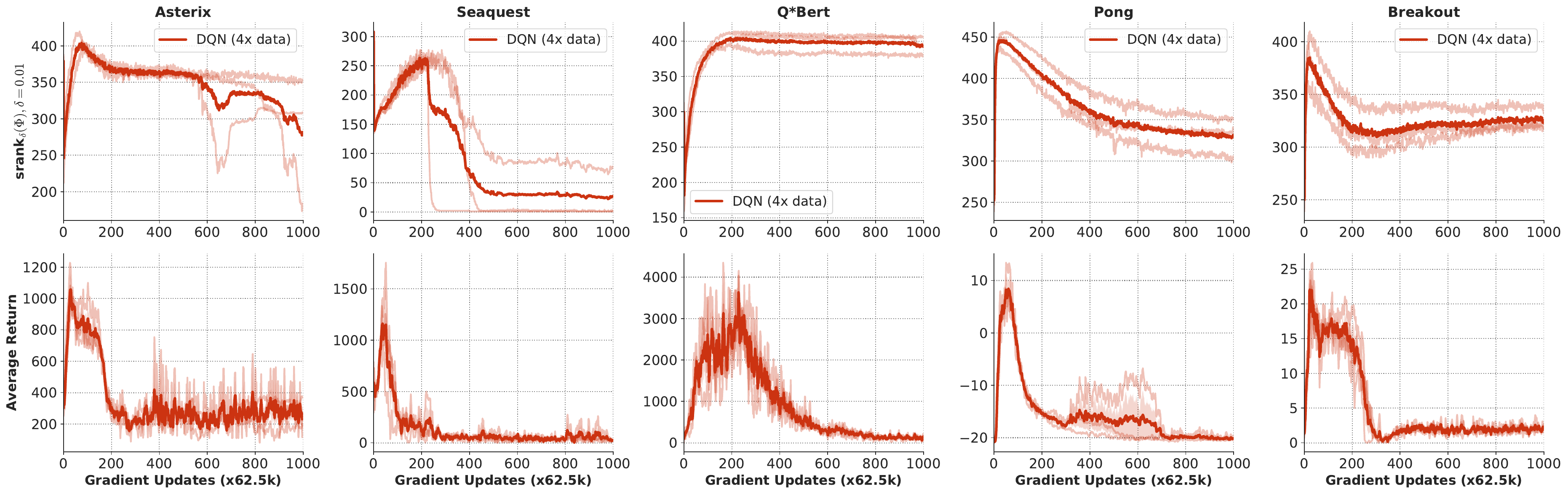}
    \caption{\textbf{Offline DQN on Atari}. $\srank_\delta(\Phi)$ and performance of DQN on five Atari games in the offline RL setting using the 20\% DQN Replay dataset~\citep{agarwal2020optimistic} (marked as \textbf{DQN}) trained for 1000 iterations. Note that low $\srank$ (top row) generally corresponds to worse policy performance (bottom row). Average across 5 runs is showed for each game with individual runs.}
    \label{fig:offline_problem_app_20k}
\end{figure}

\vspace{-5pt}
\begin{figure}[H]
    \centering
    \vspace{-15pt}
    \includegraphics[width=\linewidth]{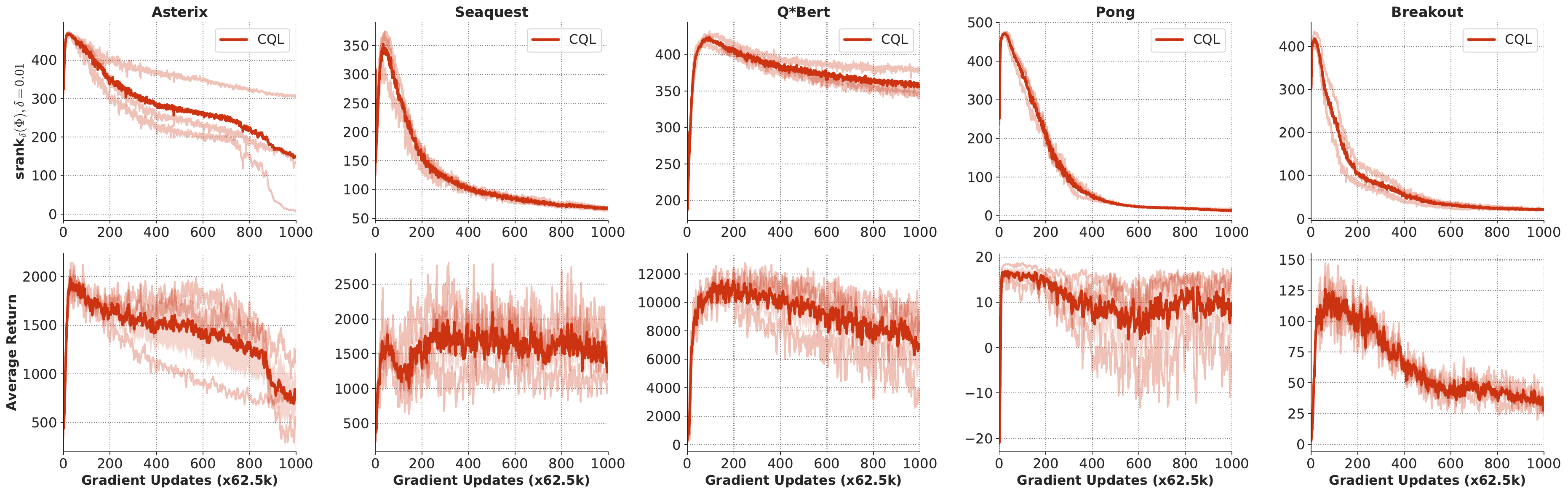}
    \caption{\textbf{Offline CQL on Atari}. $\srank_\delta(\Phi)$ and performance of CQL on five Atari games in the offline RL setting using the 5\% DQN Replay. Note that low $\srank$ (top row) generally corresponds to worse policy performance (bottom row). Average across 5 runs for each game trained for 1000 iterations.}
    \label{fig:offline_problem_cql_app}
\end{figure}

\vspace{-5pt}
\begin{figure}[H]
    \vspace{-15pt}
    \centering
    \includegraphics[width=0.21\linewidth]{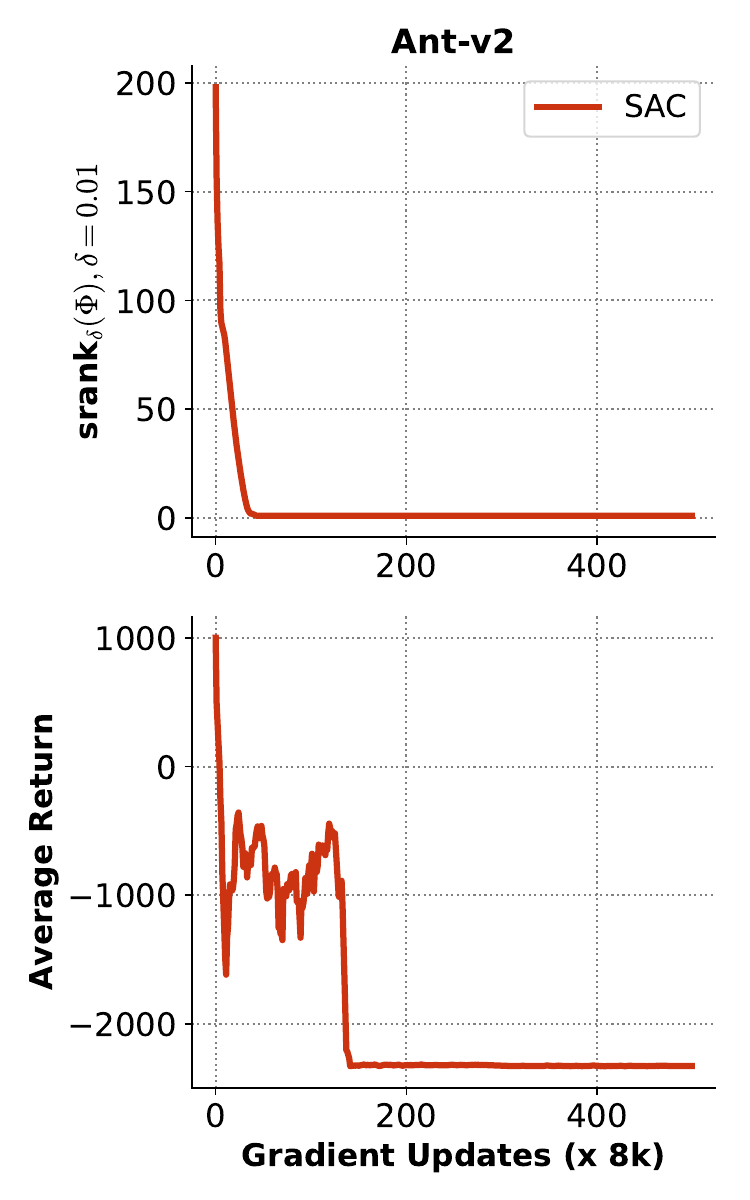}
    \includegraphics[width=0.21\linewidth]{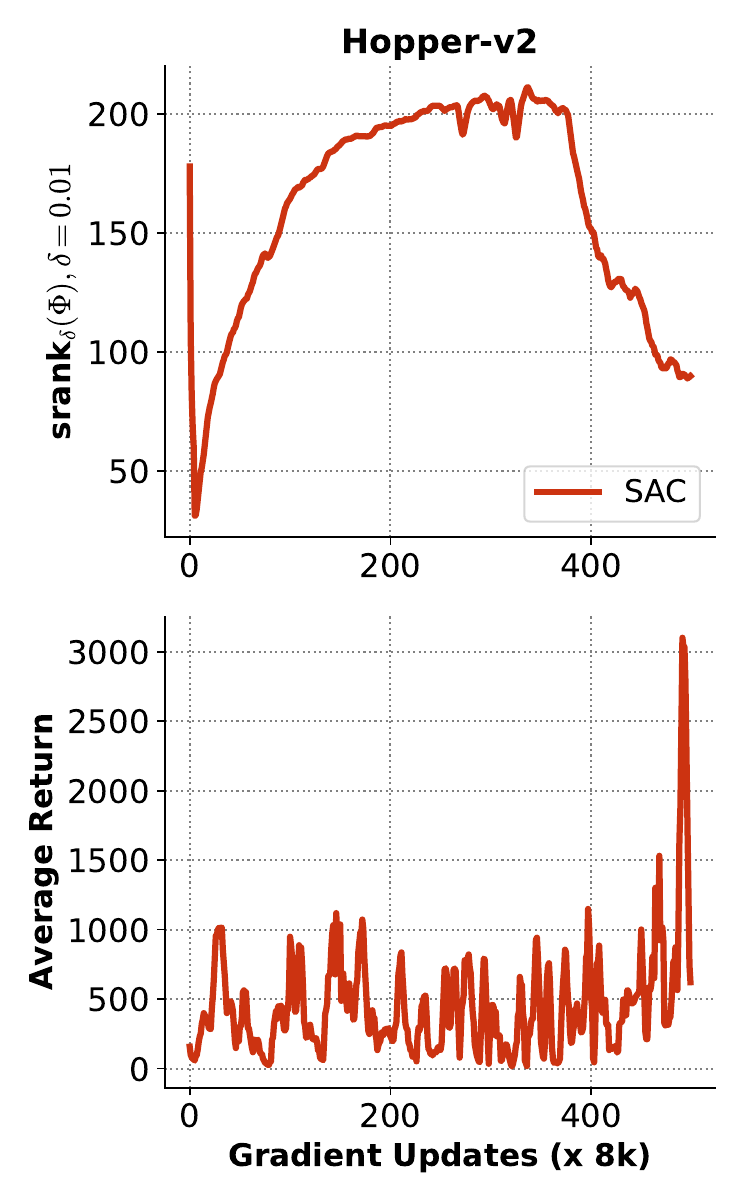}
    \includegraphics[width=0.21\linewidth]{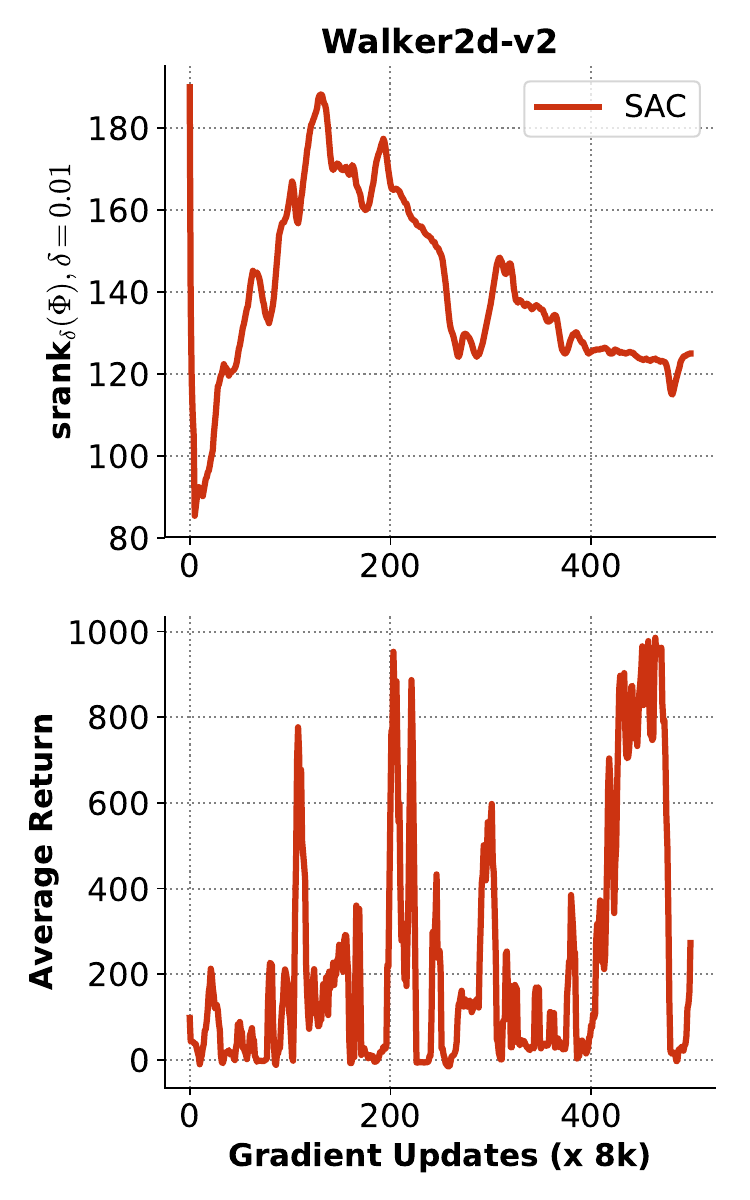}
    \vspace{-7pt}
    \caption{\textbf{Offline Control on MuJoCo}. $\srank_\delta(\Phi)$ and performance of SAC on three Gym environments the offline RL setting. Implicit Under-parameterization is conspicuous from the rank reduction, which highly correlates with performance degradation. We use 20\% uniformly sampled data from the entire replay experience of an online SAC agent, similar to the 20\% setting from \citet{agarwal2020optimistic}.
    }    
    \label{fig:offline_3_mujoco_envs_other}
\end{figure}

\vspace{-5pt}
\begin{figure}[H]
    \vspace{-15pt}
    \centering
    \includegraphics[width=0.21\linewidth]{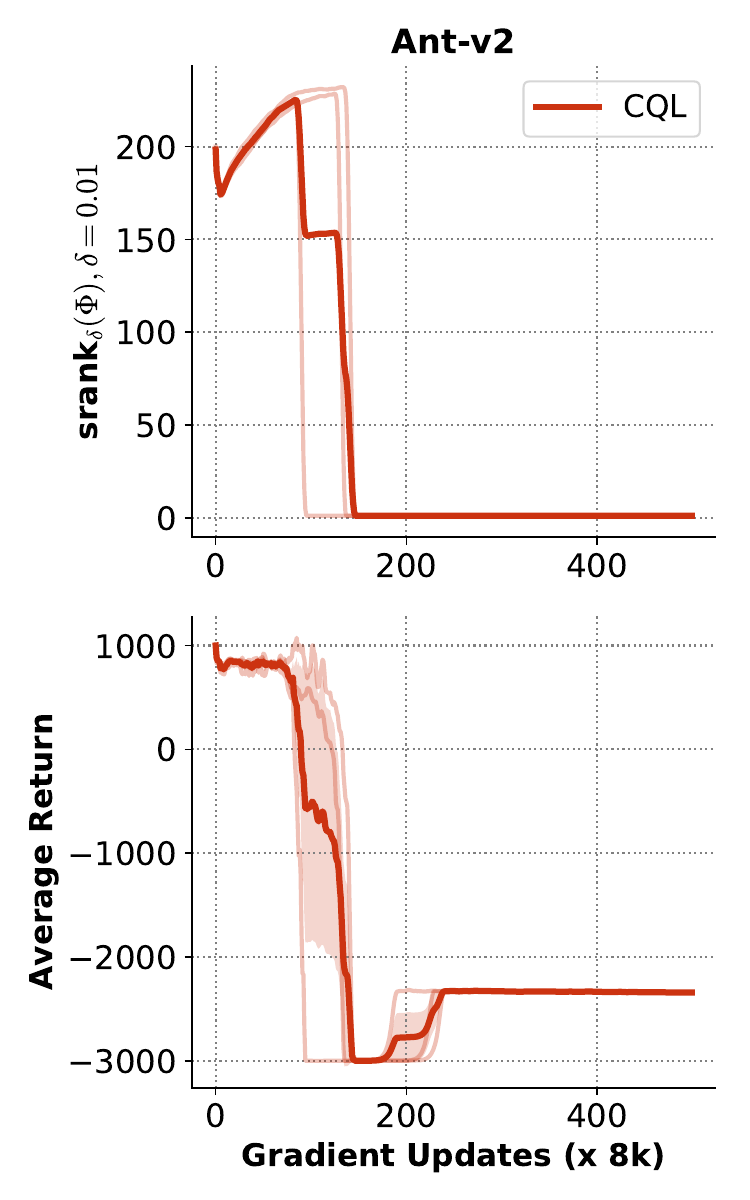}
    \includegraphics[width=0.21\linewidth]{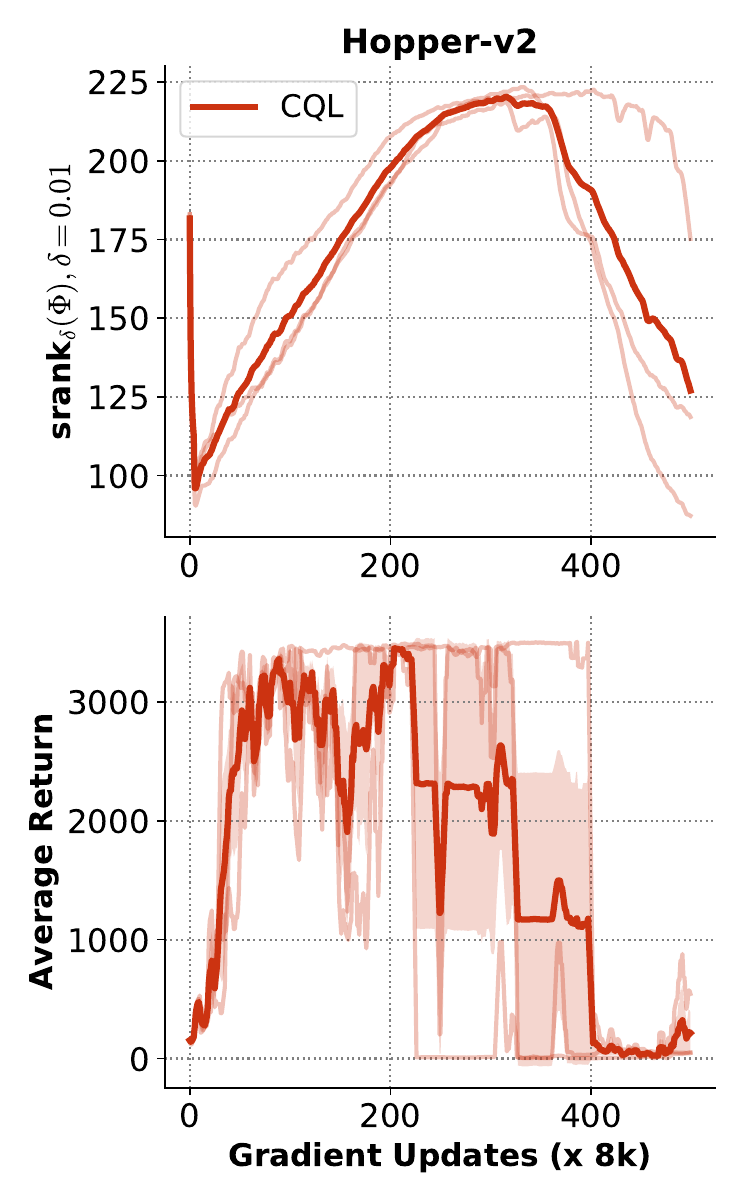}
    \includegraphics[width=0.21\linewidth]{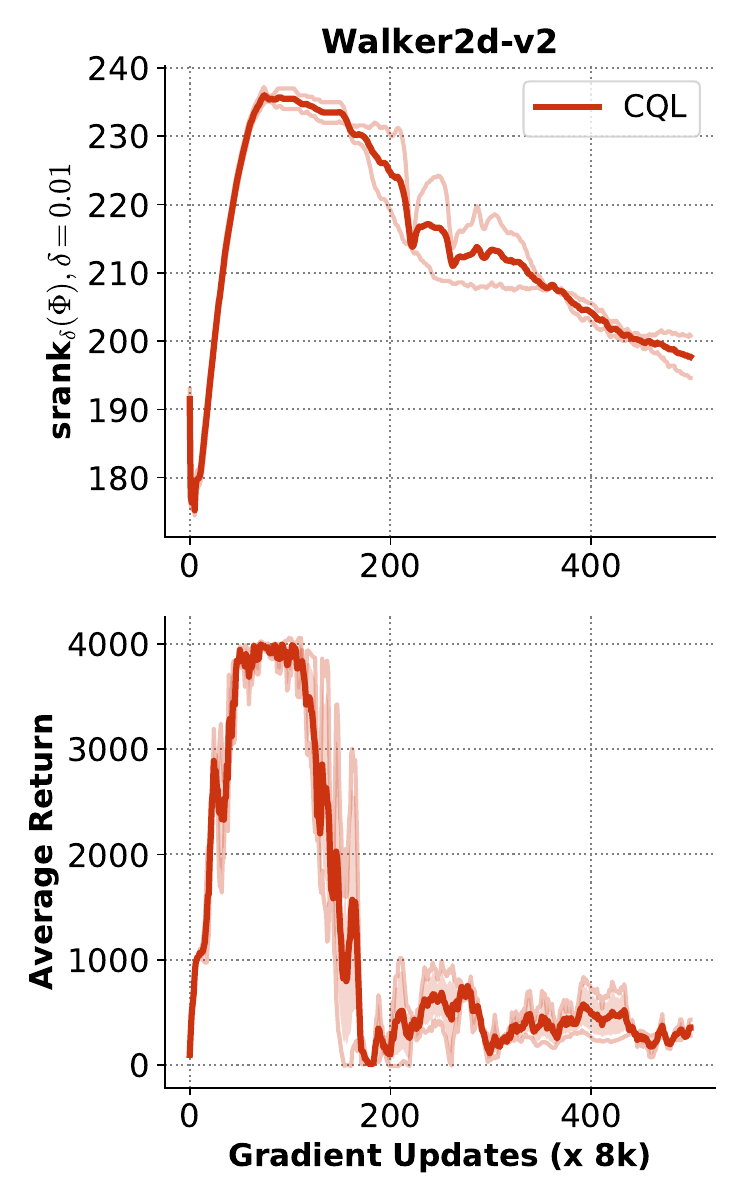}
    \vspace{-7pt}
    \caption{\textbf{Offline Control on MuJoCo}. $\srank_\delta(\Phi)$ and performance of CQL on three Gym environments the offline RL setting. Implicit Under-parameterization is conspicuous from the rank reduction, which highly correlates with performance degradation. We use 20\% uniformly sampled data from the entire replay experience of an online SAC agent, similar to the 20\% setting from \citet{agarwal2020optimistic}.
    }    
    \label{fig:offline_3_mujoco_envs}
    \vspace{-0.5cm}
\end{figure}

\subsection{Data Efficient Online RL}
\label{app:data_efficient_online_rl}
In the data-efficient online RL setting, we verify the presence of implicit under-parameterization on both DQN and Rainbow~\citep{hessel2018rainbow} algorithms when larger number of gradient updates are made per environment step. In these settings we find that more gradient updates per environment step lead to a larger decrease in effective rank, whereas effective rank can increase when the amount of data re-use is reduced by taking fewer gradient steps.
\begin{figure}[H]
    \vspace{-10pt}
    \centering
    \includegraphics[width=\linewidth]{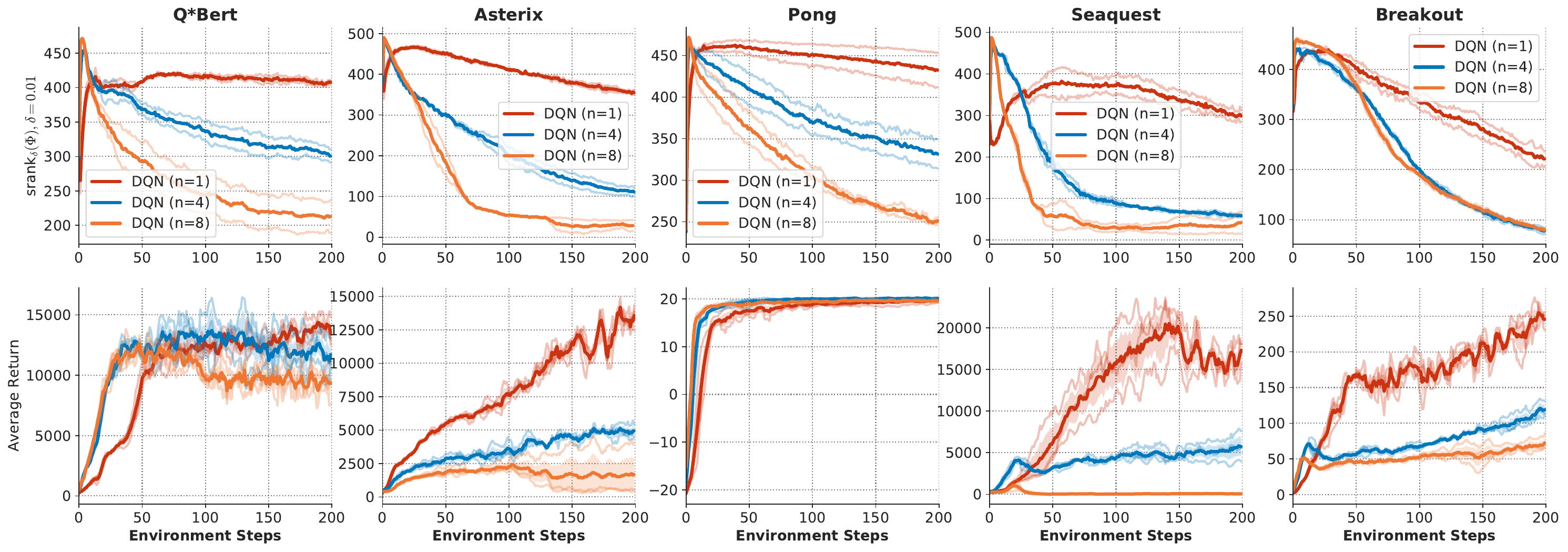}
    \caption{\textbf{Online DQN on Atari}. $\srank_\delta(\Phi)$ and performance of DQN on 5 Atari games in the online RL setting, with varying numbers of gradient steps per environment step ($n$). Rank collapse happens earlier with more gradient steps, and the corresponding performance is poor. This indicates that implicit under-parameterization aggravates as the rate of data re-use is increased.}
    \vspace{-15pt}
    \label{fig:online_5_games_atari}
\end{figure}

\begin{figure}[H]
    \vspace{-15pt}
    \centering
    \includegraphics[width=0.68\linewidth]{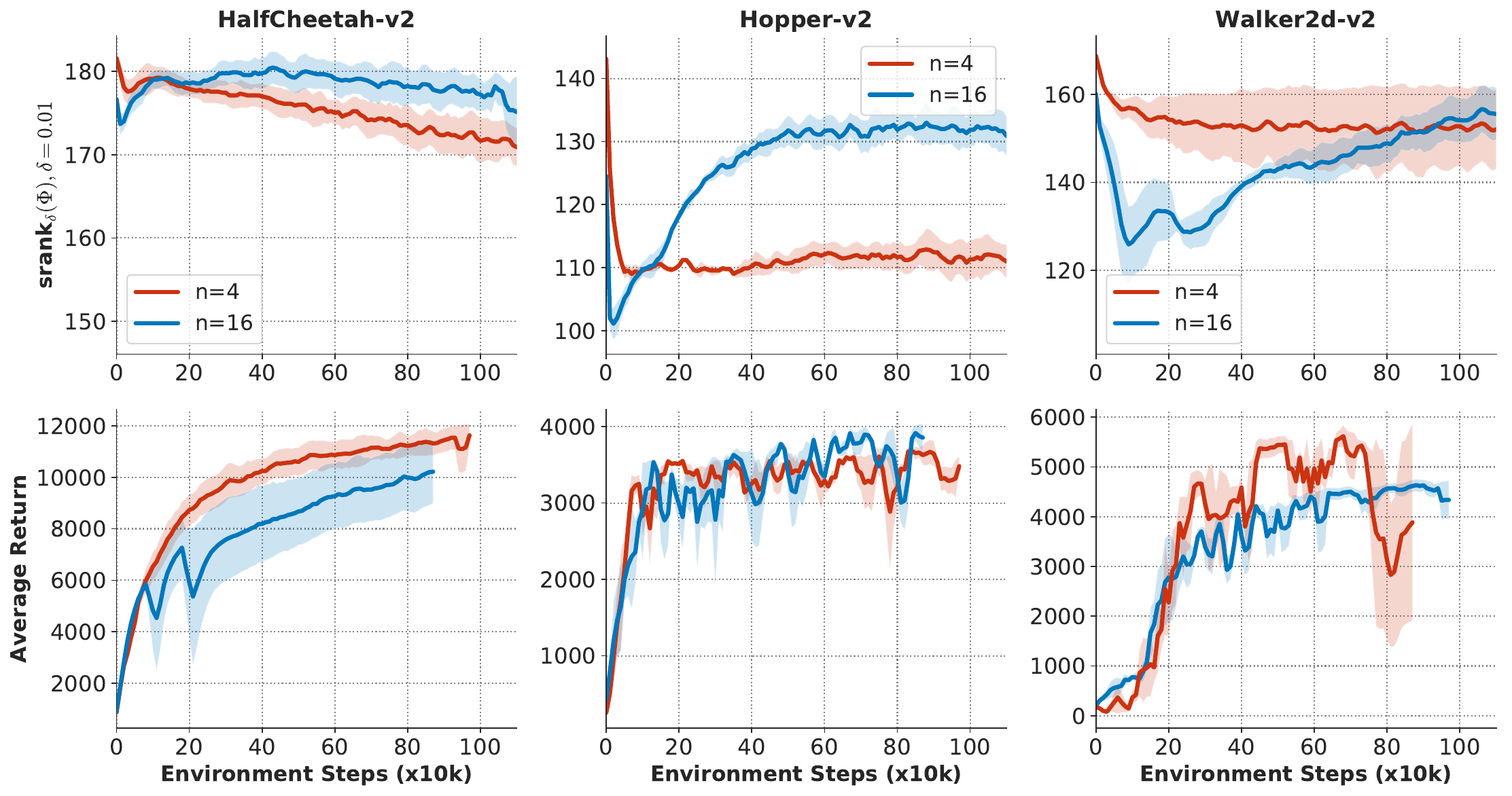}
    \includegraphics[width=0.22\linewidth]{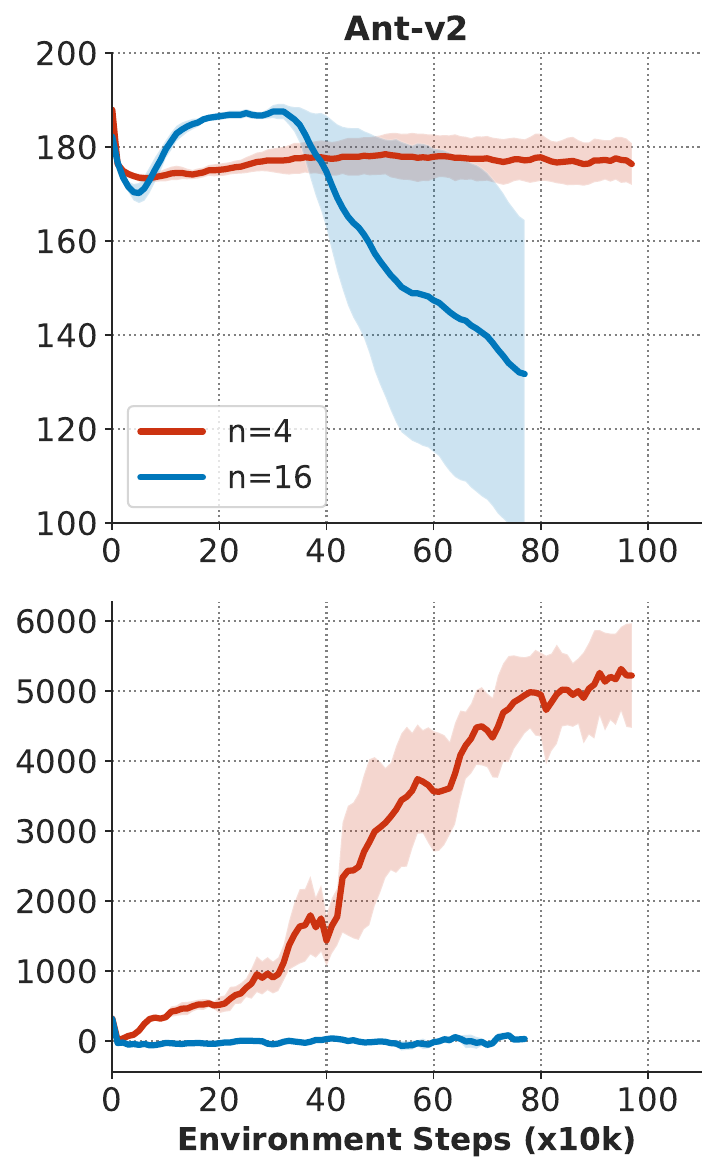}
    \caption{\textbf{Online SAC on MuJoCo}. $\srank_\delta(\Phi)$ and performance of SAC on three Gym environments the online RL setting, with varying numbers of gradient steps per environment step ($n$). While in the simpler environments, HalfCheetah-v2, Hopper-v2 and Walker2d-v2 we actually observe an increase in the values of effective rank, which also corresponds to good performance with large $n$ values in these cases, on the more complex Ant-v2 environment rank decreases with larger $n$, and the corresponding performance is worse with more gradient updates.}
    \label{fig:online_3_mujoco_envs}
\end{figure}

\begin{figure}[H]
    \centering
    \vspace{-8pt}
    \includegraphics[width=\linewidth]{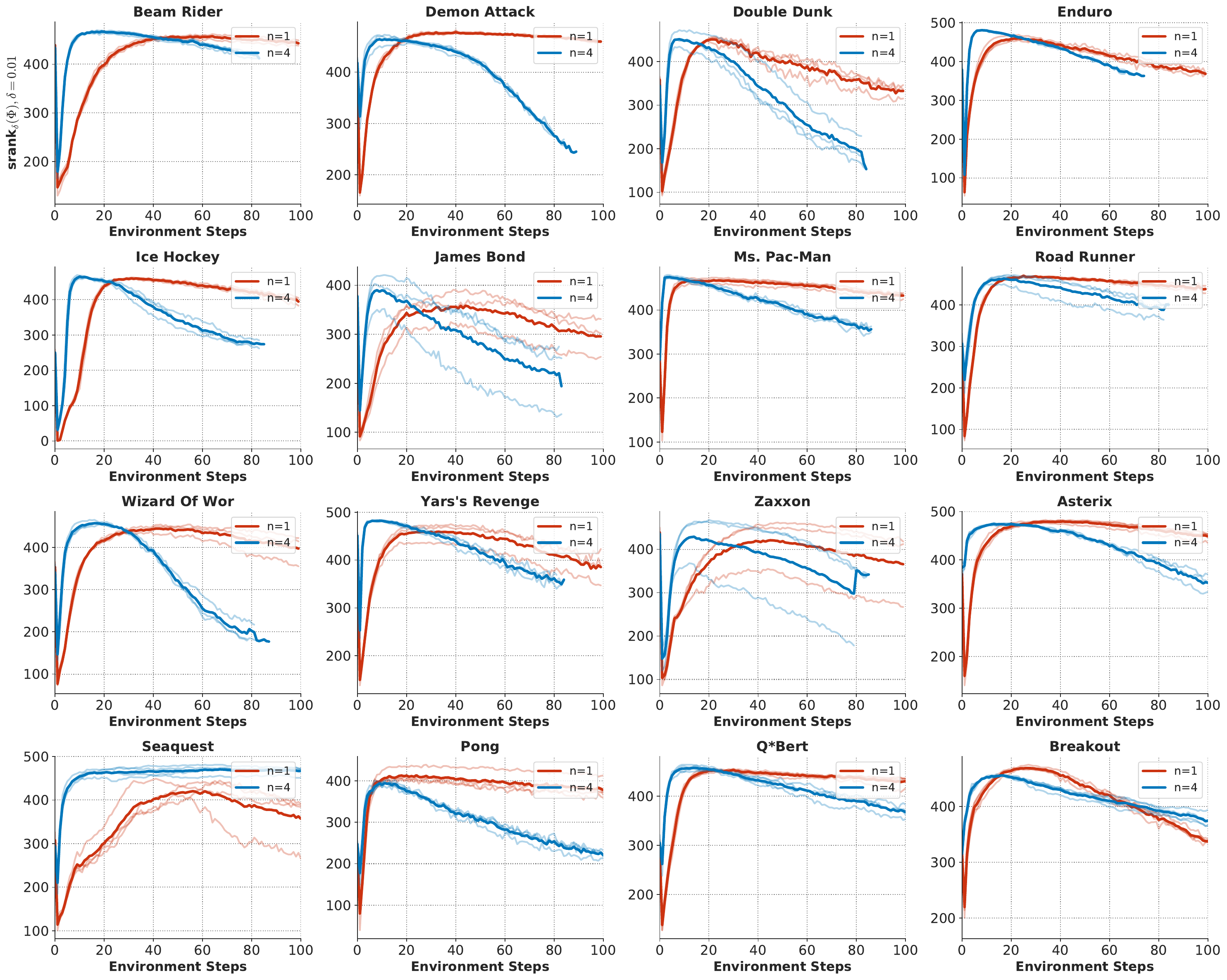}
    \caption{\textbf{Online Rainbow on Atari}. $\srank_\delta(\Phi)$ Rainbow on 16 Atari games in the data-efficient online setting, with varying numbers of gradient steps per environment step ($n$). Rank collapse happens earlier with more gradient steps, and the corresponding performance is poor. This plot indicates the multi-step returns, prioritized replay or distributional C51 does not address the implicit under-parameterization issue.}
    \label{fig:online_rainbow}
\end{figure}

\vspace{-8pt}
\subsection{Does Bootstrapping Cause Implicit Under-Parameterization?}
\label{app:evidence_bootstrap}
\vspace{-5pt}
In this section, we provide additional evidence to support our claim from Section~\ref{sec:problem} that suggests that bootstrapping-based updates are a key component behind the existence of implicit under-parameterization. To do so, we empirically demonstrate the following points empirically:
\begin{itemize}
    \item Implicit under-parameterization occurs even when the form of the bootstrapping update is changed from Q-learning that utilizes a $\max_{a'}$ backup operator to a policy evaluation (fitted Q-evaluation) backup operator, that computes an expectation of the target Q-values under the distributions specified by a different policy. \textbf{Thus, with different bootstrapped updates, the phenomenon still appears.}
    
\begin{figure}[H]
    \vspace{-8pt}
    \centering
    \includegraphics[width=\linewidth]{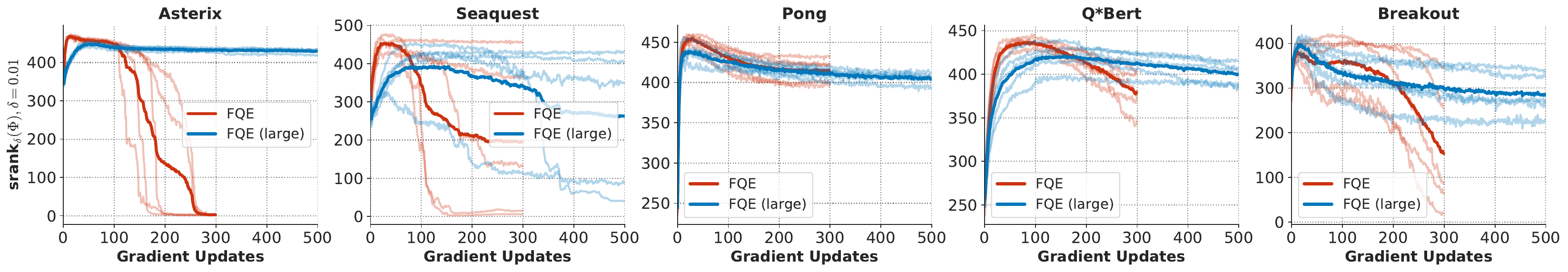}
    \caption{\textbf{Offline Policy Evaluation on Atari}. $\srank_\delta(\Phi)$ and performance of offline policy evaluation~(FQE) on 5 Atari games in the offline RL setting using the 5\% and 20\% DQN Replay dataset~\citep{agarwal2020optimistic}. The rank degradation shows that under-parameterization is not specific to the Bellman optimality operator but happens even when other bootstrapping-based backup operators are combined with gradient descent. Furthermore, the rank degradation also happens when we increase the dataset size.}
    \vspace{-0.2cm}
    \label{fig:offline_policy_eval_5_games}
\end{figure}
    
    \item Implicit under-parameterization does not occur when Monte-Carlo regression targets - that compute regression targets for the Q-function by computing a non-parameteric estimate the future trajectory return, \ie, $\by_k(\bs_t, \ba_t) = \sum_{t' = t}^{\infty} \gamma^t r(\bs_{t'}, \ba_{t'})$ and do not use bootstrapping. In this setting, we find that the values of effective rank actually increase over time and stabilize, unlike the corresponding case for bootstrapped updates. \textbf{Thus, other factors kept identically the same, implicit under-parameterization happens only when bootstrapped updates are used.}

\begin{figure}[H]
    \vspace{-8pt}
    \centering
    \includegraphics[width=\linewidth]{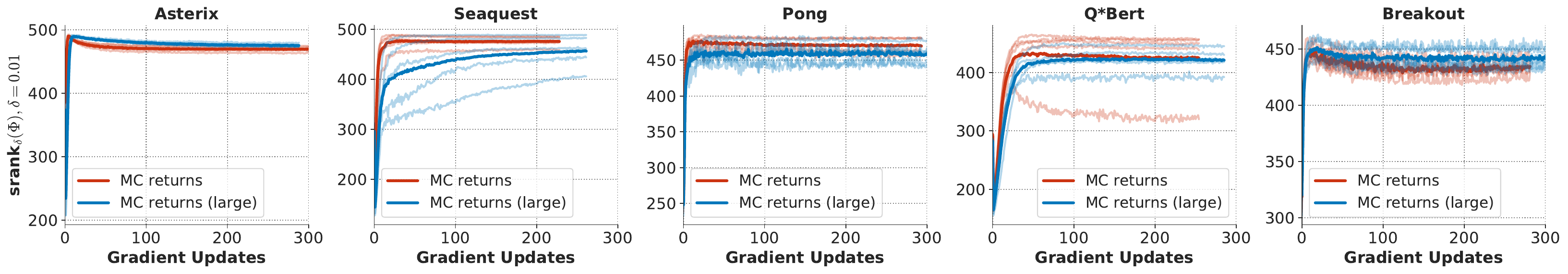}
    \caption{\textbf{Monte Carlo Offline Policy Evaluation}. $\srank_\delta(\Phi)$ on 5 Atari games in when using Monte Carlo returns for targets and thus removing bootstrapping updates. Rank collapse does not happen in this setting implying that is bootstrapping was essential for under-parameterization. We perform the experiments using $5\%$ and $20\%$ DQN replay dataset from \citet{agarwal2020optimistic}.}
    \label{fig:monte_carlo_atari}
    \vspace{-0.3cm}
\end{figure}
\item {For the final point in this section, we verify if the non-stationarity of the policy in the Q-learning (control) setting, i.e., when the Bellman optimality operator is used is not a reason behind the emergence of implicit under-parameterization. The non-stationary policy in a control setting causes the targets to change and, as a consequence, leads to non-zero errors. However, rank drop is primarily caused by bootstrapping rather than non-stationarity of the control objective. To illustrate this, we ran an experiment in the control setting on Gridworld, regressing to the target computed using the true value function $Q^\pi$ for the current policy $\pi$ (computed using tabular Q-evaluation) instead of using the bootstrap TD estimate. The results, shown in \figref{fig:non_stationarity}, show that the $\srank_\delta$ doesn't decrease significantly when regressing to true control values and infact increases with more iterations as compared to Figure~\ref{fig:gridworld_fix_results} where rank drops with bootstrapping. This experiment, alongside with experiments discussed above, ablating bootstrapping in the stationary policy evaluation setting shows that rank-deficiency is caused due to bootstrapping.}
\end{itemize}


\vspace{-10pt}
\subsection{How Does Implicit Regularization Inhibit Data-Efficient RL?}\label{app:regression}
\vspace{-5pt}

\begin{figure}
\centering
\vspace{-20pt}
    \begin{subfigure}{0.38\textwidth}
        \centering
        \includegraphics[width=0.8\linewidth]{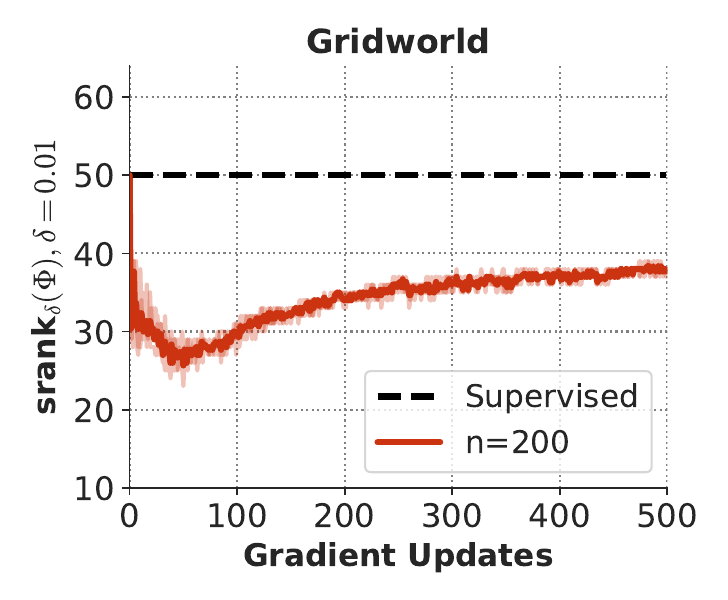}
        \vspace{-7pt}
        \caption{{Using oracle access to exact $Q^\pi$ for computing the target values does not significantly decrease in rank, and training for longer can increase feature rank. This experiment uses a policy iteration style setup but the essential trend of rank drop as we train more does not occur.}}\label{fig:non_stationarity}
        \vspace{-7pt}
    \end{subfigure}~~~~~
    \begin{subfigure}{0.38\textwidth}
        \includegraphics[width=0.65\linewidth]{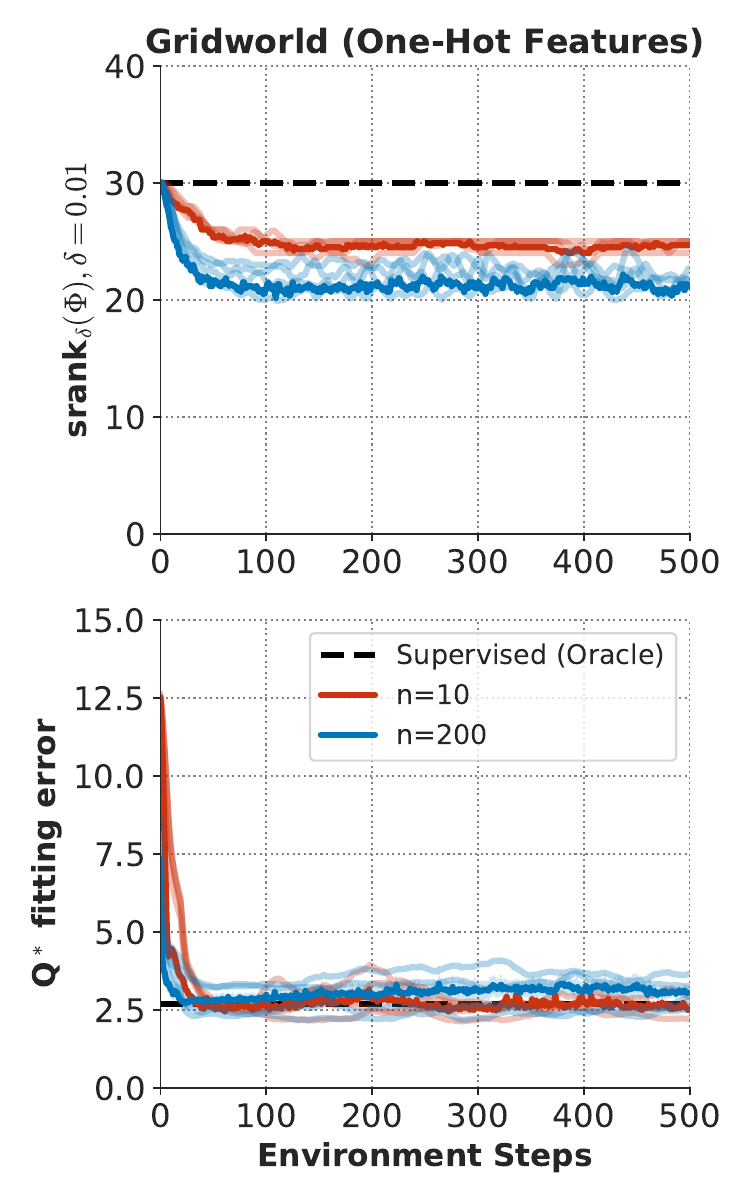}
        \vspace{-7pt}
        \caption{\small{$Q^*$ fitting error and $\srank$ in a one-hot variant of the gridworld environment.}} 
        \label{fig:expressivity_td_onehot}
        \vspace{-7pt}
    \end{subfigure}
    \caption{Gridword: (a) {Effective rank change due to non-stationarity} (b) $Q^*$ fitting error vs. $\srank_\delta$.}
    \vspace{-15pt}
\end{figure}

Implicit under-parameterization leads to a trade-off between minimizing the TD error \emph{vs}. encouraging low rank features as shown in Figure~\ref{fig:expressivity_Td}. This trade-off often results in decrease in effective rank, at the expense of increase in TD error, resulting in lower performance. Here we present additional evidence to support this.

Figure~\ref{fig:expressivity_td_onehot} shows a gridworld problem with one-hot features, which naturally leads to reduced state-aliasing.
In this setting, we find that the amount of rank drop with respect to the supervised projection of oracle computed $Q^*$ values is quite small and the regression error to $Q^*$ actually decreases unlike the case in Figure~\ref{fig:expressivity}, where it remains same or even increases. The method is able to learn policies that attain good performance as well. Hence, this justifies that when there's very little rank drop, for example, 5 rank units in the example on the right, FQI methods are generally able to learn $\Phi$ that is able to fit $Q^*$. This provides evidence showing that typical Q-networks learn $\Phi$ that can fit the optimal Q-function when rank collapse does not occur.

In Atari, we do not have access to $Q^*$, and so we instead measure the error in fitting the target value estimates, $\bR + \gamma P^\pi \bQ_{k}$. As rank decreases, the TD error increases~(\Figref{fig:td_err_appendix}) and the value function is unable to fit the target values, culminating in a performance plateau~(\Figref{fig:online_5_games_atari}).

\begin{figure}[H]
    \vspace{-11pt}
    \centering
    \includegraphics[width=\linewidth]{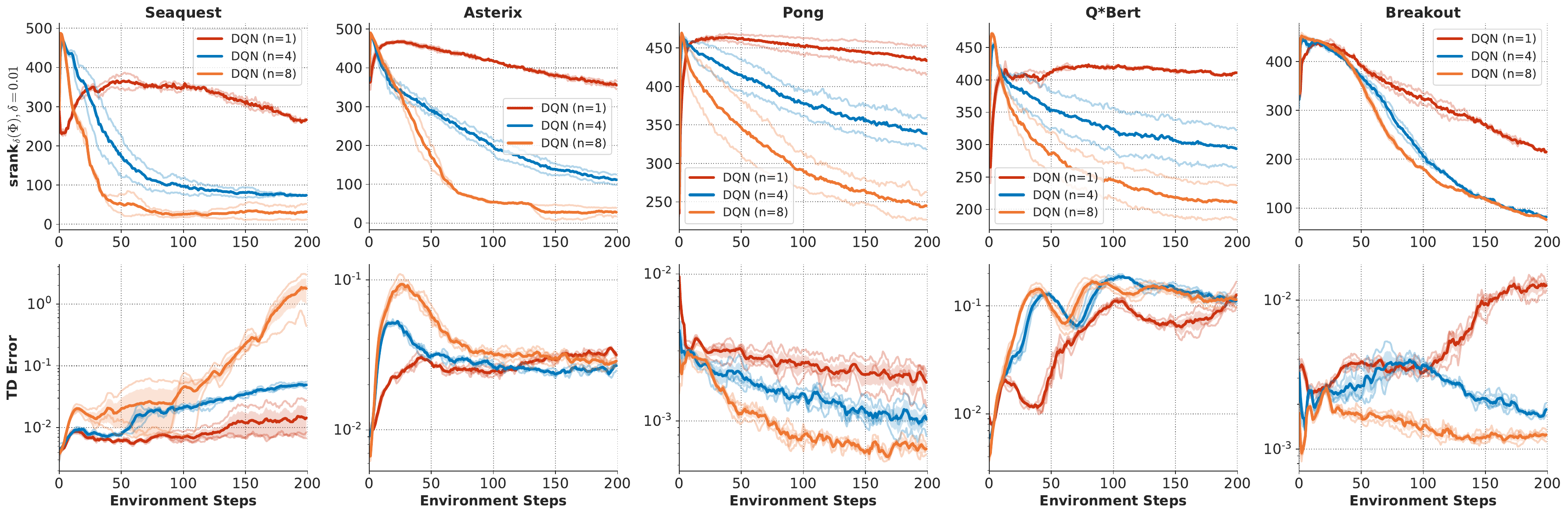}
    \caption{\textbf{TD error \emph{vs}. Effective rank on Atari}. We observe that Huber-loss TD error is often higher when there is a larger implicit under-parameterization, measured in terms of drop in effective rank. The results are shown for the data-efficient online RL setting.}
    \label{fig:td_err_appendix}
    \vspace{-15pt}
\end{figure}

\subsection{Trends in Values of Effective Rank With Penalty.}\label{app:fix_results}
In this section, we present the trend in the values of the effective rank when the penalty $\gL_p(\Phi)$ is added. In each plot below, we present the value of $\srank_\delta(\Phi)$ with and without penalty respectively.

\vspace{-5pt}
\subsubsection{Offline RL: DQN}
\begin{figure}[H]
\centering
\vspace{-10pt}
\includegraphics[width=0.99\linewidth]{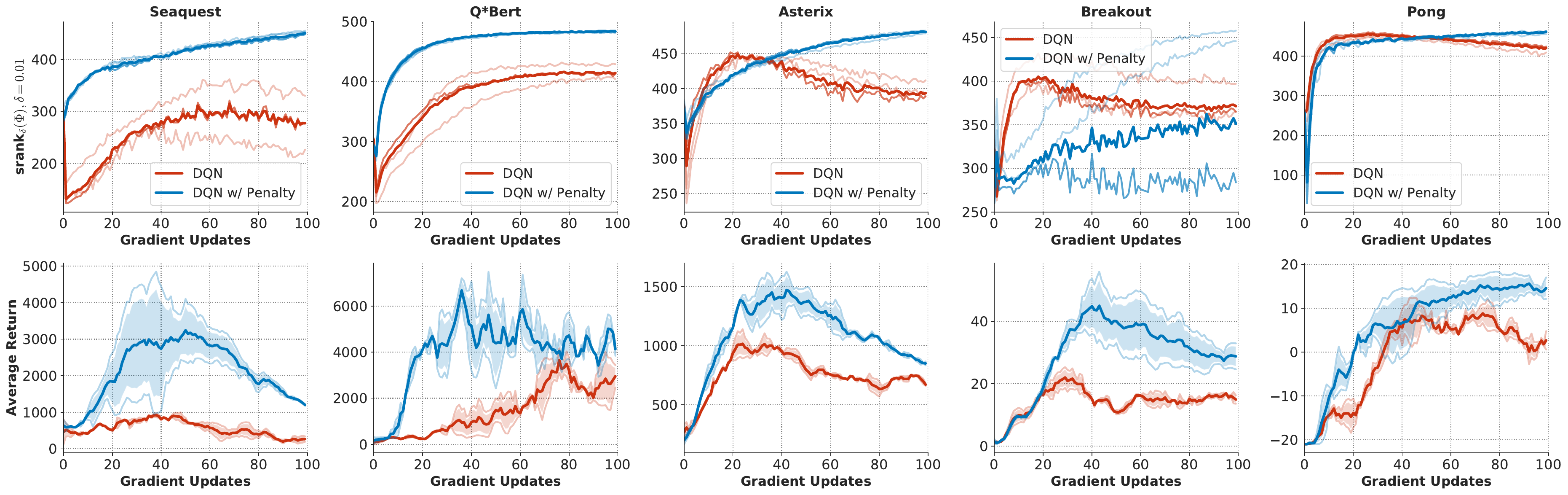}
\vspace{-5pt}
\caption{\textbf{Effective rank values with the penalty on DQN.} Trends in effective rank and performance for offline DQN. Note that the performance of DQN with penalty is generally better than DQN and that the penalty (blue) is effective in increasing the values of effective rank. We report performance at the end of 100 epochs, as per the protocol set by \citet{agarwal2020optimistic} in Figure~\ref{fig:atari_offline_results}.}
\vspace{-10pt}
\end{figure}

\begin{figure}[H]
\centering
\includegraphics[width=0.99\linewidth]{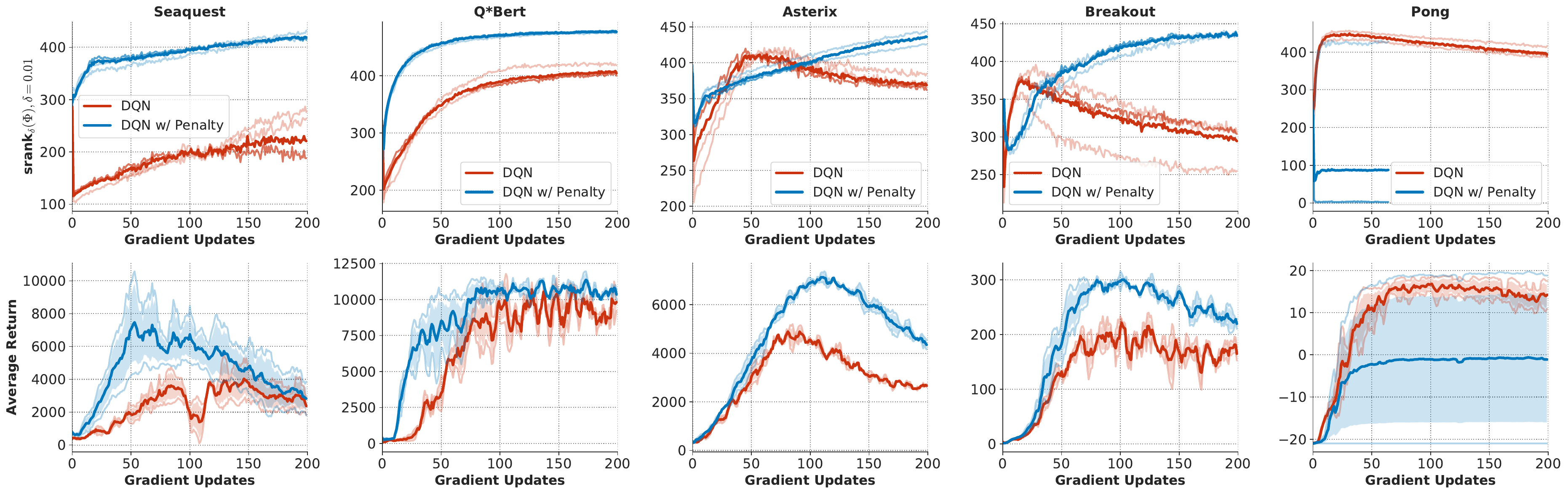}
\vspace{-5pt}
\caption{\textbf{Effective rank values with the penalty on DQN on a 4x larger dataset.} Trends in effective rank and performance for offline DQN with a 4x larger dataset, where distribution shift effects are generally removed. Note that the performance of DQN with penalty is generally better than DQN and that the penalty (blue) is effective in increasing the values of effective rank in most cases. Infact in \textsc{Pong}, where the penalty is not effective in increasing rank, we observe suboptimal performance (blue \vs red).}
\vspace{-5pt}
\end{figure}

\begin{figure}[H]
\centering
\begin{subfigure}{0.48\linewidth}
    \centering
    \includegraphics[width=\linewidth]{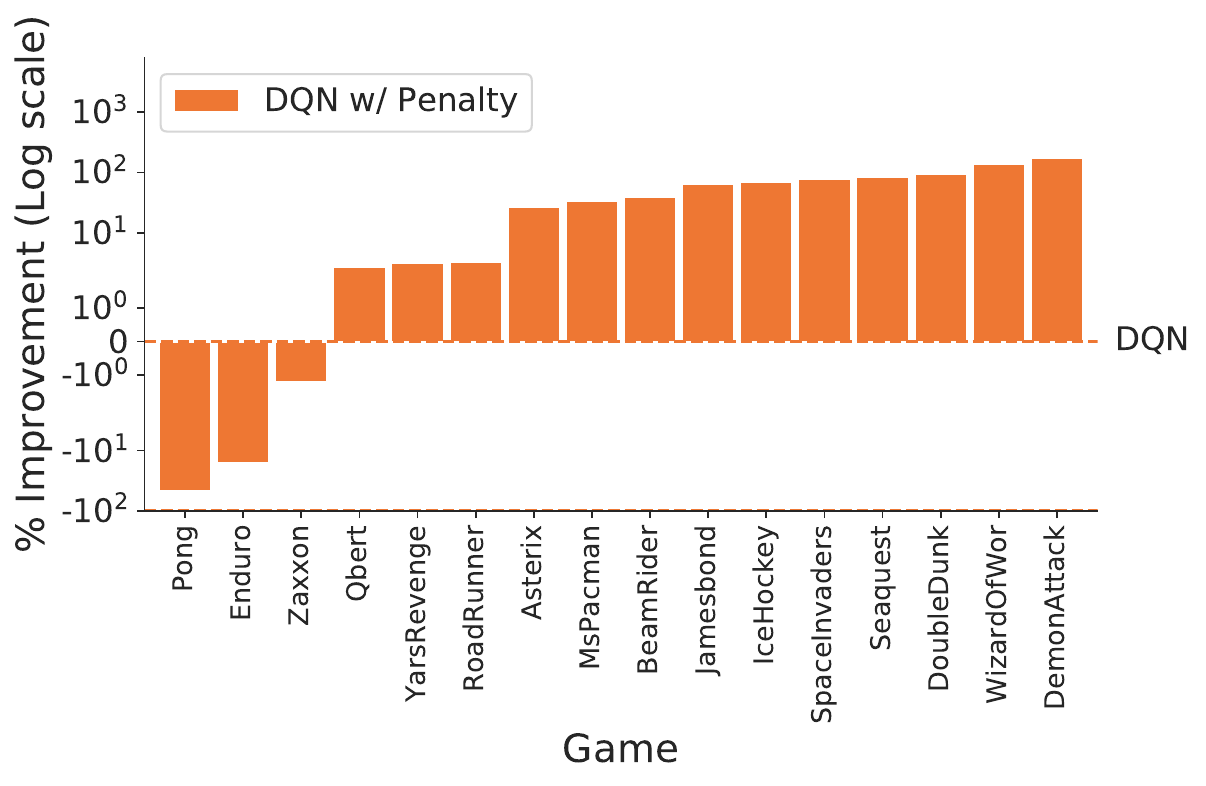}
    \caption{Offline DQN with $\gL_p(\Phi)$.}
    \end{subfigure}
    \begin{subfigure}{0.48\linewidth}
    \centering
    \includegraphics[width=\linewidth]{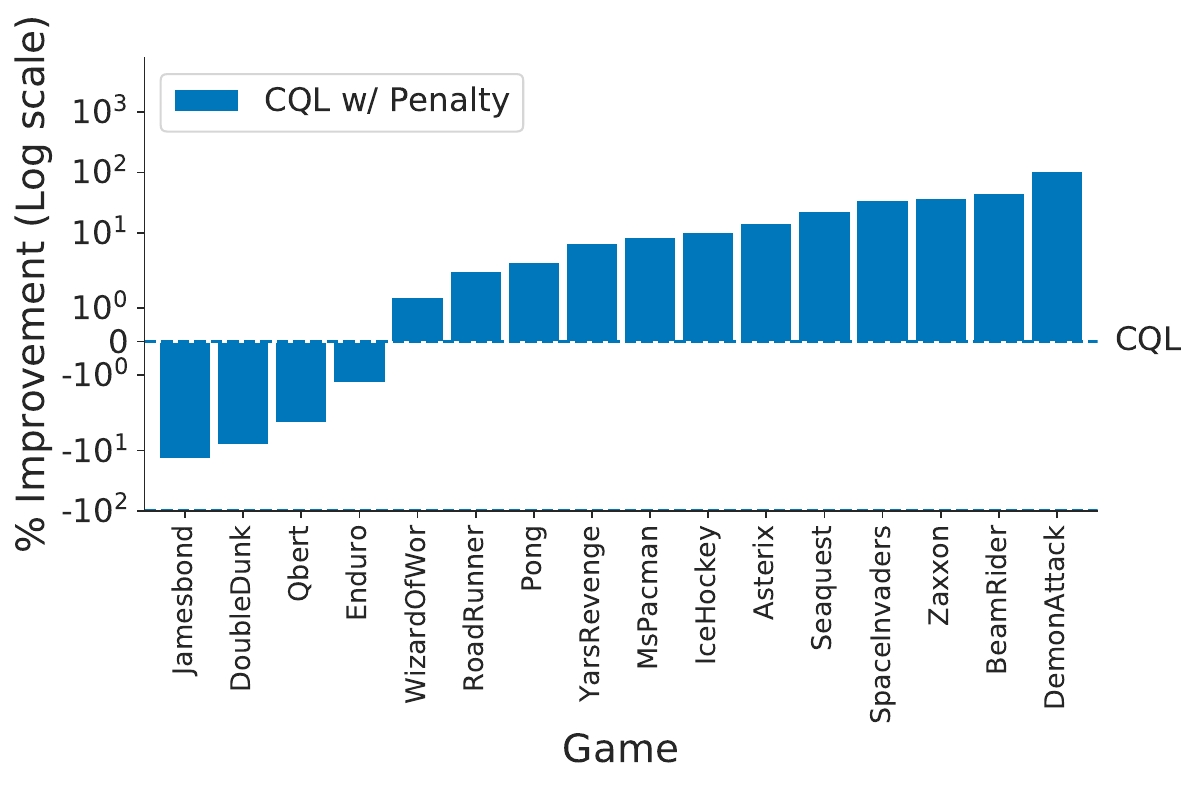}
    \vspace{-5pt}
    \caption{Offline CQL with $\gL_p(\Phi)$.}
    \end{subfigure}
    \caption{Performance improvement of (a) offline DQN and (b) offline CQL with the $\gL_p(\Phi)$ penalty on 20\% Atari dataset, i.e., the dataset referred to as \textbf{4x} large in Figure~\ref{fig:offline_problem}.}
\vspace{-10pt}
\end{figure}

\subsubsection{Offline RL: CQL With $L_p(\Phi)$ Penalty}

\begin{figure}[H]
\centering
\vspace{-5pt}
\includegraphics[width=0.99\linewidth]{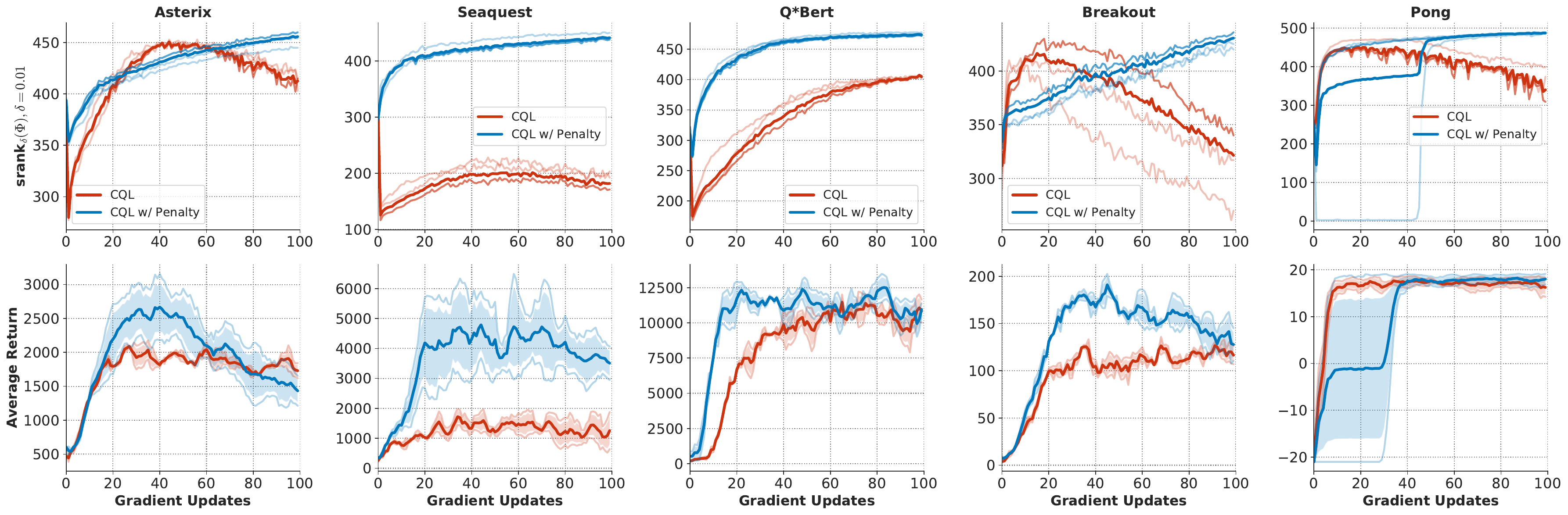}
\caption{\textbf{Effective rank values with the penalty $L_p(\Phi)$ on CQL.} Trends in effective rank and performance for offline DQN. Note that the performance of CQL with penalty is generally better than vanilla CQL and that the penalty (blue) is effective in increasing the values of effective rank. We report performance at the end of 100 epochs, as per the protocol set by \citet{agarwal2020optimistic} in Figure~\ref{fig:atari_offline_results}.}
\end{figure}

\subsection{Data-Efficient Online RL:  Rainbow}\label{sec:fix_rainbow_results}

\subsubsection{Rainbow With $L_p(\Phi)$ Penalty: Rank Plots}
\begin{figure}[H]
\centering
\vspace{-5pt}
\includegraphics[width=0.85\linewidth]{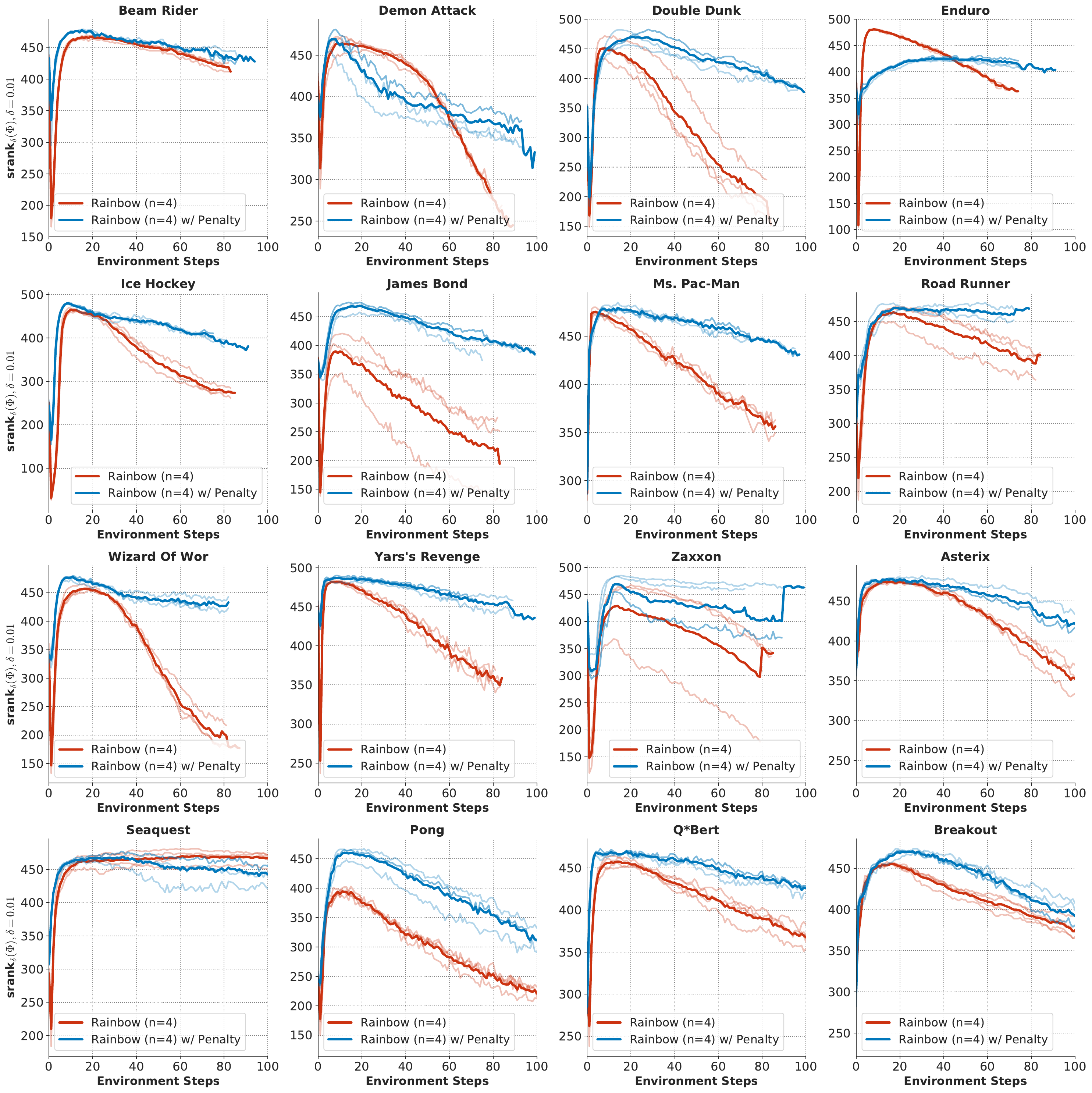}
\vspace{-5pt}
\caption{\textbf{Effective rank values with the penalty on Rainbow in the data-efficient online RL setting.} Trends in effective rank and performance for online Rainbow, where distribution shift effects are generally removed. Note that the performance of DQN with penalty is generally better than DQN and that the penalty (blue) is effective in increasing the values of effective rank in most cases. Infact in \textsc{Pong}, where the penalty is not effective in increasing rank, we observe suboptimal performance (blue \vs red).}
\vspace{-5pt}
\end{figure}

\subsubsection{Rainbow With $L_p(\Phi)$ Penalty: Performance}
\begin{wrapfigure}{r}{0.54\linewidth}
\centering
\vspace{-20pt}
\includegraphics[width=\linewidth]{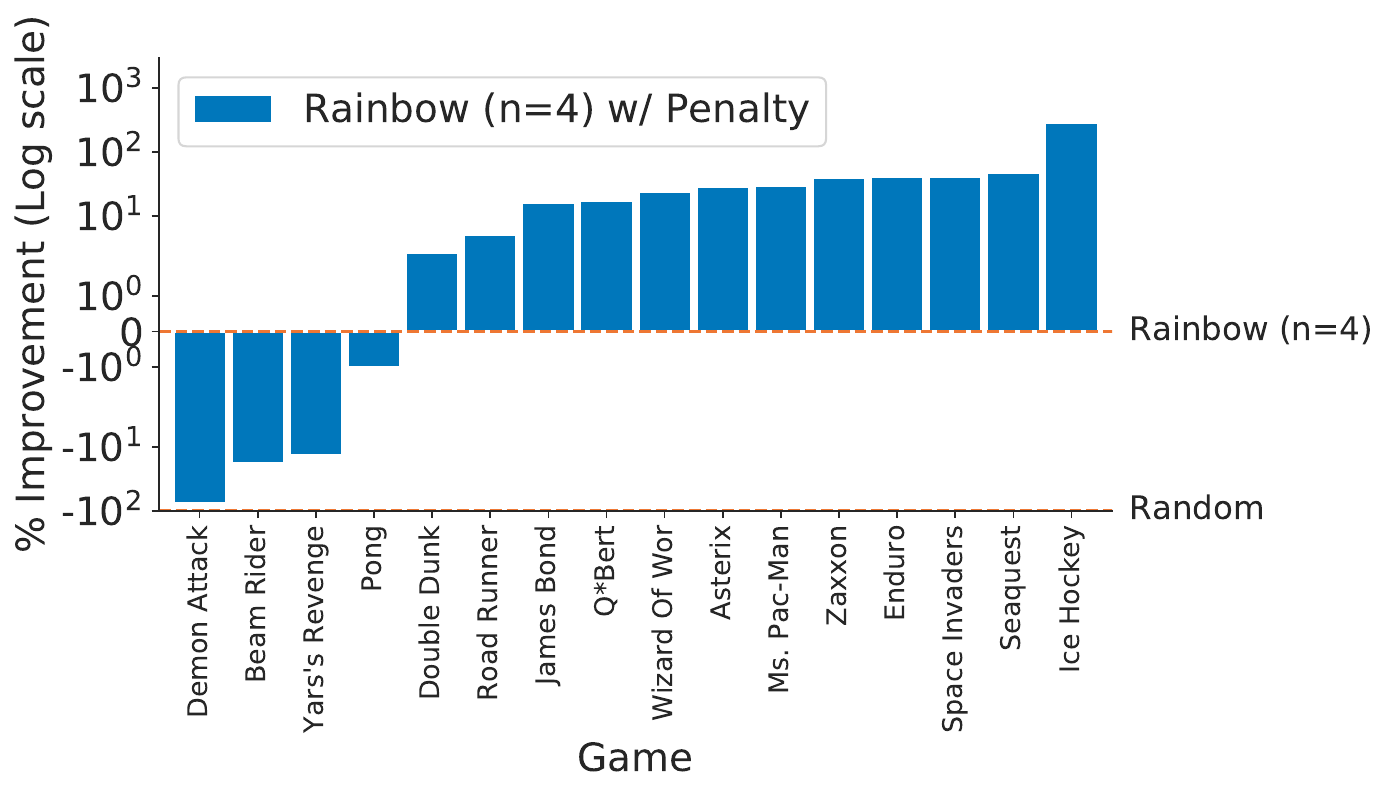}
\vspace{-10pt}
\caption{Performance of Rainbow $(n=4)$ and Rainbow $(n=4)$ with the $\gL_p(\Phi)$ penalty (Equation~\ref{eqn:penalties}. Note that the penalty improves on the base Rainbow in \textbf{12/16} games.}
\vspace{-20pt}
\end{wrapfigure}

In this section, we present additional results for supporting the hypothesis that preventing rank-collapse leads to better performance. In the first set of experiments, we apply the proposed $\gL_p$ penalty to Rainbow  in the data-efficient online RL setting $(n=4)$. In the second set of experiments, we present evidence for prevention of rank collapse by comparing rank values for different runs.

As we will show in the next section, the state-of-the-art Rainbow~\citep{hessel2018rainbow} algorithm also suffers form rank collapse in the data-efficient online RL setting when more updates are performed per gradient step. In this section, we applied our penalty $\gL_p$ to Rainbow with $n=4$, and obtained a median \textbf{20.66\%} improvement on top of the base method. This result is summarized below.

\begin{figure}[H]
\centering
\includegraphics[width=0.95\linewidth]{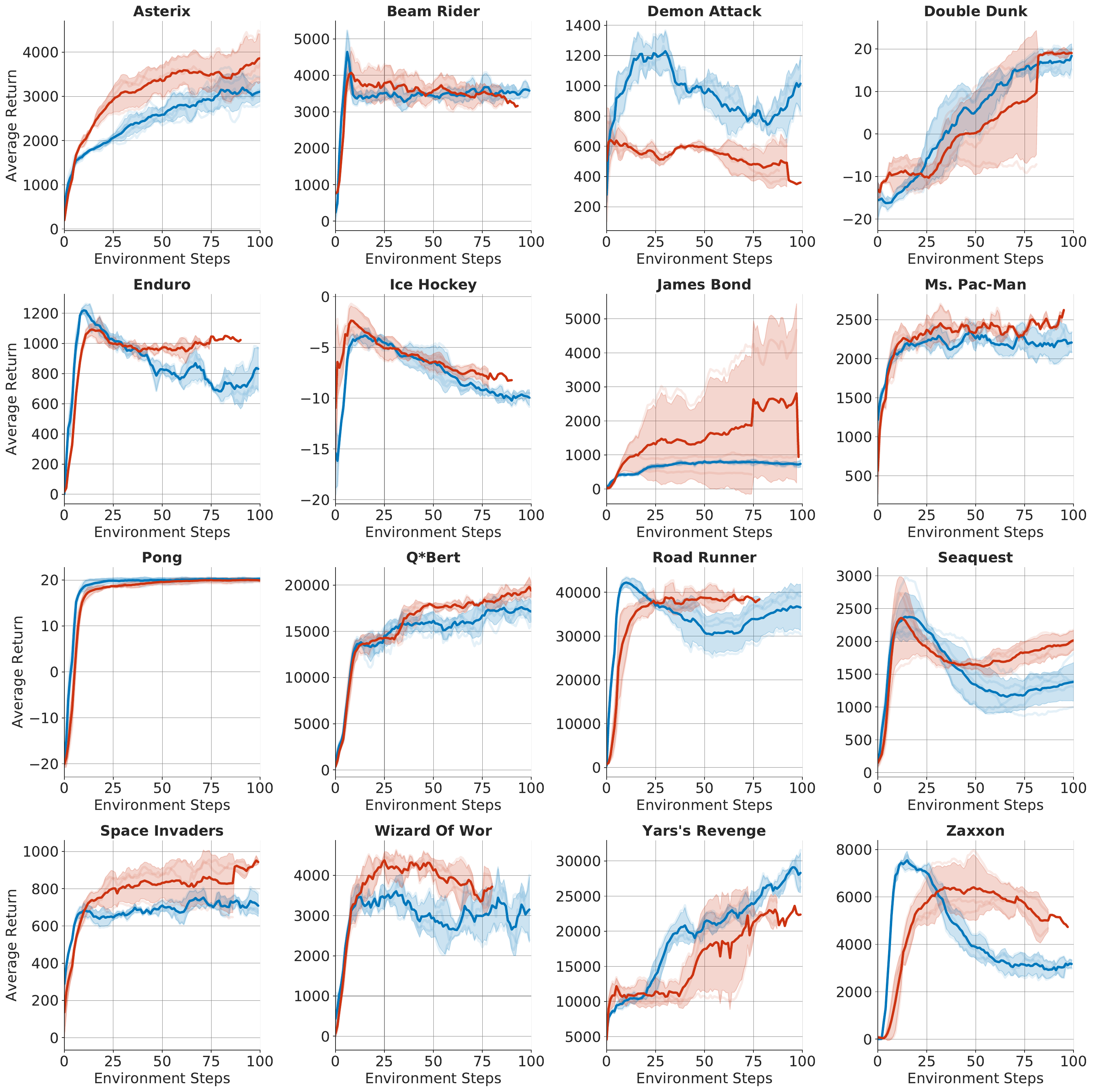}
\caption{Learning curves with $n=4$ gradient updates per environment step for Rainbow\textbf{(Blue)} and Rainbow  with the $\gL_p(\Phi)$ penalty (Equation~\ref{eqn:penalties}) \textbf{(Red)} on 16 games , corresponding to the bar plot above. One unit on the x-axis is equivalent to 1M environment steps.}
\end{figure}

\vspace{-0.2cm}
\subsection{Relaxing the Normality Assumption in Theorem~\ref{thm:self_distillation}}
\label{app:eigval_srank}

We can relax the normality assumption on $\bS$ in Theorem~\ref{thm:self_distillation}. An analogous statement holds for non-normal matrices $\bS$ for a slightly different notion of effective rank, denoted as $\srank_{\delta, \lambda}(\bM_k)$, that utilizes eigenvalue norms instead of singular values. Formally, let $\lambda_1(\bM_k), \cdots, \lambda_2(\bM_k), \cdots$ be the (complex) eigenvalues of $\bM_k$ arranged in decreasing order of their norms, \ie, $|\lambda_1(\bM_k)| \geq |\lambda_2(\bM_k)| \geq \cdots$, then,
\begin{equation*}
    \small{\srank_{\delta, \lambda}(\bM_k) = \min \left\{ k: \frac{\sum_{i=1}^k |\lambda_{i}(\bM_k)|}{\sum_{i=1}^{d} |\lambda_i(\bM_k)|} \geq 1 - \delta \right\}.}
\end{equation*}
A statement essentially analogous to Theorem~\ref{thm:self_distillation} suggests that in this general case, 
$\srank_{\delta, \lambda}(\bM_k)$ decreases for \textit{all} (complex) diagonalizable matrices $\bS$, which is the set of almost all matrices of size $\text{dim}(\bS)$. Now, if $\bS$ is approximately normal, \ie when $|\sigma_i(\bS) - |\lambda_i(\bS)||$ is small, then the result in Theorem 4.1 also holds approximately as we discuss at the end of Appendix~\ref{app:self_distill_proofs}.

We now provide empirical evidence showing that the trend in the values of effective rank computed using singular values, $\srank_\delta(\Phi)$ is almost identical to the trend in the effective rank computed using normalized eigenvalues, $\srank_{\delta, \lambda}(\Phi)$. Since eigenvalues are only defined for a square matrix $\Phi$, in practice, we use a batch of $d = \text{dim}(\phi(\bs, \ba))$ state-action pairs for computing the eigenvalue rank and compare to the corresponding singular value rank in Figures~\ref{fig:rank_notions} and \ref{fig:gridworld_eigen_rank}.

\begin{figure}[h]
    \centering
    \includegraphics[width=\linewidth]{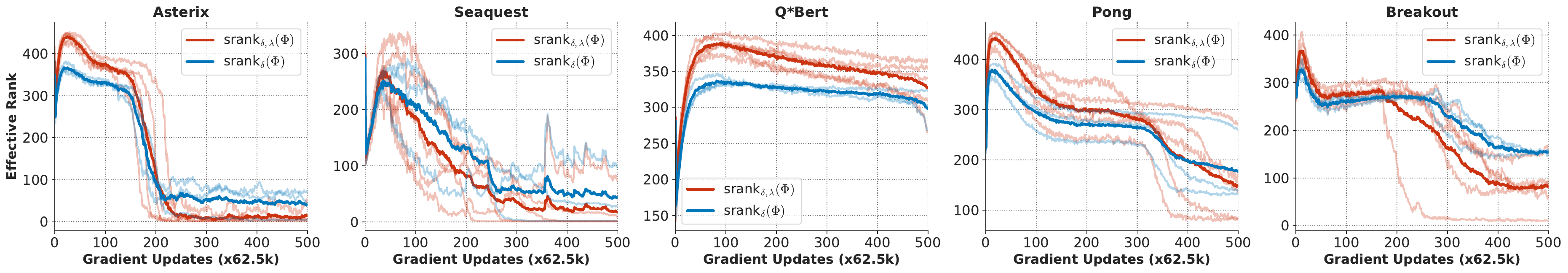}
    \caption{{\textbf{Comparing different measures of effective rank} on a run of offline DQN in the 5\% replay setting, previously studied in Figure~\ref{fig:offline_problem_cql_app}.}}
    \label{fig:rank_notions}
\end{figure}

{\textbf{Connection to Theorem~\ref{thm:self_distillation}.} We computed the effective rank of $\Phi$ instead of $\bS$, since $\bS$ is a theoretical abstraction that cannot be computed in practice as it depends on the Green's kernel~\citep{duffy2015green} obtained by assuming that the neural network behaves as a kernel regressor. Instead, we compare the different notions of ranks of $\Phi$ since $\Phi$ is the practical counterpart for the matrix, $\bS$, when using neural networks (as also indicated by the analysis in Section~\ref{sec:grad_descent}). In fact, on the gridworld (Figure~\ref{fig:gridworld_eigen_rank}), we experiment with a feature $\Phi$ with dimension equal to the number of state-action pairs, \ie $\text{dim}(\phi(\bs, \ba)) = |\mathcal{S}||\mathcal{A}|$, with the same number of parameters as a kernel parameterization of the Q-function: $Q(\bs, \ba) = \sum_{\bs', \ba'} \bw(\bs', \ba') k(\bs, \ba, \bs', \ba')$. This can also be considered as performing gradient descent on a ``wide'' linear network , and we measure the feature rank while observing similar rank trends.}

{Since we do not require the assumption that $\bS$ is normal in Theorem~\ref{thm:self_distillation} to obtain a decreasing trend in $\srank_{\delta, \lambda}(\Phi)$, and we find that in practical scenarios (Figures \ref{fig:rank_notions} and \ref{fig:gridworld_eigen_rank}), $\srank_{\delta}(\Phi) \approx \srank_{\delta, \lambda}(\Phi)$ with an extremely similar qualitative trend we believe that Theorem~\ref{thm:self_distillation} still explains the rank-collapse practically observed in deep Q-learning and is not vacuous.}

\begin{figure}[h]
    \centering
    \includegraphics[width=0.23\textwidth]{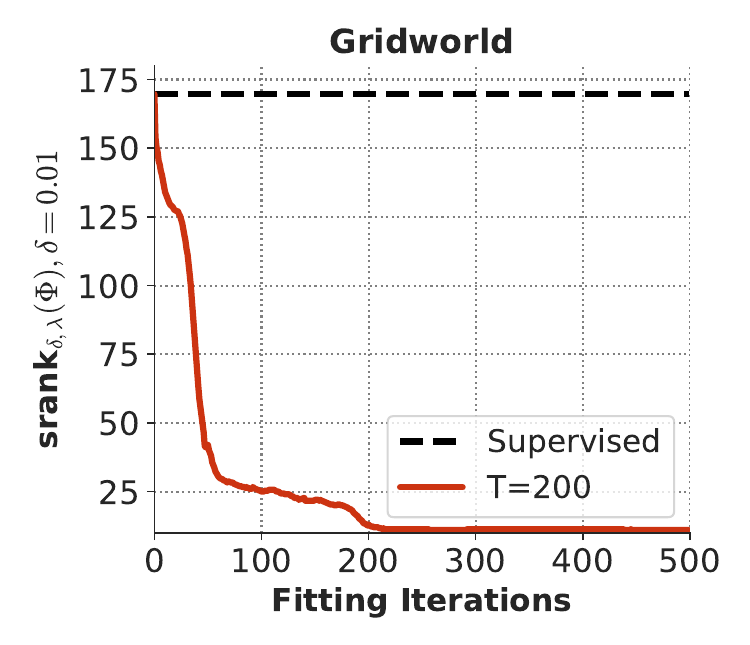}
    \includegraphics[width=0.23\textwidth]{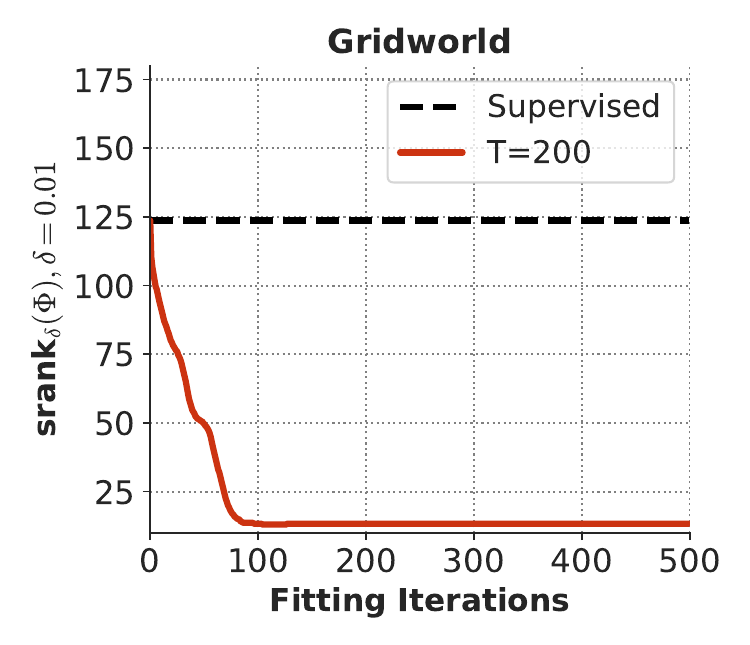}
    ~\vline~
    \includegraphics[width=0.23\textwidth]{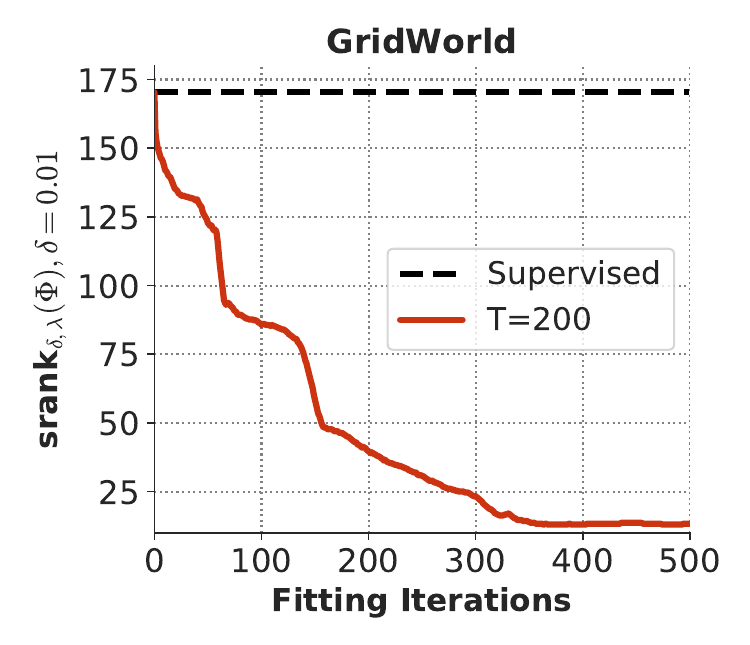}
    \includegraphics[width=0.23\textwidth]{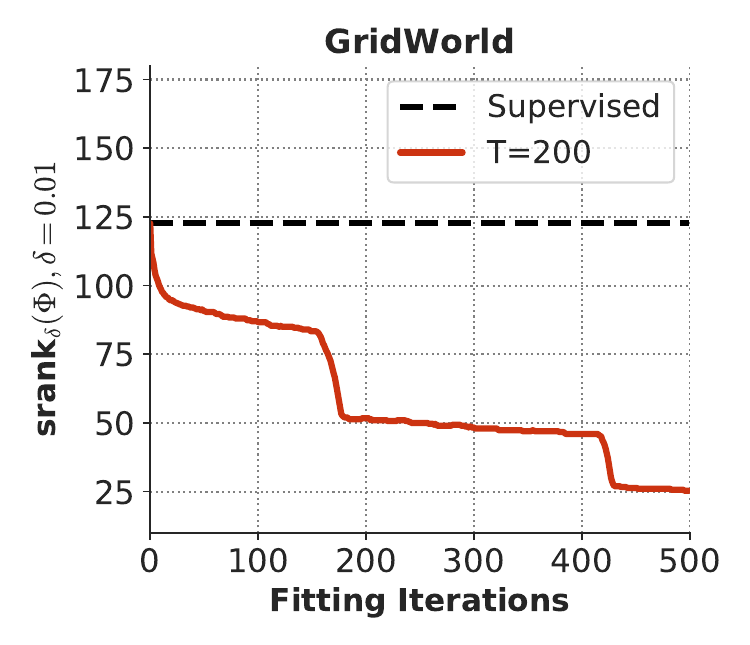}
    \caption{{Comparison of $\srank_{\delta, \lambda}(\Phi)$ and $\srank_{\delta}(\Phi)$ on the gridworld in the offline  setting \textbf{(left)} and the online setting \textbf{(right)}, when using a deep linear network with $\text{dim}(\phi(\bs, \ba)) = |\mathcal{S}| |\mathcal{A}|$. Note that both notions of effective rank exhibit a similar decreasing trend and are closely related to each other.}}
    \label{fig:gridworld_eigen_rank}
\end{figure}

{\subsection{Normalized Plots for Figure~\ref{fig:online_problem}/ Figure~\ref{fig:online_5_games_atari}}
\label{app:normalized_plots}
In this section, we provide a set of normalized $\srank$ and performance trends for Atari games (the corresponding unnormalized plots are found in Figure~\ref{fig:online_5_games_atari}). In these plots, each unit on the x-axis is equivalent to one gradient update, and so since $n=8$ prescribes $8\times$ many updates as compared to $n=1$, it it runs for $8\times$ as long as $n=1$. These plots are in Figure~\ref{fig:online_normalized_atari}.}

\begin{figure}[h]
    \centering
    \includegraphics[width=\linewidth]{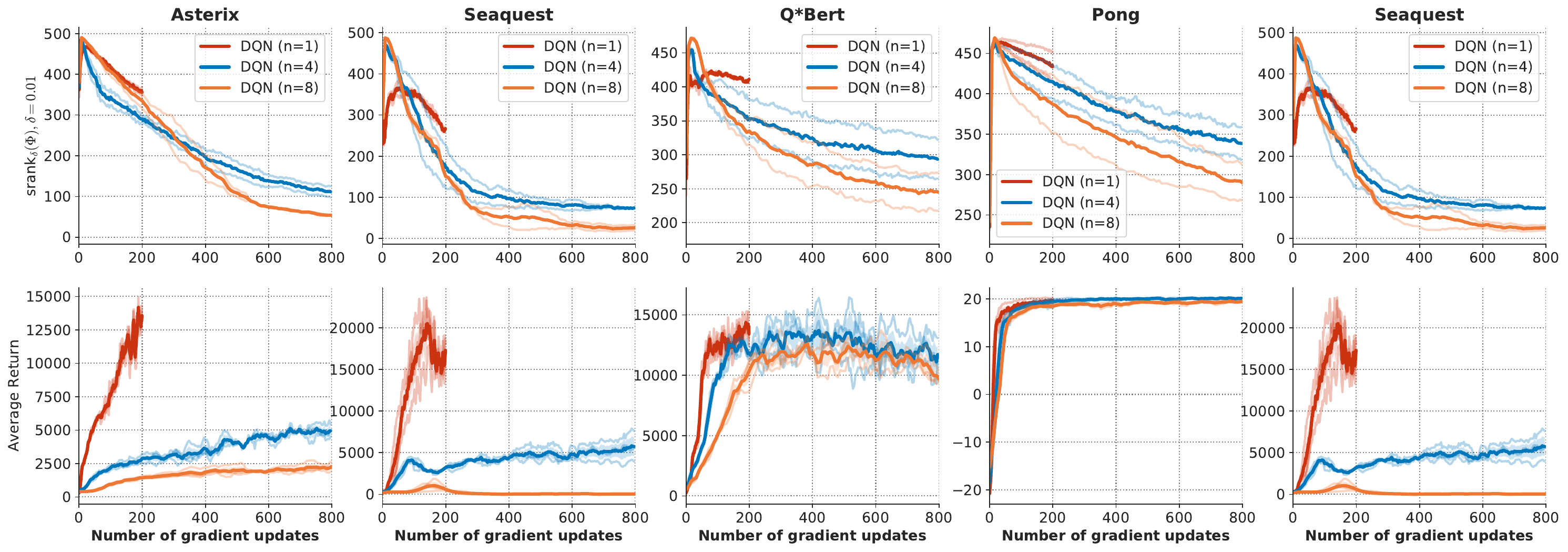}
    \caption{{\textbf{Rank collapse in DQN as a function of gradient updates on the x-axis} for five Atari games in the data-efficient online RL setting. This setting was previously studied in Figure~\ref{fig:online_5_games_atari}}. Note that lesser number of updates per unit amount data, \ie smaller values of $n$ possess larger $\srank_\delta$ values.}
    \label{fig:online_normalized_atari}
    \vspace{-0.15cm}
\end{figure}

{Note that the trend that effective rank decreases with larger $n$ values also persists when rescaling the x-axis to account for the number of gradient steps, in all but one game. This is expected since it tells us that performing bootstrapping based updates in the data-efficient setting (larger $n$ values) still leads to more aggressive rank drop as updates are being performed on a relatively more static dataset for larger values of $n$.}


\section{Hyperparameters \& Experiment Details}\label{sec:hyperparameters}

\subsection{Atari Experiments}
We follow the experiment protocol from \citet{agarwal2020optimistic} for all our  experiments including hyperparameters and agent architectures provided in Dopamine and report them for completeness and ease of reproducibility in Table~\ref{table:hyperparams}. We only use hyperparameter selection over the regularization experiment $\alpha_p$ based on results from 5 Atari games~(Asterix, Seaquest, Pong, Breakout and Seaquest). We will also open source our code to further aid in reproducing our results.

\begin{table*}[t]
\small
\caption{Hyperparameters used by the offline and online RL agents in our experiments.}
\centering
\begin{tabular}{lrr}
\toprule
Hyperparameter & \multicolumn{2}{r}{Setting (for both variations)} \\
\midrule
Sticky actions && Yes        \\
Sticky action probability && 0.25\\
Grey-scaling && True \\
Observation down-sampling && (84, 84) \\
Frames stacked && 4 \\
Frame skip~(Action repetitions) && 4 \\
Reward clipping && [-1, 1] \\
Terminal condition && Game Over \\
Max frames per episode && 108K \\
Discount factor && 0.99 \\
Mini-batch size && 32 \\
Target network update period & \multicolumn{2}{r}{every 2000 updates} \\
Training environment steps per iteration && 250K \\
Update period every && 4 environment steps \\
Evaluation $\epsilon$ && 0.001 \\
Evaluation steps per iteration && 125K \\
$Q$-network: channels && 32, 64, 64 \\
$Q$-network: filter size && $8\times8$, $4\times4$, $3\times3$\\
$Q$-network: stride && 4, 2, 1\\
$Q$-network: hidden units && 512 \\
Hardware && Tesla P100 GPU \\
\midrule
Hyperparameter & Online & Offline\\
\midrule
Min replay size for sampling & 20,000 & - \\
Training $\epsilon$~(for $\epsilon$-greedy exploration) & 0.01 & - \\
$\epsilon$-decay schedule & 250K steps & - \\
Fixed Replay Memory & No & Yes \\
Replay Memory size~(Online) &  1,000,000 steps & -- \\
Fixed Replay size~(5\%) & -- & 2,500,000 steps \\
Fixed Replay size~(20\%) & -- & 10,000,000 steps \\
Replay Scheme & Uniform & Uniform \\
Training Iterations & 200 & 500 \\
\bottomrule
\end{tabular}
\label{table:hyperparams}
\end{table*}

\textbf{Evaluation Protocol}. Following \citet{agarwal2020optimistic}, the Atari environments used in our experiments are stochastic due to sticky actions, \ie\ there is 25\% chance at every time step that the environment will execute the agent's previous action again, instead of the agent's new action. All agents~(online or offline) are compared using the best evaluation score~(averaged over 5 runs) achieved during training where the evaluation is done online every training iteration using a $\epsilon$-greedy policy with $\epsilon=0.001$. We report offline training results with same hyperparameters over 5 random seeds of the DQN replay data collection, game simulator and network initialization.

{\bf Offline Dataset}. As suggested by \citet{agarwal2020optimistic}, we randomly subsample the DQN Replay dataset containing 50 millions transitions to create smaller offline datasets with the same data distribution as the original dataset. We use the 5\% DQN replay dataset for most of our experiments. We also report results using the 20\% dataset setting (4x larger) to show that our claims are also valid even when we have higher coverage over the state space. 

{\bf Optimizer related hyperparameters}. For existing off-policy agents, step size and optimizer were taken as published. We used the DQN~(Adam) algorithm for all our experiments, given its superior performance over the DQN~(Nature) which uses RMSProp, as reported by \citet{agarwal2020optimistic}.

{\bf Atari 2600 games used}. For all our experiments in \Secref{sec:problem}, we used the same set of 5 games as utilized by \citet{agarwal2020optimistic, bellemare2017distributional} to present analytical results. For our empirical evaluation in Appendix~\ref{app:fix_results}, we use the set of games employed by \citet{fedus2020revisiting} which are deemed suitable for offline RL  by \citet{gulcehre2020rl}. Similar in spirit to \citet{gulcehre2020rl}, we use the set of 5 games used for analysis for hyperparameter tuning for offline RL methods.

\textbf{5 games subset}: \textsc{Asterix}, \textsc{Qbert}, \textsc{Pong}, \textsc{Seaquest}, \textsc{Breakout}

\textbf{16 game subset}: In addition to 5 games above, the following 11 games: \textsc{Double Dunk}, \textsc{James Bond}, \textsc{Ms. Pacman}, \textsc{Space Invaders}, \textsc{Zaxxon}, \textsc{Wizard of Wor}, \textsc{Yars' Revenge}, \textsc{Enduro}, \textsc{Road Runner}, \textsc{BeamRider}, \textsc{Demon Attack}

\subsection{Gridworld Experiments}
We use the gridworld suite from \citet{fu19diagnosing} to obtain gridworlds for our experiments. All of our gridworld results are computed using the $16\times16$ \textsc{Grid16smoothobs} environment, which consists of a 256-cell grid, with walls arising randomly with a probability of 0.2. Each state allows 5 different actions (subject to hitting the boundary of the grid): move left, move right, move up, move down and no op. The goal in this environment is to minimize the cumulative discounted distance to a fixed goal location where the discount factor is given by $\gamma = 0.95$. The features for this Q-function are given by randomly chosen vectors which are smoothened spatially in a local neighborhood of a grid cell $(x, y)$.

We use a deep Q-network with two hidden layers of size $(64, 64)$, and train it using soft Q-learning with entropy coefficient of 0.1, following the code provided by authors of \citet{fu19diagnosing}. We use a first-in-first out replay buffer of size 10000 to store past transitions. 

\section{Proofs for Section~\ref{sec:self_distill}}
\label{app:self_distill_proofs}
In this section, we provide the technical proofs from Section~\ref{sec:self_distill}. We first derive a solution to optimization problem Equation~\ref{eqn:self_dist_problem} and show that it indeed comes out to have the form described in Equation~\ref{eqn:unfolding}. We first introduce some notation, including definition of the kernel $\bG$ which was used for this proof. This proof closely follows the proof from \citet{mobahi2020self}.

\paragraph{Definitions.} For any universal kernel $u$, the Green's function~\citep{duffy2015green} of the linear kernel operator $L$ given by: $\left[ L \bQ \right](\bs, \ba) := \sum_{(\bs', \ba')} u((\bs, \ba), (\bs', \ba')) \bQ(\bs', \ba')$ is given by the function $g((\bs, \ba), (\bs', \ba'))$ that satisfies:
\begin{equation}
    \sum_{(\bs, \ba)} u((\bs, \ba), (\bs', \ba'))~~ g((\bs', \ba'), (\bar{\bs}, \bar{\ba})) = \delta((\bs, \ba) - (\bar{\bs}, \bar{\ba})), 
\end{equation}
where $\delta$ is the Dirac-delta function. Thus, Green's function can be understood as a kernel that ``inverts'' the universal kernel $u$ to the identity (Dirac-delta) matrix. We can then define the matrix $\bG$ as the matrix of vectors $\bg_{(\bs, \ba)}$ evaluated on the training dataset, $\mathcal{D}$, however note that the functional $\bg_{(\bs, \ba)}$ can be evaluated for other state-action tuples, not present in $\mathcal{D}$.
\begin{equation}
    \bG((\bs_i, \ba_i), (\bs_j, \ba_j)) := \bg((\bs_i, \ba_i), (\bs_j, \ba_j)) ~~~~ \text{and}~~~ \bg_{(\bs, \ba)}[i] = \bg((\bs, \ba), (\bs_i, \ba_i))~~ \forall (\bs_i, \ba_i) \in \mathcal{D}.   
\end{equation}

\begin{lemma}
The solution to Equation~\ref{eqn:self_dist_problem} is given by Equation~\ref{eqn:unfolding}.
\end{lemma}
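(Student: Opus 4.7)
The plan is to apply a representer-theorem style argument: since the data-fit term only penalizes $Q$ at the training inputs, while the regularizer is a symmetric bilinear form induced by the universal kernel $u$, every optimum can be expressed in the span of the Green's function evaluated at the training points. That is, I will look for solutions of the form
\begin{equation*}
    Q(\bs,\ba) \;=\; \sum_{i=1}^{|\mathcal{D}|} \alpha_i \, g\bigl((\bs,\ba),(\bs_i,\ba_i)\bigr) \;=\; \bg_{(\bs,\ba)}^T \alpha,
\end{equation*}
where $g$ is the Green's function of the kernel operator $L$ induced by $u$, and $\alpha \in \mathbb{R}^{|\mathcal{D}|}$ are the coefficients to be determined. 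Justifying this reduction is the first substantive step: I would argue (as in \citet{mobahi2020self}) that the orthogonal complement of this span within $\mathcal{Q}$ leaves the data-fit term unchanged while only increasing the quadratic regularizer, so the minimizer must lie in the span.

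Next I will substitute this ansatz into the objective and simplify both terms. For the data-fit term, evaluating $Q$ at $(\bs_i,\ba_i)$ gives $[\bG\alpha]_i$, so the term collapses to $\|\bG\alpha - \by_k\|_2^2$. For the regularizer, I will use the defining property of the Green's function,
\begin{equation*}
    \sum_{(\bs,\ba)} u\bigl((\bs,\ba),(\bs',\ba')\bigr) \, g\bigl((\bs',\ba'),(\bar\bs,\bar\ba)\bigr) \;=\; \delta\bigl((\bs,\ba) - (\bar\bs,\bar\ba)\bigr),
\end{equation*}
to telescope one of the two kernel sums against one copy of $g$, which leaves $\sum_{i,j} \alpha_i \alpha_j \, g((\bs_i,\ba_i),(\bs_j,\ba_j)) = \alpha^T \bG \alpha$. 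The objective thus becomes the finite-dimensional quadratic $\|\bG\alpha - \by_k\|^2 + c\,\alpha^T \bG \alpha$.

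Setting the gradient in $\alpha$ to zero and using that $\bG$ is symmetric positive definite (a property of Green's functions of universal kernels) yields $\bG(\bG + c\bI)\alpha = \bG\,\by_k$, so $\alpha = (c\bI + \bG)^{-1}\by_k$. Substituting back gives the stated point-wise solution $Q_{k+1}(\bs,\ba) = \bg_{(\bs,\ba)}^T (c\bI + \bG)^{-1}\by_k$, and in particular, when evaluated on $\mathcal{D}$, $\bQ_{k+1} = \bG(c\bI + \bG)^{-1}\by_k = \bA\,\by_k$. To get the unrolled form in Equation~\ref{eqn:unfolding}, I will then plug in the FQI target $\by_k = \bR + \gamma P^\pi \bQ_{k-1}$, iterate the recursion $\bQ_{k+1} = \bA\bR + \gamma \bA P^\pi \bQ_k$ down to the initial condition $\bQ_0 = \mathbf{0}$, and collect the resulting geometric-style series into $\bA \bM_k \bR$.

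The main obstacle is the representer-step: while it is standard in RKHS, care is needed here because the regularizer is written through $u$ rather than directly as an RKHS norm, so I must explicitly use Green's-function duality between $u$ and $g$ to convert the regularizer into a quadratic form in $\alpha$ and to rule out optimal components outside the span. The remaining algebra (solving the quadratic and unrolling the recursion) is routine once this correspondence is established.
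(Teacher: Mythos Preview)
Your proposal is correct and reaches the same closed form $Q_{k+1}(\bs,\ba) = \bg_{(\bs,\ba)}^T(c\bI+\bG)^{-1}\by_k$, but the route is genuinely different from the paper's. The paper does not start from a representer ansatz; instead it applies the functional-derivative (Euler--Lagrange) principle directly to the objective, obtaining the stationarity condition
\[
\sum_{\bx_i\in\mathcal{D}}\delta(\bx-\bx_i)\bigl(\bQ_k(\bx_i)-\by_k(\bx_i)\bigr)+c\sum_{\bx'}u(\bx,\bx')\bQ_k(\bx')=0,
\]
then uses the Green's-function identity to rewrite the Dirac deltas as $\sum_{\bx'}u(\bx,\bx')g(\bx',\bx_i)$, cancels the universal kernel $u$ on both sides, and thereby \emph{derives} the span-of-Green's-functions form with coefficients proportional to the residuals; evaluating at the training points then gives the linear system. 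Your approach instead \emph{assumes} the span form up front via a representer-theorem argument, reduces to the finite-dimensional quadratic $\|\bG\alpha-\by_k\|^2+c\,\alpha^T\bG\alpha$, and solves for $\alpha$. Both are standard for regularized kernel problems: the paper's variational route avoids having to separately justify that nothing is lost outside the span (it falls out of stationarity), while your route makes the finite-dimensional optimization explicit and the algebra cleaner; the obstacle you flag about justifying the representer step through the $u$--$g$ duality is real but is handled once you show, as you outline, that the regularizer collapses to $\alpha^T\bG\alpha$ and that any orthogonal component only increases it.
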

\begin{proof}
This proof closely follows the proof of Proposition 1 from \citep{mobahi2020self}. We revisit key aspects the key parts of this proof here.

We restate the optimization problem below, and solve for the optimum $\bQ_k$ to this equation by applying the functional derivative principle.
\begin{equation*}
\min_{\bQ \in \mathcal{Q}} J(\bQ) := \sum_{\bs_i, \ba_i \in \mathcal{D}} \left(\bQ(\bs_i, \ba_i) - y_{k}(\bs_i, \ba_i) \right)^2 + c \sum_{(\bs, \ba)} \sum_{(\bs', \ba')} u\mathbf{(}(\bs, \ba), (\bs', \ba')\mathbf{)} \bQ(\bs, \ba) \bQ(\bs', \ba').
\end{equation*}
The functional derivative principle would say that the optimal $\bQ_k$ to this problem would satisfy, for any other function $f$ and for a small enough $\varepsilon \rightarrow 0$,
\begin{equation}
    \forall f \in \mathcal{Q}:~~ \frac{\partial J(\bQ_k + \varepsilon f)}{\partial \varepsilon}\Big\vert_{\varepsilon = 0} = 0.
\end{equation}
By setting the gradient of the above expression to $0$, we obtain the following stationarity conditions on $\bQ_k$ (also denoting $(\bs_i, \ba_i) := \bx_i$) for brevity:
\begin{equation}
    \sum_{\bx_i \in \mathcal{D}} \delta(\bx - \bx_i) \left( \bQ_k(\x_i) - \by_k(\x_i)\right) + c \sum_{\x} u(\bx, \bx') \bQ_k(\x')  = 0.
\end{equation}
Now, we invoke the definition of the Green's function discussed above and utilize the fact that the Dirac-delta function can be expressed in terms of the Green's function, we obtain a simplified version of the above relation:   
\begin{equation}
    \sum_{\bx} u(\bx, \bx') \sum_{\bx_i \in \mathcal{D}} (\bQ_k(\bx_i) - \by_k(\bx_i)) g(\bx', \bx_i) = -c \sum_{\bx} u(\bx, \bx') \bQ_k(\bx').
\end{equation}
Since the kernel $u(\bx, \bx')$ is universal and positive definite, the optimal solution $\bQ_k(\bx)$ is given by:
\begin{equation}
    \bQ_k(\bs, \ba) = -\frac{1}{c} \sum_{(\bs_i, \ba_i) \in \mathcal{D}} (\bQ_k(\bs_i, \ba_i) - \by_k(\bs_i, \ba_i))~\cdot~ g((\bs, \ba), (\bs_i, \ba_i)).
\end{equation}
Finally we can replace the expression for residual error, $\bQ_k(\bs_i, \ba_i) - \by_k(\bs_i, \ba_i)$ using the green's kernel on the training data by solving for it in closed form, which gives us the solution in Equation~\ref{eqn:unfolding}. 
\begin{equation}
    \bQ_k(\bs, \ba) = -\frac{1}{c} \bg_{(\bs, \ba)}^T (\bQ_k - \by_k) = \bg_{(\bs, \ba)}^T (c\bI + \bG)^{-1} \by_k.
\end{equation}
\end{proof}

Next, we now state and prove a slightly stronger version of Theorem~\ref{thm:self_distillation} that immediately implies the original theorem. 


\begin{theorem}
\label{thm:self_distillation_full}
Let $\bS$ be a shorthand for $\bS = \gamma P^\pi \bA$ and assume $\bS$ is a normal matrix. Then there exists an infinite, strictly increasing sequence of fitting iterations, $(k_l)_{l=1}^\infty$ starting from $k_1 = 0$,
such that, for any two singular-values $\sigma_i(\bS)$ and $\sigma_j(\bS)$ of $\bS$ with $\sigma_i(\bS) \leq \sigma_j(\bS)$, 
\vspace{-0.03in}
\begin{equation}
    \forall~ l \in \mathbb{N}~~\text{and}~~l' \geq l,~~ \frac{\sigma_i(\bM_{k_{l^\prime}})}{\sigma_j(\bM_{k_{l^\prime}})} < \frac{\sigma_i(\bM_{k_l})}{\sigma_j(\bM_{k_l})} \leq \frac{\sigma_i(\bS)}{\sigma_j(\bS)}.
\vspace{-0.03in}
\end{equation}
Therefore, the effective rank of $\bM_k$ satisfies: $\srank_{\delta}(\bM_{k_{l^\prime}}) \leq \srank_\delta(\bM_{k_l})$. Furthermore,
\vspace{-0.03in}
\begin{equation}
    \forall~ l \in \mathbb{N}~~\text{and}~~t \geq k_l,~~ \frac{\sigma_i(\bM_t)}{\sigma_j(\bM_t)} < \frac{\sigma_i(\bM_{k_l})}{\sigma_j(\bM_{k_l})} + \mathcal{O}\left(\left(\frac{\sigma_i(\bS)}{\sigma_j(\bS)}\right)^{k_l}\right).
\vspace{-0.03in}
\end{equation}
Therefore, the effective rank of $\bM_t$, $\srank_{\delta}(\bM_t)$,  outside the chosen subsequence is also controlled above by the effective rank on the subsequence $(\srank_{\delta}(\bM_{k_l}))_{l=1}^\infty$.
\end{theorem}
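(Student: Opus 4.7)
The plan is to exploit the normality assumption on $\bS = \gamma P^\pi \bA$ to reduce the statement to a scalar analysis in terms of the eigenvalues of $\bS$, and then identify a subsequence of iterates along which the pairwise ratios of singular values of $\bM_k$ decrease.

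First I would compute $\bM_k$ in closed form. Iterating the recurrence $\bM_k = \bI + \bS \bM_{k-1}$ with $\bM_0 = \bI$ yields the geometric partial sum $\bM_k = \sum_{j=0}^{k} \bS^j = (\bI - \bS^{k+1})(\bI - \bS)^{-1}$. Since $\bS$ is normal with spectral decomposition $\bS = \bU \Lambda \bU^{*}$, the matrix $\bM_k$ is also normal with the same eigenvectors, $\bM_k = \bU \, \mathrm{diag}(m_r(k)) \, \bU^{*}$, where $m_r(k) = (1 - \lambda_r^{k+1})/(1 - \lambda_r)$. Normality then gives $\sigma_r(\bM_k) = |m_r(k)|$ and $\sigma_r(\bS) = |\lambda_r|$, which reduces the theorem to per-pair scalar statements about the ratios $\rho_k := |m_i(k)|/|m_j(k)|$ for each pair with $|\lambda_i| \le |\lambda_j|$.

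Second, I would dispatch the positive semi-definite case, where all $\lambda_r \ge 0$, the absolute values drop, and $\rho_k$ becomes a quotient of truncated real geometric sums. Clearing denominators in $\rho_{k+1} < \rho_k$ reduces to a polynomial inequality in $\lambda_i, \lambda_j$ that follows directly from $0 \le \lambda_i \le \lambda_j < 1$; this gives strict monotonicity at every $k$, so $(k_l)_{l=1}^{\infty} = \mathbb{N}$, matching the second assertion. The upper bound $\rho_k \le |\lambda_i|/|\lambda_j|$ on the eventual part of the subsequence follows from the same manipulation after noting that $\rho_k \to |1-\lambda_j|/|1-\lambda_i|$ as $k \to \infty$ and interpolating.

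Third, for the general normal case the phases $\arg(\lambda_r^{k+1})$ rotate and monotonicity at individual $k$ can fail. Here I would appeal to a joint equidistribution argument (Weyl equidistribution, or a Kronecker-style pigeonhole on the torus applied to the vector of phases $(\arg \lambda_1, \ldots, \arg \lambda_n)$) to extract an infinite subsequence $(k_l)$ along which every $\lambda_r^{k_l + 1}$ has phase arbitrarily close to $0$. On this aligned subsequence $|1 - \lambda_r^{k_l+1}|$ approximates $1 - |\lambda_r|^{k_l + 1}$ to arbitrary precision, so the PSD calculation transfers and yields strict decrease plus the upper bound. The effective rank statement follows because $\srank_\delta$ is a monotone functional of the sorted sequence of pairwise singular-value ratios, so pointwise decrease across every pair forces $\srank_\delta(\bM_{k_{l'}}) \le \srank_\delta(\bM_{k_l})$.

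Finally, for the off-subsequence bound, I would write $m_r(t) = 1/(1-\lambda_r) - \lambda_r^{t+1}/(1-\lambda_r)$ and expand $\rho_t - \rho_{k_l}$ to first order in the residuals $\lambda_r^{t+1}$; because $|\lambda_i| \le |\lambda_j|$, the dominant error term scales as $(|\lambda_i|/|\lambda_j|)^{k_l}$ uniformly in $t \ge k_l$, yielding the claimed $\mathcal{O}((\sigma_i(\bS)/\sigma_j(\bS))^{k_l})$ perturbation. The main obstacle is the general normal case: ensuring that a \emph{single} subsequence works uniformly over all pairs $(i,j)$ requires the joint equidistribution step to handle the phases of all eigenvalues simultaneously, which may force a sparser subsequence; once this alignment is secured, the remaining monotonicity and perturbation estimates are essentially routine.
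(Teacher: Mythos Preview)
Your proposal is correct and follows essentially the same strategy as the paper: reduce to a scalar analysis via the spectral decomposition of the normal matrix $\bS$, handle the PSD case by direct monotonicity of the truncated geometric-sum ratios, construct the subsequence in the general normal case by simultaneous Diophantine approximation of the eigenvalue phases (the paper uses Dirichlet's theorem plus an LCM of the denominators rather than Weyl/Kronecker, but the mechanism is the same), and control off-subsequence iterates by a perturbation around the phase-aligned values. The only notable difference is that for the off-subsequence bound the paper writes down an explicit monotonically decreasing upper envelope $f(k) = \tfrac{1+\sigma_i(\bS)^k}{|1-\sigma_j(\bS)^k|}\cdot\tfrac{|1-\lambda_j(\bS)|}{|1-\lambda_i(\bS)|}$ and compares $f(k_l)$ to $\sigma_i(\bM_{k_l})/\sigma_j(\bM_{k_l})$, which yields the $\mathcal{O}\big((\sigma_i(\bS)/\sigma_j(\bS))^{k_l}\big)$ gap a bit more directly than your first-order expansion.
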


To prove this theorem, we first show that for any two fitting iterations, $t < t'$, if $\bS^t$ and $\bS^{t'}$ are positive semi-definite, the ratio of singular values and the effective rank decreases from $t$ to $t'$. As an immediate consequence, this shows that when $\bS$ is positive semi-definite, the effective rank decreases at \emph{every} iteration, \ie by setting $k_l=l$ (Corollary \ref{lemma:all_symmetric}). 

To extend the proof to arbitrary normal matrices, we show that for any $\bS$, a sequence of fitting iterations $(k_l)_{l=1}^\infty$ can be chosen such that $\bS^{k_l}$ is (approximately) positive semi-definite. For this subsequence of fitting iterations, the ratio of singular values and effective rank also decreases. Finally, to control the ratio and effective rank on fitting iterations $t$ outside this subsequence, we construct an upper bound on the ratio $f(t)$: $\frac{\sigma_i(\bM_t)}{\sigma_j(\bM_t)} < f(t)$, and relate this bound to the ratio of singular values on the chosen subsequence.

\begin{lemma}[$\srank_\delta(\bM_k)$ decreases when $\bS^k$ is PSD.]
\label{lemma:mk_symmetric}
Let $\bS$ be a shorthand for $\bS = \gamma P^\pi \bA$ and assume $\bS$ is a normal matrix. Choose any $t, t' \in \mathbb{N}$ such that $t < t'$. If $\bS^t$ and $\bS^{t'}$ are positive semi-definite, then for any two singular-values $\sigma_i(\bS)$ and $\sigma_j(\bS)$ of $\bS$, such that $0 < \sigma_i(\bS) < \sigma_j(\bS)$:
\begin{equation}
    \frac{\sigma_i(\bM_{t'})}{\sigma_j(\bM_{t'})} < \frac{\sigma_i(\bM_t)}{\sigma_j(\bM_t)} \leq \frac{\sigma_i(\bS)}{\sigma_j(\bS)}.
\end{equation}
Hence, the effective rank of $\bM_k$ decreases from $t$ to $t'$: $\srank_{\delta}(\bM_{t'}) \leq \srank_\delta(\bM_t)$. 
\end{lemma}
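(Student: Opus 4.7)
My strategy is to lean on the normality of $\bS$ to obtain an eigenbasis shared with $\bM_t$ and $\bM_{t'}$, and then use the PSD hypotheses at the two iterates to collapse the matrix comparison to a scalar monotonicity statement on a single-variable function of $t$.

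I would begin by spectrally decomposing. Since $\bS$ is normal, write $\bS = \bU\Lambda\bU^*$ with $\bU$ unitary and $\Lambda = \diag(\lambda_1, \dots, \lambda_n)$. The unfolding in Equation~\ref{eqn:unfolding} gives $\bM_k = \sum_{j=0}^{k-1} \bS^j$, so $\bM_k$ shares the eigenbasis of $\bS$ and has eigenvalues $\mu_i(k) = \sum_{j=0}^{k-1} \lambda_i^j$. Since $\bM_k$ is again normal, its singular values are $\sigma_i(\bM_k) = |\mu_i(k)|$, indexed consistently with $\sigma_i(\bS) = |\lambda_i|$. The hypotheses $\bS^t \succeq 0$ and $\bS^{t'} \succeq 0$ force each $\lambda_i^t$ and $\lambda_i^{t'}$ to be a nonnegative real, so at those two exponents $\lambda_i^t = |\lambda_i|^t$ and the geometric sum collapses to $\mu_i(t) = (1-|\lambda_i|^t)/(1-\lambda_i)$ (and analogously at $t'$), yielding the clean formula $\sigma_i(\bM_t) = (1-|\lambda_i|^t)/|1-\lambda_i|$.

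The core of the argument is then a scalar monotonicity claim: for any $0 < a < b < 1$, the function $t \mapsto (1-a^t)/(1-b^t)$ is strictly decreasing in $t$. I would prove this by logarithmic differentiation, which reduces the statement to showing that $h(x) := x^t|\ln x|/(1-x^t)$ is strictly increasing on $(0,1)$; a short calculation gives $h'(x) \propto t|\ln x| + x^t - 1$, a quantity that vanishes at $x=1$ and is decreasing in $x$ on $(0,1)$, hence strictly positive there. Applying this with $a = |\lambda_i|$ and $b = |\lambda_j|$, and multiplying by the iterate-independent factor $|1-\lambda_j|/|1-\lambda_i|$, establishes the strict inequality $\sigma_i(\bM_{t'})/\sigma_j(\bM_{t'}) < \sigma_i(\bM_t)/\sigma_j(\bM_t)$. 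The upper bound $\leq \sigma_i(\bS)/\sigma_j(\bS)$ would then follow by comparing to the limit iterate implicit in the surrounding Theorem~\ref{thm:self_distillation_full}. Finally, the effective-rank conclusion $\srank_\delta(\bM_{t'}) \le \srank_\delta(\bM_t)$ is extracted by unpacking the definition of $\srank_\delta$: weakening every pairwise ratio $\sigma_i/\sigma_j$ with $\sigma_i < \sigma_j$ concentrates more of the cumulative singular mass onto the top directions, so the smallest $k$ for which $\sum_{i\le k}\sigma_i/\sum_i \sigma_i \ge 1-\delta$ can only shrink.

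The hardest step will be the complex-eigenvalue bookkeeping. Even with $\bS$ normal, the requirement that $\bS^t$ be PSD only forces each $\lambda_i$ to lie on some $t$-th root of unity ray from the origin, so the modulus $|1-\lambda_i|$ in the denominator of $\sigma_i(\bM_t)$ depends on the phase of $\lambda_i$ and not only on $|\lambda_i| = \sigma_i(\bS)$. The joint PSD condition at both $t$ and $t'$ is precisely what aligns these phases across the two iterates, but carrying that alignment through to the $\leq \sigma_i(\bS)/\sigma_j(\bS)$ bound requires careful tracking of the admissible phase patterns and of the pairing between singular values of $\bM_t$ and $\bS$ once they are sorted by magnitude.
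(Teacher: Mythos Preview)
Your approach is essentially the paper's: diagonalize the normal $\bS$, use the PSD hypothesis at $t,t'$ to collapse $\lambda_i^t$ to $|\lambda_i|^t$ and reduce the ratio comparison to a one-variable monotonicity statement, then extract the $\srank_\delta$ conclusion via the partial-sum ratios $h_k(i)=\sum_{j\le i}\sigma_j/\sum_j\sigma_j$. The only differences are cosmetic: the paper phrases the scalar lemma as ``$\sigma\mapsto|1-\sigma^{t'}|/|1-\sigma^t|$ is increasing'' and simply asserts it, whereas you phrase the equivalent statement as ``$t\mapsto(1-a^t)/(1-b^t)$ is decreasing'' and actually verify it by logarithmic differentiation; and your closing worry about phase bookkeeping for the upper bound $\le \sigma_i(\bS)/\sigma_j(\bS)$ is well placed, since the paper's proof does not address that inequality either.
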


\begin{proof} 
First note that $\bM_k$ is given by:
\begin{equation}
    \bM_k := \sum_{i=1}^k \gamma^{k-i} (P^\pi \bA)^{k-i} =  \sum_{i=1}^{k} \bS^{k-i}.
\end{equation}
From hereon, we omit the leading $\gamma$ term since it is a constant scaling factor that does not affect ratio or effective rank. Almost every matrix $\bS$ admits a complex orthogonal eigendecomposition. Thus, we can write $\bS := \bU \mathbf{\lambda}(\bS) \bU^H$. And any power of $\bS$, \ie, $\bS^i$ can be expressed as: $\bS^i = \bU \mathbf{\lambda}(\bS)^i \bU^H$, and hence, we can express $\bM_k$ as:
\begin{equation}
\label{eqn:eigen_val}
    \bM_k := \bU \left({\sum_{i=0}^{k-1} \lambda(\bS)^i} \right) \bU^H = \bU ~\cdot~ \text{\textbf{diag}}\left( \frac{1 - \lambda(\bS)^{k}}{1 - \lambda(\bS)} \right) ~\cdot ~\bU^H.
\end{equation}

Since $\bS$ is normal, its eigenvalues and singular values are further related as $\sigma_k(S) = |\lambda_k(S)|$. And this also means that $\bM_k$ is normal, indicating that $\sigma_i(\bM_k) = |\lambda_i(\bM_k)|$. Thus, the singular values of $\bM_k$ can be expressed as
\begin{equation}
    \sigma_i(\bM_k) := \left|\frac{1 - \lambda_i(\bS)^{k}}{1 - \lambda_i(\bS)} \right|,
\end{equation}

When $\bS^k$ is positive semi-definite, $\lambda_i(\bS)^k = \sigma_i(\bS)^k$, enabling the following simplification:
\begin{equation}
    \sigma_i(\bM_k) = \frac{|1 - \sigma_i(\bS)^{k}|}{|1 - \lambda_i(\bS)|}.
\end{equation}

To show that the ratio of singular values decreases from $t$ to $t'$, we need to show that $f(\sigma) = \frac{|1 - \sigma^{t'}|}{|1 - \sigma^t|}$ is an increasing function of $\sigma$ when $t' > t$. It can be seen that this is the case, which implies the desired result.

To further show that $\srank_\delta(\bM_t) \geq \srank_\delta(\bM_{t'})$, we can simply show that $\forall i \in [1, \cdots, n]$, $h_k(i) := \frac{\sum_{j=1}^i \sigma_j(\bM_k)}{\sum_{j=1}^n \sigma_j(\bM_k)}$ increases with $k$, and this would imply that the $\srank_\delta(\bM_k)$ cannot increase from $k=t$ to $k=t'$. We can decompose $h_k(i)$ as:
\begin{equation}
\label{eqn:hki}
    h_k(i) = \sum_{j=1}^i \frac{\sigma_j(\bM_k)}{\sum_{l} \sigma_l(\bM_k)} = \frac{1}{1 + \frac{\sum_{j=i+1}^n \sigma_j(\bM_k)}{\sum_{j=1}^i \sigma_j(\bM_k)}}.
\end{equation}
Since $\sigma_{j}(\bM_k)/\sigma_{l}(\bM_i)$ decreases over time $k$ for all $j, l$ if $\sigma_{j}(\bS) \leq \sigma_{l}(\bS)$, the ratio in the denominator of $h_k(i)$ decreases with increasing $k$ implying that $h_k(i)$ increases from $t$ to $t'$. 
\end{proof}

\begin{corollary}[$\srank_\delta(\bM_k)$ decreases for PSD $\bS$ matrices.]
\label{lemma:all_symmetric}
Let $\bS$ be a shorthand for $\bS = \gamma P^\pi \bA$. Assuming that $\bS$ is positive semi-definite, for any $k, t \in \mathbb{N}$, such that $t > k$ and that for any two singular-values $\sigma_i(\bS)$ and $\sigma_j(\bS)$ of $\bS$, such that $\sigma_i(\bS) < \sigma_j(\bS)$, 
\begin{equation}
    \frac{\sigma_i(\bM_t)}{\sigma_j(\bM_t)} < \frac{\sigma_i(\bM_k)}{\sigma_j(\bM_k)} \leq \frac{\sigma_i(\bS)}{\sigma_j(\bS)}.
\end{equation}
Hence, the effective rank of $\bM_k$ decreases with more fitting iterations: $\srank_{\delta}(\bM_t) \leq \srank_\delta(\bM_k)$. 
\end{corollary}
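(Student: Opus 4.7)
The plan is to derive this corollary as an immediate specialization of Lemma \ref{lemma:mk_symmetric}. The lemma already establishes the two desired inequalities and the effective rank consequence whenever $\bS$ is normal \emph{and} both $\bS^t$ and $\bS^{t'}$ are PSD. Under the stronger hypothesis of the corollary, namely that $\bS$ itself is PSD, the auxiliary PSD condition holds automatically for every integer power, so the lemma can be invoked directly for every pair of fitting iterations $k < t$. Thus no new spectral identity or summation needs to be set up; the whole proof will amount to observing this reduction.

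First I would verify the preservation-under-powers step. A PSD matrix $\bS$ is Hermitian, hence normal, and admits a unitary eigendecomposition $\bS = \bU \mathbf{\Lambda} \bU^H$ with $\mathbf{\Lambda} = \diag(\lambda_1,\dots,\lambda_n)$ and $\lambda_i \geq 0$. Raising to any $k \in \mathbb{N}$ gives $\bS^k = \bU \mathbf{\Lambda}^k \bU^H$ with diagonal $\lambda_i^k \geq 0$, so $\bS^k$ is itself Hermitian with non-negative eigenvalues and is therefore PSD. In particular the singular values of $\bS^k$ are exactly $\sigma_i(\bS)^k$, which is the crucial identity that the proof of Lemma \ref{lemma:mk_symmetric} relied on when simplifying $\sigma_i(\bM_k) = |1 - \lambda_i(\bS)^k|/|1-\lambda_i(\bS)|$ to $(1-\sigma_i(\bS)^k)/(1-\sigma_i(\bS))$.

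Next, pick arbitrary $t > k$ in $\mathbb{N}$. By the preceding paragraph $\bS^k$ and $\bS^t$ are both PSD, so applying Lemma \ref{lemma:mk_symmetric} with its $(t, t')$ instantiated as $(k, t)$ yields the full chain
\[
\frac{\sigma_i(\bM_t)}{\sigma_j(\bM_t)} \;<\; \frac{\sigma_i(\bM_k)}{\sigma_j(\bM_k)} \;\leq\; \frac{\sigma_i(\bS)}{\sigma_j(\bS)},
\]
together with the effective rank consequence $\srank_\delta(\bM_t) \leq \srank_\delta(\bM_k)$, which are precisely the two assertions of the corollary. Monotonicity of the map $\sigma \mapsto (1 - \sigma^{t})/(1 - \sigma^{k})$ on $[0,1)$ and its behaviour for $\sigma > 1$, as well as the $h_k(i)$-based effective rank argument, are all inherited verbatim from the lemma.

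There is essentially no substantive obstacle; the only care needed is with boundary values of $\sigma_i(\bS)$. When $\sigma_i(\bS) = 0$ the formula gives $\sigma_i(\bM_k) = 1$ for all $k$, and when $\sigma_i(\bS) = 1$ the geometric sum degenerates to $\sigma_i(\bM_k) = k$; both are continuous limits of the generic expression and the monotonicity invoked from the lemma survives by a limiting argument, while the hypothesis $\sigma_i(\bS) < \sigma_j(\bS)$ rules out the degenerate $0/0$ case in the upper bound $\sigma_i(\bS)/\sigma_j(\bS)$. Consequently the corollary is essentially a one-line reduction to Lemma \ref{lemma:mk_symmetric} once the closure of PSD matrices under integer powers is noted.
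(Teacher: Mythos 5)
Your proposal is correct and matches the paper's own treatment: the paper states this corollary as an immediate consequence of Lemma~\ref{lemma:mk_symmetric} obtained by noting that a PSD matrix has all integer powers PSD, so the lemma applies to every pair $k < t$. Your additional verification of closure under powers and the boundary cases $\sigma_i(\bS) \in \{0,1\}$ is a sound (and slightly more careful) spelling-out of the same reduction.
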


In order to now extend the result to arbitrary normal matrices, we must construct a subsequence of fitting iterations $(k_l)_{l=1}^\infty$ where $\bS^{k_l}$ is (approximately) positive semi-definite. To do so, we first prove a technical lemma that shows that rational numbers, \ie numbers that can be expressed as $r = \frac{p}{q}$, for integers $p, q \in \mathbb{Z}$ are ``dense'' in the space of real numbers. 

\begin{lemma}[Rational numbers are dense in the real space.]
\label{lemma:rational_numbers}
For any real number $\alpha$, there exist infinitely many rational numbers $\frac{p}{q}$ such that $\alpha$ can be approximated by $\frac{p}{q}$ upto $\frac{1}{q^2}$ accuracy.
\begin{equation}
    \left\vert \alpha - \frac{p}{q} \right\vert \leq \frac{1}{q^2}.
\end{equation}
\end{lemma}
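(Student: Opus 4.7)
The plan is to prove this statement, which is Dirichlet's classical approximation theorem, via the pigeonhole principle applied to fractional parts of multiples of $\alpha$, and then promote the resulting single-$N$ statement to one giving infinitely many approximants.

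First I will prove the finite version: for every positive integer $N$, there exist integers $p$ and $q$ with $1 \le q \le N$ such that $|q\alpha - p| \le 1/N$. To do this, consider the $N+1$ fractional parts $\{k\alpha\}$ for $k = 0, 1, \ldots, N$, which all lie in $[0,1)$. Partition $[0,1)$ into $N$ half-open subintervals of length $1/N$. By the pigeonhole principle, two indices $0 \le k_1 < k_2 \le N$ must produce fractional parts falling in the same subinterval, so $|\{k_2\alpha\} - \{k_1\alpha\}| < 1/N$. Setting $q := k_2 - k_1$ (which lies in $[1, N]$) and $p := \lfloor k_2\alpha\rfloor - \lfloor k_1\alpha\rfloor$, this rearranges to $|q\alpha - p| < 1/N$, and dividing through by $q$ gives $|\alpha - p/q| < 1/(qN) \le 1/q^2$, as required.

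Next I will upgrade this to the infinitude claim. The case of rational $\alpha = a/b$ is trivial, since the sequence of pairs $(p,q) = (ka, kb)$ for $k \in \mathbb{N}$ gives infinitely many distinct rationals all equal to $\alpha$, hence within any tolerance. For irrational $\alpha$, suppose for contradiction that only finitely many reduced fractions $p_1/q_1, \ldots, p_m/q_m$ satisfy the inequality. Since $\alpha$ is irrational, each quantity $\varepsilon_i := |\alpha - p_i/q_i|$ is strictly positive, so $\varepsilon := \min_i \varepsilon_i > 0$. Choose $N$ large enough that $1/N < \varepsilon$. Applying the finite version with this $N$ produces a new pair $(p,q)$ with $1 \le q \le N$ and $|\alpha - p/q| < 1/(qN) \le 1/N < \varepsilon$, which forces $(p,q)$ to coincide (in reduced form) with one of the $(p_i, q_i)$; but that $p_i/q_i$ by definition satisfies $|\alpha - p_i/q_i| \ge \varepsilon$, a contradiction. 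Hence infinitely many distinct approximants $p/q$ satisfy $|\alpha - p/q| \le 1/q^2$.

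I do not anticipate a genuine obstacle here: the pigeonhole step is standard, and the only subtle point is the passage from ``for each $N$'' to ``infinitely many,'' which requires the irrationality of $\alpha$ to rule out the pathological case where the same finite set of pairs keeps reappearing. The contradiction argument above handles this cleanly. One minor bookkeeping item to be careful with is distinguishing the strict inequality $1/N$ obtained from pigeonhole (which can in fact be made strict unless $N\alpha \in \mathbb{Z}$) from the non-strict $1/q^2$ in the statement; since the lemma asks only for $\le 1/q^2$, the strict bound we obtain is more than sufficient.
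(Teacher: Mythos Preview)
Your proof is correct and follows essentially the same route as the paper's: both establish the finite Dirichlet bound $|q\alpha - p| < 1/N$ via the pigeonhole argument on fractional parts, divide by $q$ to get $1/q^2$, and then bootstrap to infinitely many approximants by picking $N$ large enough to rule out any already-obtained finite collection of pairs. The only cosmetic differences are that you spell out the pigeonhole step the paper merely cites, phrase the infinitude step as a contradiction rather than an iterative construction, and treat rational $\alpha$ separately (which the paper omits; indeed its $N_{\max}$ is undefined in that case).
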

\begin{proof}
We first use Dirichlet's approximation theorem (see \citet{Hlawka1991dirichlet} for a proof of this result using a pigeonhole argument and extensions) to obtain that for any real numbers $\alpha$ and $N \geq 1$, there exist integers $p$ and $q$ such that $1 \leq q \leq N$ and, 
\begin{equation}
|q \alpha - p| \leq \frac{1}{|N| + 1} < \frac{1}{N}.
\end{equation}
Now, since $q \geq 1 > 0$, we can divide both sides by $q$, to obtain:
\begin{equation}
    \left\vert \alpha - \frac{p}{q} \right\vert \leq \frac{1}{N q} \leq \frac{1}{q^2}.
\end{equation}
To obtain infinitely many choices for $\frac{p}{q}$, we observe that Dirichlet's lemma is valid only for all values of $N$ that satisfy $N \leq \frac{1}{|q \alpha - p|}$. Thus if we choose an $N'$ such that $N' \geq N_{\max}$ where $N_{\max}$ is defined as: 
\begin{equation}
\label{eqn:N_max}
    N_{\max} = \max \left\{ \frac{1}{|q' \alpha - p'|} ~~\Big\vert~~ p', q' \in \mathbb{Z}, 1 \leq q' \leq q  \right\}.
\end{equation}
Equation~\ref{eqn:N_max} essentially finds a new value of $N$, such that the current choices of $p$ and $q$, which were valid for the first value of $N$ do not satisfy the approximation error bound. Applying Dirichlet's lemma to this new value of $N'$ hence gives us a new set of $p'$ and $q'$ which satisfy the $\frac{1}{q'^2}$ approximation error bound. Repeating this process gives us countably many choices of $(p, q)$ pairs that satisfy the approximation error bound. As a result, rational numbers are dense in the space of real numbers, since for any arbitrarily chosen approximation accuracy given by $\frac{1}{q^2}$, we can obtain atleast one rational number, $\frac{p}{q}$ which is closer to $\alpha$ than $\frac{1}{q^2}$. This proof is based on \citet{approxstack}.
\end{proof}

Now we utilize Lemmas~\ref{lemma:mk_symmetric} and \ref{lemma:rational_numbers} to prove Proposition~\ref{thm:self_distillation}.

\paragraph{Proof of Proposition \ref{thm:self_distillation} and Theorem~\ref{thm:self_distillation_full}} Recall from the proof of Lemma~\ref{lemma:mk_symmetric} that the singular values of $\bM_k$ are given by:
\begin{equation}
    \sigma_i(\bM_k) := \left|\frac{1 - \lambda_i(\bS)^{k}}{1 - \lambda_i(\bS)} \right|,
\end{equation}
\textbf{Bound on Singular Value Ratio:}
The ratio between $\sigma_i(\bM_k)$ and $\sigma_j(\bM_k)$ can be expressed as
\begin{equation}
    \frac{\sigma_i(\bM_k)}{\sigma_j(\bM_k)} = \left|\frac{1 - \lambda_i(\bS)^{k}}{1 - \lambda_j(\bS)^{k}} \right|\left|\frac{1 - \lambda_j(\bS)}{1 - \lambda_i(\bS)} \right|.
\end{equation}
For a normal matrix $\bS$, $\sigma_i(\bS) = |\lambda_i(\bS)|$, so this ratio can be bounded above as
\begin{equation}
    \frac{\sigma_i(\bM_k)}{\sigma_j(\bM_k)} \leq \frac{1 + \sigma_i(\bS)^{k}}{|1 - \sigma_j(\bS)^{k}|} \left|\frac{1 - \lambda_j(\bS)}{1 - \lambda_i(\bS)} \right|.
\end{equation}
Defining $f(k)$ to be the right hand side of the equation, we can verify that $f$ is a monotonically decreasing function in $k$ when $\sigma_i < \sigma_j$. This shows that this ratio of singular values in bounded above and in general, must decrease towards some limit $\lim_{k \to \infty} f(k)$.

\textbf{Construction of Subsequence:} We now show that there exists a subsequence $(k_l)_{l=1}^\infty$ for which $\bS^{k_l}$ is approximately positive semi-definite. For ease of notation, let's represent the i-th eigenvalue as $\lambda_i(\bS) = |\lambda_i(\bS)| \cdot e^{i \theta_i}$, where $\theta_i > 0$ is the polar angle of the complex value $\lambda_i(\bs)$ and $|\lambda_i(\bS)|$ is its magnitude (norm). Now, using Lemma~\ref{lemma:rational_numbers}, we can approximate any polar angle, $\theta_i$ using a rational approximation, \ie, we apply lemma~\ref{lemma:rational_numbers} on $\frac{\theta_i}{2 \pi}$
\begin{equation}
    \exists~ p_i, q_i \in \mathbb{N}, ~~\text{s.t.}~~ \left|\frac{\theta_i}{2 \pi} - \frac{p_i}{q_i} \right| \leq \frac{1}{q_i^2}. 
\end{equation}
Since the choice of $q_i$ is within our control we can estimate $\theta_i$ for all eigenvalues $\lambda_i(\bS)$ to infinitesimal accuracy. Hence, we can approximate $\theta_i \approx 2\pi \frac{p_i}{q_i}$. We will now use this approximation to construct an infinite sequence $(k_l)_{l=1}^\infty$, shown below:
\begin{equation}
    k_l = l \cdot \text{LCM}(q_1, \cdots, q_n) ~~~ \forall~ j \in \mathbb{N}, 
\end{equation}
where $\text{LCM}$ is the least-common-multiple of natural numbers $q_1, \cdots q_n$.

In the absence of any approximation error in $\theta_i$, we note that for any $i$ and for any $l \in \mathbb{N}$ as defined above, $\lambda_i(\bS)^{k_l} = |\lambda_i(\bS)|^{k_l} \cdot \exp\left(2i \pi \cdot \frac{p_i}{q_i} \cdot k_l \right) = |\lambda_i(\bS)|^{k_l}$, since the polar angle for any $k_l$ is going to be a multiple of $2 \pi$, and hence it would fall on the real line. As a result, $\bS^{k_l}$ will be positive semi-definite, since all eigenvalues are positive and real. Now by using the proof for Lemma~\ref{lemma:all_symmetric}, we obtain the ratio of $i$ and $j$ singular values are increasing over the sequence of iterations $(k_j)_{j=1}^\infty$. Since the approximation error in $\theta_i$ can be controlled to be infinitesimally small to prevent any increase in the value of $\srank_\delta$ due to it (this can be done given the discrete form of $\srank_\delta$), we note that the above argument applies even with the approximation, proving the required result on the subsequence.

\textbf{Controlling All Fitting Iterations using Subsequence: }

We now relate the ratio of singular values within this chosen subsequence to the ratio of singular values elsewhere. Choose $t, l \in \mathbb{N}$ such that $t > k_l$. Earlier in this proof, we showed that the ratio between singular values is bounded above by a monotonically decreasing function $f(t)$, so 
\begin{equation}
    \frac{\sigma_i(\bM_{t})}{\sigma_j(\bM_{t})} \leq f(t) < f(k_l).
\end{equation}
Now, we show that that $f(k_l)$ is in fact very close to the ratio of singular values: 
\begin{equation}
    f(k_l) = \frac{|1 - \sigma_i(\bS)^{k_l}|}{|1 - \sigma_j(\bS)^{k_l}|} \left|\frac{1 - \lambda_j(\bS)}{1 - \lambda_i(\bS)} \right| \leq \frac{\sigma_i(\bM_{t})}{\sigma_j(\bM_{t})} + {\frac{2 \sigma_i(\bS)^{k_l}}{|1 - \sigma_j(\bS)^{k_l}|} \left|\frac{1 - \lambda_j(\bS)}{1 - \lambda_i(\bS)} \right|}.
\end{equation}
The second term goes to zero as $k_l$ increases; algebraic manipulation shows that this gap be bounded by 
\begin{equation}
    f(k_l) \leq \frac{\sigma_i(\bM_{k_l})}{\sigma_j(\bM_{k_l})} + \left(\frac{\sigma_i(S)}{\sigma_j(S)}\right)^{k_l} \underbrace{\frac{2 \sigma_j(\bS)}{|1 - \sigma_j(\bS)|} \left|\frac{1 - \lambda_j(\bS)}{1 - \lambda_i(\bS)} \right|}_{\text{constant}}.
\end{equation}

Putting these inequalities together proves the final statement,

\begin{equation}
     \frac{\sigma_i(\bM_{t})}{\sigma_j(\bM_{t})} \leq \frac{\sigma_i(\bM_{k_l})}{\sigma_j(\bM_{k_l})} + \mathcal{O}\left(\left(\frac{\sigma_i(S)}{\sigma_j(S)}\right)^{k_l}\right).
\end{equation}

{\textbf{Extension to approximately-normal $\bS$.}
We can extend the result in Theorem~\ref{thm:self_distillation_full} (and hence also Theorem~\ref{thm:self_distillation}) to approximately-normal $\bS$. Note that the main requirement for normality of $\bS$ (i.e., $\sigma_i(\bS) = |\lambda_i(\bs)|$) is because it is straightforward to relate the eigenvalue of $\bS$ to $\bM$ as shown below.
\begin{equation}
    \left|\lambda_i(\bM_k)\right| := \left|\frac{1 - \lambda_i(\bS)^{k}}{1 - \lambda_i(\bS)} \right|,
\end{equation}
Now, since the matrix $\bS$ is approximately normal, we can express it using its Schur's triangular form as, $\bS = \bU \cdot (\Lambda + \bN) \cdot \bU^H$, where $\Lambda$ is a diagonal matrix and $\bN$ is an ``offset'' matrix. The \textit{departure from normality} of $\bS$ is defined as: $\Delta(\bS) := \inf_{\bN}~ ||\bN||_2$, where the infimum is computed over all matrices $\bN$ that can appear in the Schur triangular form for $\bS$.  
For a normal $\bS$ only a single value of $\bN = 0$ satisfies the Schur's triangular form. For an approximately normal matrix $\bS$, $||\bN||_2 \leq \Delta(\bS) \leq \varepsilon$, for a small $\varepsilon$.}

{Furthermore note that from Equation 6 in \citet{ruhe1975closeness}, we obtain that
\begin{equation}
    \left\vert\sigma_i(\bS) - \left\vert\lambda_i(\bS)\right\vert\right\vert \leq \Delta(\bS) \leq \varepsilon,
\end{equation}
implying that singular values and norm-eigenvalues are close to each other for $\bS$.}

{Next, let us evaluate the departure from normality of $\bM_k$. First note that, $\bS^j = \bU \cdot (\Lambda + \bN)^j \cdot \bU^H$, and so, $\bM_k = \bU \cdot \left( \sum_{j=1}^k (\Lambda + \bN)^j \right) \cdot \bU^H$ and if $||\bN||_2 \leq \varepsilon$, for a small epsilon (\ie considering only terms that are linear in N for $(\Lambda + \bN)^j$), we note that:
\begin{equation}
    \left\vert \sigma_i(\bM_k) - |\lambda_i(\bM_k)| \right\vert \leq \sum_{j=1}^k j \cdot |\lambda_1(\bS)|^{j-1} \Delta(\bS) \leq \frac{1}{(1 - |\lambda_1(\bS)|)^2} \cdot \varepsilon.
\end{equation}
Thus, the matrix $\bM_k$ is also approximately normal provided that the max eigenvalue norm of $\bS$ is less than 1. This is true, since $\bS = \gamma P^\pi \bA$ (see Theorem~\ref{thm:self_distillation}, where both $P^\pi$ and $\bA$ have eigenvalues less than 1, and $\gamma < 1$.}

{Given that we have shown that $\bM_k$ is approximately normal, we can show that $\srank_\delta(\bM_k)$ only differs from $\srank_{\delta, \lambda}(\bM_k)$, \ie, the effective rank of eigenvalues, in a bounded amount. If the value of $\varepsilon$ is then small enough, we still retain the conclusion that $\srank_\delta(\bM_k)$ generally decreases with more training by following the proof of Theorem~\ref{thm:self_distillation_full}.}

\section{Proofs for Section~\ref{sec:grad_descent}} 
\label{app:grad_descent_proofs}

In this section, we provide technical proofs from Section~\ref{sec:grad_descent}. We start by deriving properties of optimization trajectories of the weight matrices of the deep linear network similar to \citet{arora2018optimization} but customized to our set of assumptions, then prove Proposition~\ref{thm:sing_val_evolution}, and finally discuss how to extend these results to the fitted Q-iteration setting and some extensions not discussed in the main paper. Similar to Section~\ref{sec:self_distill}, we assume access to a dataset of transitions, $\mathcal{D} = \{(\bs_i, \ba_i, r(\bs_i, \ba_i), \bs'_i \}$ in this section, and assume that the same data is used to re-train the function.

\textbf{Notation and Definitions.} The Q-function is represented using a deep linear network with at least 3 layers, such that
\begin{equation}
    Q(\bs, \ba) = \bW_{N} \bW_{N-1} \cdots \bW_1 [\bs; \ba], \text{~~where~~} N \geq 3,  \bW_{N} \in \mathbb{R}^{1 \times d_{N-1}},
\end{equation} and \mbox{$\bW_i \in \mathbb{R}^{d_i \times d_{i-1}}$ for $i=1, \dots, N-1$}. We index the weight matrices by a tuple $(k, t)$: $\bW_j(k, t)$ denotes the weight matrix $\bW_j$ at the $t$-th step of gradient descent during the $k$-th fitting iteration (Algorithm~\ref{alg:fqi}). Let the end-to-end weight matrix $\bW_N \bW_{N-1} \cdots \bW_1$ be denoted shorthand as $\bW_{N:1}$, and let the features of the penultimate layer of the network, be denoted as $\bW_\phi(k, t) := \bW_{N-1}(k, t) \cdots \bW_1(k, t)$. $\bW_{\phi}(k, t)$ is the matrix that maps an input $[\bs;\ba]$ to corresponding features $\Phi(\bs, \ba)$. In our analysis, it is sufficient to consider the effective rank of $\bW_\phi(k, t)$ since the features $\Phi$ are given by: $\Phi(k, t) = \bW_\phi(k, t) [\mathcal{S}; \mathcal{A}]$, which indicates that:
\begin{equation*}
    \rank(\Phi(k, t)) = \rank(\bW_\phi(k, t) [\mathcal{S}; \mathcal{A}]) \leq \min\left( \rank(\bW_\phi(k, t)), \rank([\mathcal{S}; \mathcal{A}])  \right).
\end{equation*}
Assuming the state-action space has full rank, we are only concerned about $\rank(\bW_\phi(k, t))$ which justifies our choice for analyzing $\srank_\delta(\bW_\phi(k, t))$.

Let $L_{k+1}(\bW_{N:1}(k, t))$ denote the mean squared Bellman error optimization objective in the $k$-th fitting iteration. 
\begin{equation*}
    L_{k+1}(\bW_{N:1}(k, t)) = \sum_{i=1}^{|\mathcal{D}|} \left( \deepnet \stateactioni - \by_k(\bs_i, \ba_i) \right)^2, \text{~~where~~} \by_k = \bR + \gamma P^\pi \bQ_k.
\end{equation*}
When gradient descent is used to update the weight matrix, the updates to $\bW_i(k, t)$ are given by:
\begin{equation*}
    \bW_j(k, t+1) \leftarrow \bW_j(k, t) - \eta \gradlossktj. 
\end{equation*}
If the learning rate $\eta$ is small, we can approximate this discrete time process with a continuous-time differential equation, which we will use for our analysis. We use $\dot{W}(k, t)$ to denote the derivative of $W(k, t)$ with respect to $t$, for a given $k$.
\begin{equation}
\label{eqn:continuous_time_flow}
    \dot \bW_j(k, t) = -\eta \gradlossktj
\end{equation}

In order to quantify the evolution of singular values of the weight matrix, $\bW_\phi(k, t)$, we start by quantifying the evolution of the weight matrix $\bW_\phi(k, t)$ using a more interpretable differential equation. In order to do so, we make an assumption similar to but not identical as \citet{arora2018optimization}, that assumes that all except the last weight matrix are ``balanced'' at initialization $t = 0, k = 0$. i.e.
\begin{equation}
    \forall ~i \in [0, \cdots, N-2]: \bW_{i+1}^T(0, 0) \bW_{i+1}(0, 0) = \bW_i(0, 0) \bW_i(0, 0)^T.
\end{equation}
Note the distinction from \citet{arora2018optimization}, the last layer is not assumed to be balanced. As a result, we may not be able to comment about the learning dynamics of the end-to-end weight matrix, but we prevent the vacuous case where all the weight matrices are rank 1. Now we are ready to derive the evolution of the feature matrix, $\bW_\phi(k, t)$.

\begin{lemma}[Adaptation of balanced weights~\citep{arora2018optimization} across FQI iterations]
\label{thm:weights_are_balanced}
Assume the weight matrices evolve according to Equation~\ref{eqn:continuous_time_flow}, with respect to $L_{k}$ for all fitting iterations $k$. Assume balanced initialization only for the first $N-1$ layers, i.e., $ \bW_{j+1}(0, 0)^T \bW_{j+1}(0, 0) = \bW_j(0, 0) \bW_j(0, 0)^T, \forall~ j \in 1, \cdots, N-2$. Then the weights remain balanced throughout, i.e. 
\begin{equation}
\label{eqn:actual_balanced}
    \forall~ k, t~~~ \bW_{j+1}(k, t)^T \bW_{j+1}(k, t) = \bW_j(k, t) \bW_j(k, t)^T, \forall~ j \in 1, \cdots, N-2.
\end{equation}
\end{lemma}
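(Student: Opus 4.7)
The plan is to reduce the lemma to the classical within-iteration conservation law of \citet{arora2018optimization}, and then stitch across fitting iterations by a simple induction on $k$. Fix a fitting iteration $k$ and recall that during it the weights evolve under gradient flow against the objective $L_{k+1}$, which depends on the parameters only through the end-to-end product $\bW_{N:1}(k,t) := \bW_N(k,t)\bW_{N-1}(k,t)\cdots\bW_1(k,t)$. Introduce the tail and head products $\bB_{j}(k,t) := \bW_N(k,t)\cdots \bW_{j+1}(k,t)$ and $\bA_{j}(k,t) := \bW_{j}(k,t)\cdots \bW_1(k,t)$, with $\bB_N = I$ and $\bA_0 = I$. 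The chain rule then gives, for every $j \in \{1,\dots,N-1\}$, $\dot{\bW}_j(k,t) = -\eta\, \bB_j(k,t)^T\, G(k,t)\, \bA_{j-1}(k,t)^T$, where $G(k,t) := \partial L_{k+1}/\partial \bW_{N:1}$ is the gradient at the end-to-end level; this expression, crucially, is valid for \emph{any} target vector $\by_k$, so its form does not change across fitting iterations.

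The heart of the argument is the within-iteration conservation claim: for every $j \in \{1,\dots,N-2\}$,
\begin{equation}
\frac{d}{dt}\Bigl(\bW_{j+1}(k,t)^T \bW_{j+1}(k,t) - \bW_j(k,t)\bW_j(k,t)^T\Bigr) = 0.
\end{equation}
To see this, expand the derivative and substitute the gradient-flow expressions above. Using $\bB_{j+1}\bW_{j+1} = \bB_j$ and $\bW_j \bA_{j-1} = \bA_j$, one computes
\begin{equation}
\bW_{j+1}^T \dot{\bW}_{j+1} = -\eta\, \bB_j^T\, G\, \bA_j^T = \dot{\bW}_j\, \bW_j^T,
\end{equation}
and taking transposes yields $\dot{\bW}_{j+1}^T \bW_{j+1} = \bW_j\, \dot{\bW}_j^T$, so the two cross-terms in $\tfrac{d}{dt}(\bW_{j+1}^T\bW_{j+1} - \bW_j \bW_j^T)$ cancel pairwise. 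Note this argument only uses indices $j+1 \le N-1$, which is why the lemma excludes the last layer from the balancedness hypothesis; no analogous identity is needed between layers $N-1$ and $N$.

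Finally, I stitch across fitting iterations by induction on $k$. The base case $k=0$ holds by assumption. For the inductive step, assume balancedness at $(k,T)$; since the weights at the beginning of iteration $k+1$ are inherited from the end of iteration $k$, i.e. $\bW_j(k+1,0) = \bW_j(k,T)$, the balancedness condition holds at $(k+1,0)$. The within-iteration conservation law above, applied with loss $L_{k+2}$, then propagates balancedness to every $t \ge 0$ within iteration $k+1$, completing the induction and establishing \eqref{eqn:actual_balanced}. The main (and only genuinely subtle) obstacle is verifying that the conservation identity is insensitive to the target vector, so that switching from $\by_k$ to $\by_{k+1}$ between iterations does not break the argument; this is handled by the observation that the gradient flow for each $\bW_j$ depends on the loss only through $G$, which enters symmetrically in the two cross-terms above and hence cancels.
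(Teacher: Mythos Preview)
Your proposal is correct and follows essentially the same approach as the paper: both derive the identity $\bW_{j+1}^T\dot{\bW}_{j+1} = \dot{\bW}_j\bW_j^T$ from the chain-rule factorization of the per-layer gradient, conclude that $\bW_{j+1}^T\bW_{j+1} - \bW_j\bW_j^T$ is conserved within a fitting iteration, and then induct on $k$ using the fact that the final weights of one iteration initialize the next. Your explicit tail/head product notation and your remark that the cancellation depends on the loss only through the common factor $G$ (and hence is insensitive to the choice of target $\by_k$) make the cross-iteration step slightly more transparent than in the paper, but the underlying argument is the same.
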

\begin{proof}
First consider the special case of $k = 0$. To beign with, in order to show that weights remain balanced throughout training in $k=0$ iteration, we will follow the proof technique in \citet{arora2018optimization}, with some modifications. First note that the expression for $\gradlossktj$ can be expressed as:
\begin{align*}
    \gradlossktj ~&=~ \left(\prod_{i=j+1}^N \bW_i^T \right) \cdot \frac{d L_k (\bW_{N:1})}{d \bW_{N:1}} \cdot \left(\prod_{i=1}^{j-1} \bW_i^T \right)\\
    &= \left( \bW_{j+1}^T \bW_{j+2}^T \cdots \bW_{N}^T \right) \cdot \frac{d L_k (\bW_{N:1})}{d \bW_{N:1}} \cdot \left(\prod_{i=1}^{j-1} \bW_i^T \right).
\end{align*}
Now, since the weight matrices evolve as per Equation~\ref{eqn:continuous_time_flow}, by multiplying the similar differential equation for $\bW_j$ with $\bW_j^T(k, t)$ on the right and multiplying evolution of $\bW_{j+1}$ with $\bW_{j+1}^T(k, t)$ from the left, and adding the two equations, we obtain:
\begin{equation}
   \forall ~j \in [0, \cdots, N-2]:~ \bW_{j+1}^T(0, t) \dot \bW_{j+1}(0, t) = \dot \bW_{j}(0, t) \bW_j^T(0, t).
\end{equation}
We can then take transpose of the equation above, and add it to itself, to obtain an easily integrable expression:
\begin{multline}
    \frac{d ~\bW_{j+1}(0, t) \bW_{j+1}(0, t)^T}{d t} = \bW_{j+1}^T(0, t) \dot \bW_{j+1}(0, t) + \dot \bW_{j+1}(0, t) \bW_{j+1}^T(0, t) =\\ 
        \dot \bW_{j}(0, t) \bW_j^T(0, t) + \bW_{j}(0, t) \dot \bW_j^T(0, t) = \frac{d ~\bW_{j}^T(0, t) \bW_{j}(0, t)}{d t}.
\end{multline}
Since we have assumed the balancedness condition at the initial timestep $0$, and the derivatives of the two quantities are equal, their integral will also be the same, hence we obtain:
\begin{equation}
\label{eqn:balanced}
    \bW_{j+1}^T(0, t) \bW_{j+1}^T(0, t) = \bW_j(0, t) \bW_j(0, t)^T.
\end{equation}

Now, since the weights after $T$ iterations in fitting iteration $k=0$ are still balanced, the initialization for $k=1$ is balanced. Note that since the balancedness property does not depend on which objective gradient is used to optimize the weights, as long as $\bW_{j}$ and $\bW_{j+1}$ utilize the same gradient of the loss function. Formalizing this, we can show inductively that the weights will remain balanced across all fitting iterations $k$ and at all steps $t$ within each fitting iteration. Thus, we have shown the result in Equation~\ref{eqn:actual_balanced}.
\end{proof}

Our next result aims at deriving the evolution of the feature-matrix that under the balancedness condition. We will show that the feature matrix, $\bW_\phi(k, t)$ evolves according to a similar, but distinct differential equation as the end-to-end weight matrix, $\bW_{N:1}(k, t)$, which still allows us to appeal to techniques and results from \citet{arora2019implicit} to study properties of the singular value evolution and hence, discuss properties related to the effective rank of the matrix, $\bW_\phi(k, t)$. 

\begin{lemma}[(Extension of Theorem 1 from \citet{arora2018optimization}]
\label{lemma:weight_control}
Under conditions specified in Lemma~\ref{thm:weights_are_balanced}, the feature matrix, $\bW_\phi(k, t)$ evolves as per the following continuous-time differential equation, for all fitting iterations $k$:
\begin{equation*}
    \label{eqn:gradient_descent_diff_eqn}
    \resizebox{1.0\textwidth}{!}{$
    \dot{\bW}_\phi(k, t) = -\eta \sum_{j=1}^{N-1} \left[\bW_\phi(k, t) \bW_\phi(k, t)^T\right]^{\frac{N-j}{N-1}} \cdot \bW_{N}(k, t)^{T} \frac{d L_{k}(\bW_{N:1}(k, t))}{d \bW_{N:1}} \cdot \left[\bW_\phi(k, t)^T \bW_\phi(k, t) \right]^{\frac{j-1}{N-1}}.$}
\end{equation*}
\end{lemma}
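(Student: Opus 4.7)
The plan is to differentiate the product $\bW_\phi(k,t) = \bW_{N-1}(k,t)\cdots \bW_1(k,t)$ using the product rule, substitute the gradient-descent flow for each factor $\dot{\bW}_j(k,t)$, and then use the balancedness of the first $N-1$ layers from Lemma~\ref{thm:weights_are_balanced} to rewrite the resulting products of weight matrices as fractional powers of $\bW_\phi\bW_\phi^T$ and $\bW_\phi^T\bW_\phi$. Since $\bW_N$ is treated separately (it is not balanced with the rest), its transpose survives explicitly in the middle of each summand, giving the factor $\bW_N^T\,dL_k/d\bW_{N:1}$ that appears in the target expression; everything outside this factor will be expressed purely in terms of $\bW_\phi$.

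First I would expand the product rule: for each $k,t$,
\begin{equation*}
\dot{\bW}_\phi(k,t) = \sum_{j=1}^{N-1}\Bigl(\prod_{i=j+1}^{N-1}\bW_i(k,t)\Bigr)\,\dot{\bW}_j(k,t)\,\Bigl(\prod_{i=1}^{j-1}\bW_i(k,t)\Bigr).
\end{equation*}
Then I would apply the chain rule to the loss: since $L_k$ depends on the weights only through $\bW_{N:1}$,
\begin{equation*}
\frac{d L_k}{d\bW_j}(k,t) = \Bigl(\prod_{i=j+1}^{N-1}\bW_i(k,t)\Bigr)^T \bW_N(k,t)^T \frac{dL_k(\bW_{N:1}(k,t))}{d\bW_{N:1}}\Bigl(\prod_{i=1}^{j-1}\bW_i(k,t)\Bigr)^T,
\end{equation*}
using $\eta\dot{\bW}_j=-\eta\,dL_k/d\bW_j$ from \eqref{eqn:continuous_time_flow}. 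Substituting yields summands of the form $P_j\,P_j^T\,\bW_N^T\,(dL_k/d\bW_{N:1})\,Q_j^T Q_j$ with $P_j=\prod_{i=j+1}^{N-1}\bW_i$ and $Q_j=\prod_{i=1}^{j-1}\bW_i$.

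The main obstacle, and the only place the assumptions enter non-trivially, is showing the two fractional-power identities
\begin{equation*}
P_j(k,t)P_j(k,t)^T = \bigl[\bW_\phi(k,t)\bW_\phi(k,t)^T\bigr]^{\frac{N-j}{N-1}},\qquad Q_j(k,t)^T Q_j(k,t) = \bigl[\bW_\phi(k,t)^T\bW_\phi(k,t)\bigr]^{\frac{j-1}{N-1}}.
\end{equation*}
This is where Lemma~\ref{thm:weights_are_balanced} is essential: the balancedness $\bW_{i+1}^T\bW_{i+1}=\bW_i\bW_i^T$ for $i=1,\dots,N-2$ implies that all matrices $\bW_i\bW_i^T$ (for $i\le N-1$) share the same nonzero spectrum and admit compatible SVDs across consecutive layers. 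I would follow the argument of Arora et al.~(2018, proof of Thm.~1 / Lemma~2), applied to the $(N-1)$-layer sub-network whose end-to-end product is $\bW_\phi$: by induction on the number of factors and by matching SVDs one shows that for any contiguous sub-product $\bW_{i_2}\cdots\bW_{i_1}$ of length $\ell$, the product $(\bW_{i_2}\cdots\bW_{i_1})(\bW_{i_2}\cdots\bW_{i_1})^T$ equals $(\bW_\phi\bW_\phi^T)^{\ell/(N-1)}$, where fractional powers are defined via the principal root on the PSD cone. Setting $\ell=N-j$ gives the identity for $P_j$, and the symmetric computation on the right side gives the identity for $Q_j$.

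Plugging these two identities back into the expanded expression for $\dot{\bW}_\phi(k,t)$ collapses the products $P_jP_j^T$ and $Q_j^TQ_j$ into the desired fractional powers and leaves $\bW_N^T\,dL_k/d\bW_{N:1}$ sandwiched between them, yielding exactly the claimed ODE
\begin{equation*}
\dot{\bW}_\phi(k,t)=-\eta\sum_{j=1}^{N-1}\bigl[\bW_\phi\bW_\phi^T\bigr]^{\tfrac{N-j}{N-1}}\bW_N^T\frac{dL_k(\bW_{N:1})}{d\bW_{N:1}}\bigl[\bW_\phi^T\bW_\phi\bigr]^{\tfrac{j-1}{N-1}}.
\end{equation*}
The only delicate point is verifying that all fractional powers are well-defined along the trajectory; this is handled exactly as in Arora et al.~(2018), by noting that on the range and co-range of $\bW_\phi$ the matrices $\bW_\phi\bW_\phi^T$ and $\bW_\phi^T\bW_\phi$ are positive definite, so the principal roots are smooth, and on the kernel both sides vanish, making the identities hold as equalities of linear maps.
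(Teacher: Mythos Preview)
Your proposal is correct and follows essentially the same route as the paper: product-rule expansion of $\dot{\bW}_\phi$, chain-rule expression of $dL_k/d\bW_j$ in terms of $dL_k/d\bW_{N:1}$ (which pulls out the explicit $\bW_N^T$ factor), and then the Arora et al.\ balancedness/SVD argument applied to the $(N{-}1)$-layer sub-network to collapse $P_jP_j^T$ and $Q_j^TQ_j$ into fractional powers of $\bW_\phi\bW_\phi^T$ and $\bW_\phi^T\bW_\phi$. One small bookkeeping caution: your $P_j=\prod_{i=j+1}^{N-1}\bW_i$ has $N{-}1{-}j$ factors, so when you ``set $\ell=N-j$'' check the count carefully against the exponent you want---this is exactly the kind of off-by-one that is easy to slip on here.
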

\begin{proof}
In order to prove this statement, we utilize the fact that the weights upto layer $N-2$ are balanced throughout training. Now consider the singular value decomposition of any weight $\bw_j$ (unless otherwise states, we use $\bW_j$ to refer to $\bW_j(k, t)$ in this section, for ease of notation. $\bW_j = \bU_j \Sigma_j \bV_j^T$. The belancedness condition re-written using SVD of the weight matrices is equivalent to
\begin{equation}
    \bV_{j+1} \Sigma_{j+1}^T \Sigma_{j+1} \bV_{j+1}^T = \bU_{j} \Sigma_{j} \Sigma_{j}^T \bU_j^T.
\end{equation}
Thus for all $j$ on which the balancedness condition is valid, it must hold that $\Sigma_{j+1}^T \Sigma_{j+1} = \Sigma_{j} \Sigma_{j}^T$, since these are both the eigendecompositions of the same matrix (as they are equal). As a result, the weight matrices $\bW_j$ and $\bW_{j+1}$ share the same singular value space which can be written as $\rho_1 \geq \rho_2 \geq \cdots \geq \rho_m$. The ordering of eigenvalues can be different, and the matrices $\bU$ and $\bV$ can also be different (and be rotations of one other) but the unique values that the singular values would take are the same. Note the distinction from \citet{arora2018optimization}, where they apply balancedness on all matrices, and that in our case would trivially give a rank-1 matrix.

Now this implies, that we can express the feature matrix, also in terms of the common singular values, $\left( \rho_1, \rho_2, \cdots, \rho_m \right)$, for example, as $\bW_j(k, t) = \bU_{j+1} \text{Diag}\left(\sqrt{\rho_1}, \cdots, \sqrt{\rho_m} \right) \bV_{j}^T$, where $\bU)j = \bV_{j+1} \bO_j$, where $\bO_j$ is an orthonormal matrix. Using this relationship, we can say the following:
\begin{align*}
\prod_{i=j}^{N-1} \bW_i(k, t) \prod_{i=j}^{N-1} \bW_i(k, t)^T &= \left[ \bW_\phi(k, t) \bW_\phi^T(k, t) \right]^{\frac{N-j}{N-1}}\\
\prod_{i=1}^{j} \bW_i(k, t)^T \prod_{i=1}^{j} \bW_i(k, t) &= \left[ \bW_\phi(k, t)^T \bW_\phi(k, t) \right]^{\frac{j}{N-1}}.    
\end{align*}
Now, we can use these expressions to obtain the desired result, by taking the differential equations governing the evolution of $\bW_i(k, t)$, for $i \in [1, \cdots, N-1]$, multiplying the $i$-th equation by $\prod_{i+1}^{N-1} \bW_j(k, t)$ from the left, and $\prod_{1}^{i-1} \bW_j(k, t)$ to the right, and then summing over $i$.
\begin{align*}
    \dot \bW_\phi(k, t) ~&= \sum_{i=1}^{N-1} \left(\prod_{j=i+1}^{N-1} \bW_j(k, t) \right) \dot \bW_i(k, t) \left( \prod_{j=1}^{i-1} \bW_j(k, t) \right) \\
    &= - \eta \sum_{i=1}^{N-1} \left( \prod_{i+1}^{N-1} \bW_j(k, t) \prod_{i+1}^{N} \bW_j(k, t)^T \right) \frac{d L_k (\bW_{N:1})}{d \bW_{N:1}} \left( \prod_{j=1}^{i-1} \bW_{j}(k, t)^T \prod_{j=1}^{i-1} \bW_j(k, t) \right)
\end{align*}
The above equation simplifies to the desired result by taking out $\bW_N(k, t)$ from the first summation, and using the identities above for each of the terms.
\end{proof}

Comparing the previous result with Theorem 1 in \citet{arora2018optimization}, we note that the resulting differential equation for weights holds true for arbitrary representations or features in the network provided that the layers from the input to the feature layer are balanced. A direct application of \citet{arora2018optimization} restricts the model class to only fully balanced networks for convergence analysis and the resulting solutions to the feature matrix will then only have one active singular value, leading to less-expressive neural network configurations.    

\textbf{Proof of Proposition~\ref{thm:sing_val_evolution}.}~~ Finally, we are ready to use Lemma~\ref{lemma:weight_control} to prove the relationship with evolution on singular values. This proof can be shown via a direct application of Theorem 3 in \citet{arora2019implicit}. Given that the feature matrix, $\bW_\phi(k, t)$ satisfies a very similar differential equation as the end-to-end matrix, with the exception that the gradient of the loss with respect to the end-to-end matrix is pre-multiplied by $\bW_N(k, t)^T$. As a result, we can directly invoke \citet{arora2019implicit}'s result and hence, we have $\forall r \in [1, \cdots, \dim(W)]$ that:
\begin{equation}
     \dot{\sigma}_{r}(k, t) = -N \cdot \left( \sigma_r^2(k, t) \right)^{1 - \frac{1}{N-1}} ~\cdot~ \left\langle \bW_{N}(k, t)^T \frac{d L_{N, k}(\bW_{K, t})}{d \bW}, \bu_r(k, t) \bv_r(k, t)^T \right\rangle.
\end{equation}
 
Further, as another consequence of the result describing the evolution of weight matrices, we can also obtain a result similar to \citet{arora2019implicit} that suggests that the goal of the gradient update on the singular vectors $\bU(k, t)$ and $\bV(k, t)$ of the features $\bW_\phi(k, t) = \bU(k, t) \bS(k, t) \bV(K, t)^T$, is to align these spaces with $\bW_{N}(k, t)^T \frac{d L_{N, k}(\bW_{K, t})}{d \bW}$. 

\subsection{Explaining Rank Decrease Based on Singular Value Evolution}
\label{app:rank_decrease_sl}

\begin{figure}[H]
    \vspace{-5pt}
    \centering
    \includegraphics[width=\linewidth]{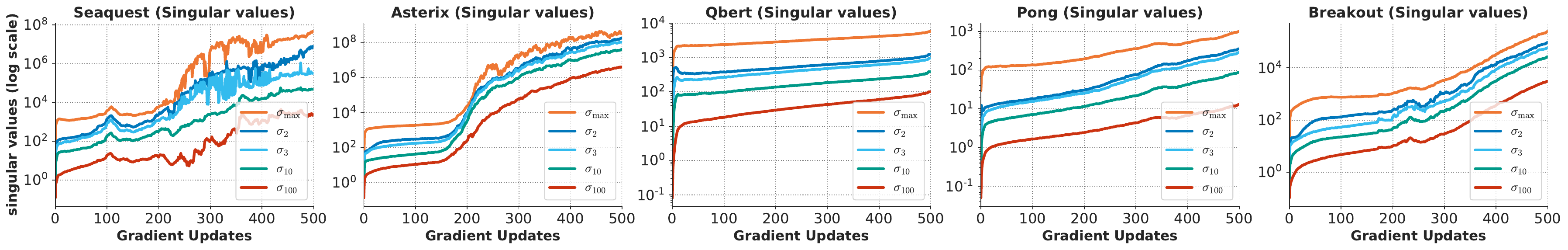}
    \caption{\textbf{Evolution of singular values of $\Phi$ on Atari}. The larger singular values of the feature matrix $\Phi$ grow at a disproportionately higher rate than other smaller singular values
    as described by \eqref{eqn:sing_val_diff_eqn} .}
    \label{fig:sing_val_all}
    \vspace{-10pt}
\end{figure}
In this section, we discuss why the evolution of singular values discussed in Equation~\ref{eqn:sing_val_diff_eqn} indicates a decrease in the rank of the feature matrix \emph{within} a fitting iteration $k$. To see this, let's consider the case when gradient descent has been run long enough (\ie the data-efficient RL case) for the singular vectors to stabilize, and consider the evolution of singular values post timestep $t \geq t_0$ in training. First of all, when the singular vectors stabilize, we obtain that $\bu_r(k, t)^T \bW_{N}(k, t)^T \frac{d L_{N, k}(\bW_{K, t})}{d \bW} \bv_r(k, t)$ is diagonal (extending result of Corollary 1 from \citet{arora2019implicit}). Thus, we can assume that
\begin{equation*}
    \bu_r^T(k, t) \bW_{N}(k, t)^T \frac{d L_{N, k}(\bW_{K, t})}{d \bW} \bv_r(k, t) = \mathbf{f}(k, t) \cdot \be_r \cdot \mathbf{d}_r,
\end{equation*}
where $\be_r$ is given by the unit basis vector for the standard orthonormal basis, $\mathbf{f}$ is a shorthand for the gradient norm of the loss function pre-multiplied by the transpose of the last layer weight matrix, and $\mathbf{d}_r$ denotes the singular values of the state-action input matrix, $[\states, \actions]$. In this case, we can re-write Equation~\ref{eqn:sing_val_diff_eqn} as:
\begin{equation}
\label{eqn:final_diff}
    \dot \sigma_r(k, t) = -N \left(\sigma^2_r(k, t)\right)^{1 - 1/N} \cdot \mathbf{f}(k, t) \cdot \be_r \cdot \mathbf{d}_r.
\end{equation}
Note again that unlike \citet{arora2019implicit}, the expression for $\mathbf{f}(k, t)$ is different from the gradient of the loss, since the weight matrix $\bW_N(k,t)$ is multiplied to the expression of the gradient in our case. By using the fact that the expression for $\mathbf{f}(k, t)$ is shared across all singular values, we can obtain differential equations that are equal across different $\sigma_{r}(k, t)$ and $\sigma_{r'}(k, t)$. Integrating, we can show that depending upon the ratio $\frac{\be_r \cdot \mathbf{d}_r}{\be_{r'} \cdot \mathbf{d}_{r'}}$, and the value of $N$, the singular values $\sigma_{r}(k, t)$ and $\sigma_{r'}(k, t)$ will take on different magnitude values, in particular, they will grow at disproportionate rates. By then appealing to the result from Proposition~\ref{thm:self_distillation} in Section~\ref{sec:self_distill} (kernel regression argument), we can say that disproportionately growing singular values would imply that the value of $\srank_\delta(\bW_\phi(k, t))$ decreases within each iteration $k$ as we will discuss next.

{\textbf{Interpretation of the abstract rank-regularized objective.}
We now discuss the form for the abstract rank-regularized objective in Equation~\ref{eqn:closed_form} that we utilize in Section~\ref{sec:grad_descent}. Intuitively, the justification for Equation~\ref{eqn:closed_form} is that larger singular values grow much faster than small singular values, which dictates how the ratio of singular values evolves through time, thereby reducing effective rank.}

{The discussion in the previous subsection shows that the effective rank, $\srank_\delta(\bW_\phi(k, t))$ decreases within each fitting iteration (\ie decreases over the variable $t$). Now, we will argue that for a given value of effective rank at $t = 0$ at a fitting iteration $k$ (denoted $\srank_\delta(\bW_\phi(k, 0))$, the effective rank after $T$ steps of gradient descent, $\srank_\delta(\bW_\phi(k, T))$, is constrained to be equal to some constant $\varepsilon_k < \srank_\delta(\bW_\phi(k, 0))$ due to the implicit regularizaton effect of gradient descent. To see this, note that if $\sigma_i(k, 0) > \sigma_j(k, 0)$, then $\sigma_i(k, t)$ increases at a much faster rate than $\sigma_j(k, t)$ with increasing $t$ (see Equation~\ref{eqn:final_diff} and \citet{arora2019implicit}, Equation 11)  and hence, the value of $h_{k, t}(i)$ is increasing over $t$:
\begin{equation}
\label{eqn:h_k_i_lower}
    h_{k, t}(i) = \sum_{j=1}^i \frac{\sigma_j(\bW_\phi(k, t))}{\sum_{l} \sigma_l(\bW_\phi(k, t))} = \frac{1}{1 + \frac{\sum_{j=i+1}^n \sigma_j(\bW_\phi(k, t))}{\sum_{j=1}^i \sigma_j(\bW_\phi(k, t))}} \geq \frac{1}{1 + \sum_{j=i+1}^{n} \frac{\sigma_{j}(\bW_\phi(k, t))}{\sigma_1(\bW_\phi(k, t))}},
\end{equation}
as the ratio between larger and smaller singular values increases. Using Equation~\ref{eqn:h_k_i_lower}, we note that the value of $h_{k, t}(i)$ approaches $1$ as the ratio $\sigma_j/\sigma_1$ decreases (in the limit, the ratio is zero). An increase in the value of $h_k(i)$ implies a decrease in $\srank_\delta$, which can be expressed as:
\begin{equation*}
    \srank_\delta(\bW_\phi(k, t)) = \min_{j}\left\{~ h_k(j) \geq 1 - \delta ~\right\}.
\end{equation*}
Thus, by running gradient descent for $T_0 = T(\varepsilon_k)$ (a function of $\varepsilon_k$) steps within a fitting iteration, $k$, the effective rank of the resulting $\bW_\phi(k, T_0)$ satisfies $\srank_\delta(\bW_\phi(k, T_0)) = \varepsilon_k $.}
{Now, since the solution $\bW_\phi(k, T_0)$ satisfies the constraint $\srank_\delta(\bW_\phi(k, T_0)) = \varepsilon_k < \srank_\delta(\bW_\phi(k, 0)) $, we express this solution using the solution of a penalized optimization problem (akin to penalty methods) that penalizes $\srank_\delta(\bW_\phi)$ for a suitable coefficient $\lambda_k$, which is a function of $\varepsilon_k$. We assume $\lambda_k > 0$ since when running gradient descent long enough (\ie for $T_0$ steps in the above discussion), we can reduce $\srank_\delta(\bW_\phi)$ from its initial value (note that $\varepsilon_k < \srank_\delta(\bW_\phi(k, 0))$), indicating a non-zero value of regularization.}

{
\subsection{Proof of Proposition~\ref{thm:compounding_main_paper}: Compounding Rank Drop in Section~\ref{sec:grad_descent}}
\label{app:rank_drop_compounds}
In this section, we illustrate that the rank drop effect compounds due to bootstrapping and prove Proposition~\ref{thm:compounding_main_paper} formally. Our goal is to demonstrate the compounding effect: a change in the rank at a given iteration gets propagated into the next iteration. We first illustrate this compounding effect and then discuss Theorem~\ref{thm:near_optimality} as a special case of this phenomenon.}

{\textbf{Assumptions.} We require two assumptions pertaining to closure of the function class and the change in effective rank due to reward and dynamics transformations, to be able for this analysis. We assume that the following hold for any fitting iteration $k$ of the algorithm.}
{\begin{assumption}[Closure]
\label{assumption:closed}
The chosen function class $\bW_1 \bW_2 \cdots \bW_N$ is closed under the Bellman evaluation backup for policy $\pi$. That is, if $\bQ_{k-1}^T = \bW_N(k-1) \bW_\phi(k - 1) [\mathcal{S}; \mathcal{A}]^T$, then there exists an assignment of weights to the deep linear net such that the corresponding target value $\bR + \gamma P^\pi \bQ_k$ can be written as $\left(\bR + \gamma P^\pi \bQ_{k-1} \right)^T =  \bW_N^P(k) \bW_\phi^P(k) [\mathcal{S}; \mathcal{A}]^T$. This assumption is commonly used in analyses of approximate dynamic programming with function approximation.
\end{assumption}}
\vspace{-15pt}
{\begin{assumption}[Change in $\srank_\delta$]
\label{assumption:c_k}
For every fitting iteration $k$, we assume that the the difference between $\srank_\delta(\bW^P_\phi(k))$, equal to the effective rank to feature matrix obtained when $\bR + \gamma P^\pi \bQ_{k-1}$ is expressed in the function class, and $\srank_\delta(\bW_\phi(k-1))$ by an iteration specific threshold $c_k$, \ie
\begin{equation*}
   \srank_\delta(\bW^P_\phi(k)) \leq \srank_\delta(\bW_\phi(k-1)) + c_k.
\end{equation*}
\end{assumption}
We will characterize the properties of $c_k$ in special cases by showing how Theorem~\ref{thm:near_optimality} is a special case of our next argument, where we can analyze the trend in $c_k$.}

{To first see how a change in rank indicated by $c_k$ in Assumption~\ref{assumption:c_k} propagates through bootstrapping, note that we can use Assumption~\ref{assumption:c_k} and \ref{assumption:closed} to observe the following relation:
\begin{align}
    \srank_\delta(\bW_\phi(k)) ~&\leq \srank_\delta(\bW^P_\phi(k)) - \frac{||\bQ_k - \by_k||}{\lambda_k}~~~ \text{(since we can copy over the weights)} \nonumber\\
    &\leq \srank_\delta(\bW_\phi(k-1)) + c_k - \frac{||\bQ_k - \by_k||}{\lambda_k}~~~~~~~ \text{(using Assumption~\ref{assumption:c_k})} \nonumber\\
    & \leq \srank_\delta (\bW_\phi(k-2)) + c_{k-1} - \frac{||\bQ_{k-1} - \by_{k-1}||}{\lambda_{k-1}} + c_k - \frac{||\bQ_k - \by_k||}{\lambda_k} \nonumber\\
    \srank_\delta(\bW_\phi(k)) & \leq \boxed{\srank_\delta (\bW_\phi(0)) + \sum_{j=1}^k c_j - \sum_{j=1}^k \frac{||\bQ_j - \by_j||}{\lambda_j}}
    \label{eqn:final_compounding}
\end{align}
\textbf{Derivation of the steps:} The first inequality holds via the weight-copying argument: since the weights for the target value $\bR + \gamma P^\pi \bQ_{k-1}$ can be directly copied to obtain a zero TD error feasible point with an equal $\srank$, we will obtain a better solution at the optimum with a smaller $\srank$ in Equation~\ref{eqn:closed_form}. We then use Assumption~\ref{assumption:c_k} to relate $\srank_\delta(\bW^P_\phi(k))$ to the effective rank of the previous Q-function, $\srank_\delta(\bW_\phi(k-1))$, which gives us a recursive inequality. Finally, the last two steps follow from a repeated application of the recursive inequality in the second step. This proves the result shown in Proposition~\ref{thm:compounding_main_paper}.}

{Now if the decrease in effective rank due to accumulating TD errors is larger than the possible cumulative increase in rank $c_k$, then we observe a rank drop. Also note that $c_k$ need not be positive, $c_k$ can be negative, in which case both terms contribute to a drop in rank. But in the most general case, $c_k$, can also be positive for some iterations. This equation indicates how a change in in rank in one iteration gets compounded on the next iteration due to bootstrapping.}

\begin{wrapfigure}{r}{0.36\linewidth}
\centering
\vspace{-17pt}
\includegraphics[width=0.85\linewidth]{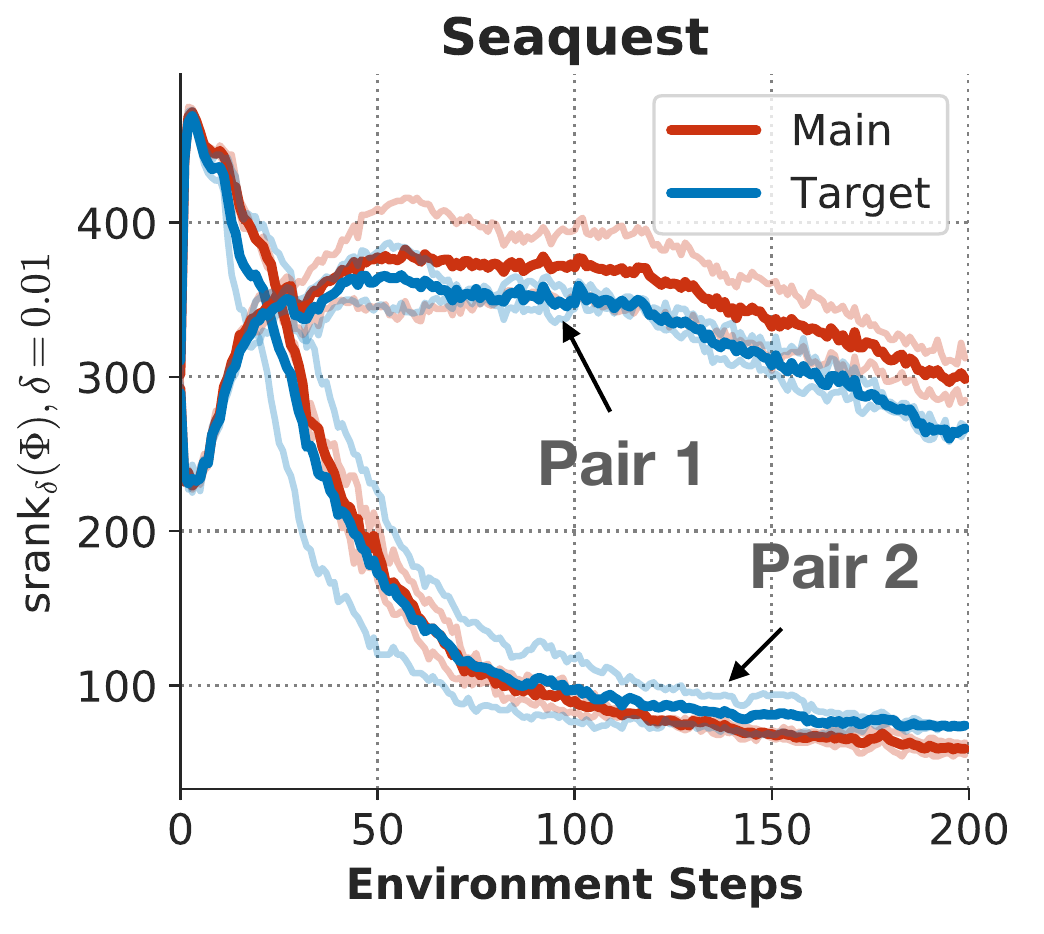}
\vspace{-10pt}
\caption{{Two pairs of runs depicting the trends of target feature (approximately equal to $\srank_\delta(\bW^P_\phi(k))$) (denoted as ``Target'') and $\srank_\delta(\bW_\phi(k-1))$ (denoted as ``Main'') on Seaquest with in the data-efficient online RL setting. Note that the contribution of dynamics transformation in the value of $c_k$ is generally negative for the Pair 1 and is small/positive for Pair 2.}}
\label{fig:target_rank}
\vspace{-25pt}
\end{wrapfigure}
\textbf{What can we comment about the general value of $c_k$?} In the general case, $c_k$ can be positive, for example when the addition of the reward and dynamics projection increases $\srank$. Our current set of assumptions are insufficient to guarantee something about $c_k$ in the general case and more assumptions are needed. For instance, as shown in Theorem~\ref{thm:near_optimality}, under Assumption~\ref{assumption:closed}, we obtain that if we are in a small neighborhood around the fixed point, we will find that $c_k \leq 0$.

\textbf{Empirical verification:} While our analysis does not dictate the value of $c_k$, we performed an experiment on Seaquest to approximately visualize $\srank_\delta(\bW^P_\phi(k))$ by computing the effective rank for the feature matrix $\Phi$ obtained when $P^\pi \bQ_{k-1}$ is expressed in the Q-function class. We do not include the reward function in this experiment, but we believe that a ``simple'' reward function, \ie it takes only three possible values in Atari, and hence it may not contribute too much to $\srank_\delta(\bW^P_\phi(k))$.
As shown in Figure~\ref{fig:target_rank}, the value of the target value feature effective rank decreases, indicating that the coefficient $c_k$ is expected to be bounded in practice.

\subsection{Proof for Theorem~\ref{thm:near_optimality}: Rank Collapse Near a Fixed Point}
\label{app:proof_subopt}
Now, we will prove Theorem~\ref{thm:near_optimality} by showing that when the current Q-function iterate $\bQ_k$ is close to a fixed point but have not converged yet, \ie, when $||\bQ_k - (\bR + \gamma P^\pi \bQ_k)|| \leq \mathbf{\varepsilon}$, then rank-decrease happens. {We will prove this as a special case of the discussion in Section~\ref{app:rank_drop_compounds}, by showing that for any iteration $k$, when the current Q-function iterate $\bQ_k$ is close to a fixed point, the value of $c_k$ can be made close to 0. While the compounding argument (Equation~\ref{eqn:final_compounding}) may not directly hold here since we cannot consistently comment on whether making an update on the Q-function pushes it out of the $\varepsilon$-ball near the fixed point, so we instead prove a ``one-step'' result here.}  

To prove Theorem~\ref{thm:near_optimality}, we evaluate the (infinitesimal) change in the singular values of the features of $\bW_\phi(k, t)$ as a result of an addition in the value of $\varepsilon$. In this case, the change (or differential) in the singular value matrix, $\bS(k, T)$, ($\bW_\phi(k, T) = \bU(k, T) \bS(k, T) \bV(k, T)^T$) is given by:
\begin{equation}
    d\bS(k, T) = \bI_{d} \circ \Big[ \bU(k, T)^T \cdot d \bW_\phi(k, t) \cdot \bV(k, T) \Big], 
\end{equation}
using results on computing the derivative of the singular value decomposition~\citep{townsend2016differentiating}. 

\textbf{Proof strategy:} Our high-level strategy in this proof is to design a form for $\varepsilon$ such that the value of effective rank for feature matrix obtained when the resulting target value $\by_{k} = \bQ_{k-1} + \varepsilon$ is expressed in the deep linear network function class, let's say with feature matrix, $\bW_\phi'(k, T)$ and the last layer weights, $\bW_N(k, T) = \bW_N(k-1, T)$, then the value of $\srank_\delta(\bW'_\phi(k, T))$ can be larger than $\srank_\delta(\bW_\phi(k-1, T))$ by only a bounded limited amount $\alpha$ that depends on $\varepsilon$. More formally, we desire a result of the form 
\begin{align}
    \srank_\delta(\bW'_\phi(k, T)) &\leq \srank_\delta(\bW_\phi(k-1, T)) + \alpha, \nonumber \\ ~~&\text{where}~~ ||\epsilon|| \ll ||\bQ_{k-1}||, ~~\text{and}~~ \by_{k}(\bs, \ba) = \bW_N(k-1, T) \bW'_\phi(k, T) [\bs; \ba]. \label{eqn:desired_result}
\end{align}
Once we obtain such a parameterization of $\by_k$ in the linear network function class, using the argument in Section~\ref{sec:grad_descent} about self-training, we can say that just copying over the weights $\bW_\phi'(k, T)$ is sufficient to obtain a bounded increase in $\srank_\delta(\bW_\phi(k, T))$ at the cost of incurring zero TD error (since the targets can be exactly fit). As a result, the optimal solution found by Equation~\ref{eqn:closed_form} will attain lower $\srank_\delta(W_\phi(k, T))$ value if the TD error is non-zero. Specifically,
\begin{equation}
    \srank_\delta(\bW'_\phi(k, T)) + \frac{||Q_{k} - \by_{k}||}{\lambda_k} \le \srank_\delta(\bW_\phi(k-1, T)) + \alpha
\end{equation}

As a result we need to examine the factors that affect $\srank_\delta(W_\phi(k, T))$ -- \textbf{(1)} an increase in $\srank_\delta(\bW_\phi(k, T))$ due to an addition of $\alpha$ to the rank, and \textbf{(2)} a decrease in $\srank_\delta(\bW_\phi(k, T))$ due to the implicit behavior of gradient descent.
 
\textbf{Proof:} In order to obtain the desired result (Equation~\ref{eqn:desired_result}), we can express $\varepsilon(\bs, \ba)$ as a function of the last layer weight matrix of the \textit{current} Q-function, $\bW_N(k-1, T)$ and the state-action inputs, $[\bs; \ba]$. Formally, we assume that $\varepsilon(\bs, \ba) = \bW_N(k-1, T) {\zeta [\bs; \ba]}$, where $\zeta$ is a matrix {of the same size as $\bW_\phi$} such that $||\zeta||_\infty \ll \tau = \mathcal{O}(||\bW_\phi(k, T)||_\infty)$, \ie $\zeta$ has entries with ignorable magnitude compared the actual feature matrix, $\bW_\phi(k, T)$. {This can be computed out of the closure assumption (Assumption~\ref{assumption:closed}) from Section~\ref{app:rank_drop_compounds}.}

Using this form of $\varepsilon(\bs, \ba)$, we note that the targets $\by_k$ used for the backup can be written as
\begin{align*}
    \by_{k} = \bQ_{k-1} + \varepsilon ~&= \bW_N(k-1, T) \bW_\phi(-1, T) [\states; \actions] + \bW_N(k-1, T) \zeta [\states; \actions]\\
    &= \bW_N(k-1, T) \cdot \underbrace{\left( \bW_\phi(k-1, T) + \zeta \right)}_{\bW'_\phi(k, T)} [\states; \actions]
\end{align*}
Using the equation for sensitivity of singular values due to a change in matrix entries, we obtain the maximum change in the singular values of the resulting ``effective'' feature matrix of the targets $\by_k$ in the linear function class, denoted as $\bW'_\phi(k, T)$, is bounded: 
\begin{equation}
    d \bS'(k, T) = \bI_d \circ \Big[ \bU(k, T) \cdot d \bW_\phi(k, T) \cdot \bV(k, T)^T \Big] ~~ \implies \left\vert\left\vert d \bS'(k, T) \right\vert\right\vert_\infty  \leq \zeta.
\end{equation}
Now, let's use this inequality to find out the maximum change in the function, $h_k(i)$ used to compute $\srank_\delta(\bW_\phi)$: $h_k(i) = \frac{\sum_{j=1}^i \sigma_j(\bW_\phi(k, T))}{\sum_{j=1}^N \sigma_j(\bW_\phi(k, T))}$  ~~~\mbox{$(\srank_\delta(\bW_\phi(k, T)) = \min\left\{j~: h_k(j) \geq 1 - \delta \right\})$}.
\begin{align*}
    |d h_k(i)| ~&= \left\vert\frac{\sum_{j=1}^i d \sigma_j(\bW_\phi)}{\sum_{j=1}^N \sigma_j(\bW_\phi)} - \left( \frac{\sum_{j=1}^i \sigma_j(\bW_\phi)}{\sum_{j=1}^N \sigma_j(\bW_\phi)} \right) \left( \frac{\sum_{j=1}^N  d \sigma_j(\bW_\phi)}{\sum_{j=1}^N \sigma_j(\bW_\phi)} \right) \right\vert\\
     ~&\leq \frac{i \zeta}{\sum_{j=1}^N \sigma_j(\bW_\phi)} + \underbrace{\left( \frac{\sum_{j=1}^i \sigma_j(\bW_\phi)}{\sum_{j=1}^N \sigma_j(\bW_\phi)} \right)}_{\leq 1} \frac{N \zeta}{\sum_{j=1}^N \sigma_j(\bW_\phi)} \\
     &\leq \frac{(i + N) \zeta}{\sum_{j=1}^N \sigma_j(\bW_\phi)}.
\end{align*}
The above equation implies that the maximal change in the effective rank of the feature matrix generated by the targets, $\by_k$ (denoted as $\bW_\phi'(k, T)$) and the effective rank of the features of the current Q-function $\bQ_k$ (denoted as $\bW_\phi(k, T)$) are given by:
\begin{align*}
    \srank_\delta(\bW'_\phi(k, T)) - \srank_\delta(\bW_\phi(k, T)) ~&\leq \alpha 
\end{align*}
where $\alpha$ can be formally written by the cardinality of the set:
\begin{align}
    \alpha = \left\vert \left\{ j:~~ h_k(j) - \frac{(j+N) \zeta}{\sum_{j=1}^N \sigma_j(\bW_\phi)} \geq (1 - \delta), ~~ h_k(j) \leq (1 - \delta) \right\} \right\vert.
\end{align}
Note that by choosing $\varepsilon(\bs, \ba)$ and thus $\zeta$ to be small enough, we can obtain $\bW'_\phi(k, T)$ such that $\alpha = 0$. Now, the self-training argument discussed above applies and gradient descent in the next iteration will give us solutions that reduce rank.

Assuming $r > 1$ and $\srank_\delta(\bW_\phi(k, T)) = r$, we know that $h_k(r-1) < 1 - \delta$, while $h_k(r) \ge 1 - \delta$. Thus, $\srank_\delta(\bW_\phi^\prime(k, T))$ to be equal to $r$, it is sufficient to show that $h_k(r) - |dh_k(r)| \ge 1 - \delta$ since both $h_k(i)$ and $dh_k(i)$ are increasing for $i=0, 1, \cdots, r$. Thus, $\srank_\delta(\bW_\phi^\prime(k, T)) = r \, \forall \zeta$, whenever
\begin{align*}
    \zeta &\le  \frac{\sum_{j=1}^{N} \sigma_j(\bW_\phi)}{r + N} \Big(h_k(r) - 1 - \delta \Big) \\
   & = \frac{\sum_{j=1}^{r} \sigma_j(\bW_\phi) -  (1 - \delta)\sum_{j=1}^{N} \sigma_j(\bW_\phi)}{r + N}
\end{align*}

This implies that $\boxed{\varepsilon \le ||W_{N}(k,T)||_\infty\frac{\sum_{j=1}^{r} \sigma_j(\bW_\phi) -  (1 - \delta)\sum_{j=1}^{N} \sigma_j(\bW_\phi)}{r + N}}$.

\textbf{Proof summary:} We have thus shown that there exists a neighborhood around the optimal fixed point of the Bellman equation, parameterized by $\varepsilon(\bs, \ba)$ where bootstrapping behaves like self-training. In this case, it is possible to reduce $\srank$ while the TD error is non-zero. And of course, this would give rise to a rank reduction close to the optimal solution. 

\end{document}